\newcommand\numberthis{\addtocounter{equation}{1}\tag{\theequation}}
\newcommand{\cmark}{\ding{51}}%
\newcommand{\xmark}{\ding{55}}%
\newcommand{\zero}{\mathbf{0}}
\newcommand{\diff}{\,\mathrm{d}}
\newcommand{\cev}[1]{\reflectbox{\ensuremath{\vec{\reflectbox{\ensuremath{#1}}}}}}
\newcommand{\inner}[2]{\left\langle #1, #2 \right\rangle}
\DeclareRobustCommand\onedot{\futurelet\@let@token\@onedot}
\def\@onedot{\ifx\@let@token.\else.\null\fi\xspace}
\def\ie{\emph{i.e}\onedot}
\def\wrt{w.r.t\onedot} 
\newcommand{\Dc}{\mathcal{D}}
\newcommand{\Gc}{\mathcal{G}}
\newcommand{\Lc}{\mathcal{L}}
\newcommand{\Nc}{\mathcal{N}}
\newcommand{\Rc}{\mathcal{R}}
\newcommand{\Xc}{\mathcal{X}}
\newcommand{\ds}{\mathsf{d}}
\newcommand{\es}{\mathsf{e}}
\newcommand{\fs}{\mathsf{f}}
\newcommand{\hs}{\mathsf{h}}
\newcommand{\ks}{\mathsf{k}}
\newcommand{\rs}{\mathsf{r}}
\newcommand{\ws}{\mathsf{w}}
\newcommand{\xs}{\mathsf{x}}
\newcommand{\Eb}{\mathbb{E}}
\newcommand{\Rb}{\mathbb{R}}
\newcommand{\av}{\mathbf{a}}
\newcommand{\bv}{\mathbf{b}}
\newcommand{\ev}{\mathbf{e}}
\newcommand{\fv}{\mathbf{f}}
\newcommand{\gv}{\mathbf{g}}
\newcommand{\hv}{\mathbf{h}}
\newcommand{\mv}{\mathbf{m}}
\newcommand{\rv}{\mathbf{r}}
\newcommand{\sv}{\mathbf{s}}
\newcommand{\uv}{\mathbf{u}}
\newcommand{\xv}{\mathbf{x}}
\newcommand{\yv}{\mathbf{y}}
\newcommand{\Av}{\mathbf{A}}
\newcommand{\Gv}{\mathbf{G}}
\newcommand{\Iv}{\mathbf{I}}
\newcommand{\epsilonv   }{\boldsymbol \epsilon   }
\newcommand{\thetav     }{\boldsymbol \theta     }
\newcommand{\kappav     }{\boldsymbol \kappa     }
\newcommand{\muv        }{\boldsymbol \mu        }
\newcommand{\BlackBox}{\rule{1.5ex}{1.5ex}}  
\def\QED{~\rule[-1pt]{5pt}{5pt}\par\medskip}
\newenvironment{proof}{\par\noindent{\em Proof:\ }}{\hfill\BlackBox\\}
\newtheorem{example}{Example}
\newtheorem{lemma}{Lemma}
\newtheorem{proposition}{Proposition}
\newtheorem{remark}{Remark}
\Crefname{appendix}{Appx}{Appx}
\begin{document}

\twocolumn[
\aistatstitle{Diffusion Models under Group Transformations}
\aistatsauthor{ Haoye Lu$^*$ \And Spencer Szabados$^*$ \And Yaoliang Yu}
\aistatsaddress{University of Waterloo, Vector Institute } ]
\def\thefootnote{*}\footnotetext{Equal contribution. Haoye proposed the theoretical results, while both Spencer and Haoye dedicated substantial time to model implementation and evaluation. Yaoliang supervised the project and provided valuable guidance.}\def\thefootnote{\arabic{footnote}}
\begin{abstract}
In recent years, diffusion models have become the leading approach for distribution learning. This paper focuses on structure-preserving diffusion models (SPDM), a specific subset of diffusion processes tailored for distributions with inherent structures, such as group symmetries. We complement existing sufficient conditions for constructing SPDMs by proving complementary necessary ones. Additionally, we propose a new framework that considers the geometric structures affecting the diffusion process. Leveraging this framework, we design a structure-preserving bridge model that maintains alignment between the model’s endpoint couplings. Empirical evaluations on equivariant diffusion models demonstrate their effectiveness in learning symmetric distributions and modeling transitions between them. Experiments on real-world medical images confirm that our models preserve equivariance while maintaining high sample quality. We also showcase the practical utility of our framework by implementing an equivariant denoising diffusion bridge model, which achieves reliable equivariant image noise reduction and style transfer, irrespective of prior knowledge of image orientation.
\end{abstract}

\section{INTRODUCTION}
\label{sec:intro}

Diffusion models \citep{SongE2019, HoJA2020, SongME2021, SongDKKEP2021, RombachBLEO2022, KarrasAAL2022, SongDCS2023} have become the leading method in a plethora of generative modelling tasks including image generation \citep{SongE2019, HoJA2020, SongME2021}, audio synthesis \citep{KongPHZC2021}, image segmentation \citep{BaranchukVRKB2022, WollebSBFPP2022}, image editing and style transfer \citep{MengHSSWZE2022, ZhouLKE2024}.

In many generation tasks, the data involved often exhibit inherent ``structures'' that are invariant -- or the mappings between them being equivariant -- under a set of transformations. For example, it is commonly assumed that the distribution of photographic images is invariant under horizontal flipping. In tasks such as image denoising or inpainting, where the orientation of an image is not provided, it is natural to require the denoised or inpainted image to retain the same orientation as the input. Namely, the denoising or inpainting processes should exhibit equivariance under rotations and flipping (see \cref{fig:image-lysto-denosing}).

In critical applications, such as medical imaging analysis, these properties must not only be desired but also theoretically guaranteed to ensure model output consistency and prevent the introduction of biases or errors. Diagnostic images, such as X-rays or biopsy assays, are often captured from different orientations \citep{LafargeBPDV2021, ShaoLWJZ2023}, which has led many diagnostic methods to be designed with invariance to image transformations. These methods are used in tasks such as distinguishing between benign and malignant breast lesions in digitized mammograms \citep{PohlmanPOC1996, RangayyanEDA1997}, analyzing blood cells \citep{LinXM1998}, and performing digital pathology segmentation \citep{VeelingLWCW2018}. Since these methods often require high-resolution images for precise segmentation \citep{PohlmanPOC1996}, denoising techniques are applied to improve image quality. For these techniques to function effectively, they must exhibit perfect equivariance—without it, the overall method’s invariance may be compromised.

\begin{figure}
    \includegraphics[width=0.95\columnwidth]{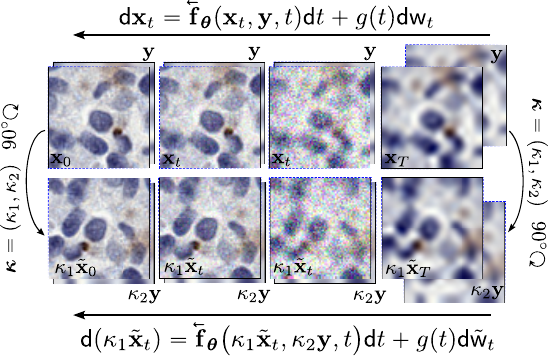}
    \caption{Equivariant inference trajectory: if $(\xv_T, \yv)$ undergos a $90^\circ$ rotation (via operator $\kappav$), the denoised output precisely mirrors this rotation.}\label{fig:image-lysto-denosing}
\end{figure}

This paper investigates structure-preserving diffusion models (SPDM), a family of diffusion processes that preserve the group-invariant properties of the distributions. Our framework extends previous research \citep{YimTDMDBJ2023,XuYSSET2022,HoogeboomSVVW2022, QiangSXGGZML2023, MartinkusLLVHRWCBGL2023} on drift equivariance by incorporating additional factors influencing diffusion trajectories, not just noisy samples; allowing us to characterize structure preserving properties beyond those developed for classical diffusion processes, see \cref{sec:group_inv_sde}. \cref{fig:image-lysto-denosing} illustrates an example of this, for denoising diffusion bridge models (DDBM)~\citep{ZhouLKE2024, BortoliLCTN2023}, where variables $\yv$ correspond to starting point $\xv_T$ of the backward sampling process. The extension allows us to build a bridge model that captures the equivariant coupling precisely between $\xv_T$ and $\xv_0$ where a rotation on $\xv_T$ results in the same rotation on $\xv_0$.

Building upon this generalized framework, we establish an equivalence relationship between the group equivariance of the drifts and the structural preservation of the distributions of induced flows. This development complements existing discussions around structural diffusion models, which have predominantly focused on identifying sufficient conditions. We exemplify the utility of our framework by presenting two equivariant score-based models that achieve theoretically guaranteed capabilities for invariant data generation and equivariant data editing: (1) SPDM+WT, using a weight-tied implementation to reduce training and sampling costs at the expense of image quality, and (2) SPDM+FA, employing Framing-Averaging \citep{PunyASMGHL2022} to combine outputs from conventionally trained diffusion models, attaining the same theoretical guarantees while achieving a sample quality comparable, or superior, to standard diffusion models. 

Unlike other equivariant implementations of diffusion models that incorporate FA during training \citep{MartinkusLLVHRWCBGL2023, DuvalSHMMBR2023}, our method applies FA only during inference, greatly reducing training costs. Empirical studies on both artificial and medical image datasets support our claims. Additionally, we demonstrate the effectiveness of our method for equivariant denoising and style transfer, as shown in~\cref{fig:image-lysto-denosing}.

Our code is available at \url{https://github.com/watml/SPDM}.

\section{RELATED WORK}
\label{sec:related_work}

The problem of conditioning neural networks to respect group-invariant (or equivariant) distributions \citep{ShaweTaylor1993} a longstanding issue within the domains of physical modelling, computer vision, and, generative modelling. This is underscored by the widespread utilization of diverse forms of data augmentation, that seek to increase model robustness to perturbation. However, achieving true group invariance (or equivariance) through data augmentation alone is impractical, as it would require an infinite number of samples to guarantee invariance. Consequently, models conditioned solely by data augmentation often fail to fully capture the desired properties \citep{ElesedyZ2021, GaoDLL2022}. As our primary topic is diffusion models, we will devote this section to these works.

The study of group invariance within diffusion models \citep{SongE2019, SongME2021,HoJA2020,KarrasAAL2022,KimLLMTUHME2023,YimTDMDBJ2023}, and diffusion bridge models \citep{BortoliTHD2021, LiuVHTNA2023, ZhouLKE2024, LeeLBK2024}, has focused primarily on applications in molecule generation (e.g., molecular conformation, and protein backbone generation) \citep{ShiLXT2021,XuYSSET2022,HoogeboomSVVW2022,YimTDMDBJ2023,JingCCBJ2022,CorsoSJBJ2023,MartinkusLLVHRWCBGL2023}. Most of these approaches can be broadly described as conditioning the diffusion process on a graph prior that represents the unconformed molecule and employing a transformation (applied to the inner molecular atomic distances - such as the relative torsion angle coordinates \citep{JingCCBJ2022}) that produces a group-invariant form (or one that is more robust to the selected group transformations). This thereby, results in a representation that is sufficient to ensure the diffusion process is equivariant. More generally,  \citet{BortoliMH2022, Mathieu2023}, and \citet{YimTDMDBJ2023}, investigate distribution invariance over more general geometries (e.g., Riemannian manifolds generated by Lie groups). The study of distribution invariance comes about naturally as a result of finding a limiting probability distribution over the geometry in these settings, a requirement for the diffusion process to be well-defined. These methods, which are most similar to ours, operate by designing Gaussian kernels, those that define the diffusion process, that are invariant to select linear isometry groups.

In this work we extend existing theoretical results, developed within the forgoing works, by providing a complete characterization of the necessary and sufficient conditions of the drift and diffusion terms in order to ensure a diffusion process is invariant under linear isometry groups. The diffusion processes we consider encapsulate both regular diffusion models and diffusion bridge models with and without conditioning, treating the conditioning variable as a tensor admitting its own group operations. Our work is expressly more general than existing results which focus primarily on providing sufficient conditions with a single conditioning variable.

\section{PRELIMINARY}
\label{sec:preliminary}


\subsection{Diffusion Processes and Diffusion Bridges}
\label{sec:prel_diff_bridge}
Let $\{\xv_t\}_{t=0}^T$ denote a set of time-indexed random variables in ${\Rb^d}$ such that $\xv_t \sim p_t$, where $p_t$  are the marginal distributions of an underlying diffusion process defined by a stochastic differential equation (SDE):
\begin{equation}
	\diff \xv_t = \uv(\xv_t, t)\diff t + g(t) \diff \ws_t, ~~ \xv_0\sim p_0(\xv_0). \label{eq:fwd_diff_process}
\end{equation}
Here, $\uv: \Rb^d \times [0,T] \rightarrow \Rb^d$ is the \textit{drift}, $g: [0,T] \rightarrow \Rb$ is a scalar \textit{diffusion coefficient}, and $\ws_t \in \Rb^d$ denotes a Wiener process. In generative diffusion models, we take $p_0 = p_{\rm data}$ and $p_T = p_{\rm prior}$; thereby, the diffusion process constructs a path $p_t$ from $p_{\rm data}$ to $p_{\rm prior}$.  

In practice, $\uv$ and $g$ are chosen to accelerate the sampling of $\xv_t$ in \cref{eq:fwd_diff_process}. Table~\ref{tb:fg_selection} in  \cref{appx:diffusion_coefficients} lists two popular choices of $\uv$ and $g$, corresponding to the variance preserving~(VP, \citealt{HoJA2020, SongME2021}) and variance exploding~(VE, \citealt{SongDKKEP2021}) SDEs.

For these selections, $p(\xv_t \vert \xv_0)$ is an easy-to-sample spherical Gaussian, and the sampling of $\xv_t$ is carried out by first picking $\xv_0 \sim p_0$ and then sampling from $p(\xv_t \vert \xv_0)$. 

For any SDE, there is a corresponding reverse SDE with the same marginal distribution $p_t$ for all $t\in [0,T]$ \citep{anderson1982}. In fact, the forward SDE in \cref{eq:fwd_diff_process} has a family of reverse-time SDEs \citep{ZhangC2023}:
\begin{equation}
\begin{aligned}
    \diff \xv_t =
    & \Big[\uv(\xv_t, t) - \tfrac{1+\lambda^2}{2} g^2(t)  \nabla_{\xv_t} \log p_t(\xv_t)\Big] \diff t\\ 
    &+ \lambda g(t) \diff \ws_t, ~~ \text{for all}~\lambda \geq 0. 
\end{aligned}
\label{eq:general_reverse_SDE_family}
\end{equation}
Setting $\lambda = 1$ in \cref{eq:general_reverse_SDE_family} simplifies the equation to the original reverse SDE as derived in \citep{anderson1982}. For $\lambda = 0$, the process transforms into a deterministic ODE process, known as the probability flow ODE (PF-ODE, \citealt{SongDKKEP2021}). The only unknown term is the score $\nabla_\xv \log p_t(\xv)$, which is  estimated using a neural network $\sv_{\thetav}(\xv, t)$ trained through score matching \citep{SongDKKEP2021} or an equivalent denoising loss \citep{KarrasAAL2022}. Subsequently, after training, one can sample $\xv_0 \sim p_{\rm data}(\xv_0)$ by solving the SDE (or ODE if $\lambda = 0$) given in \cref{eq:general_reverse_SDE_family} starting from $\xv_T \sim p_T(\xv_T)$. 

Instead of building path $p_t$ from a data distribution, $q_{\rm data}(\xv, \yv)$, to a known prior, diffusion bridges~(DBs) create path $q_t$ such that $q_0(\xv) = q_{\rm data}(\xv)$ and $q_T(\yv) =  q_{\rm data}(\yv)$. For $(\xv, \yv) \sim  q_{\rm data}(\xv, \yv)$, DBs leverage the distribution $p_t$ induced by \cref{eq:fwd_diff_process} with $\xv_0 = \xv$ and $\xv_T = \yv$ to sample $\xv_t$. In this way, 
\(
    q_t(\xv_t) = {\Eb}_{(\xv, \yv) \sim  q_{\rm data}(\xv, \yv)}\big[ p_t(\xv_t \vert \xv_0 = \xv, \xv_T = \yv) \big]
\) and admits the forward SDE:
\begin{equation}
\label{eq:ddbm_fwd}
    \diff \xv_t = [\uv(\xv_t, t)+ g(t)^2 \hv(\xv,\xv_T, t)] \diff t + g(t) \diff \ws_t,
\end{equation}
given $\xv_T = \yv$ and $\hv(\xv,\xv_T, t) = \nabla_{\xv} \log p(\xv_T \vert \xv)$, the gradient of the log transition kernel induced by \cref{eq:fwd_diff_process}. For $\uv$ and $g$ in \cref{tb:fg_selection}, $p_t(\xv_t \vert \xv_0, \xv_T)$ can be sampled efficiently. Thus, $\xv_t \sim q_t(\xv_t)$ can be obtained by first sampling $(\xv, \yv)$ and then $\xv_t$. 

DBs can be broadly categorized into those that condition on $\xv_T$ and those that do not. \citet{BortoliLCTN2023} show that conditioning is necessary for effectively learning the coupling encoded in $q_{\rm data}$. This is crucial for many practical tasks, such as image denoising, where the denoised image should match the blurry input. For the conditioned DBs \citep{ZhouLKE2024}, the family of the backward SDE, conditioned on $\xv_T = \yv$, is
\begin{equation}
\begin{aligned}
    \diff \xv_t =
    & [\uv(\xv_t, t) + g(t)^2 \hv(\xv,\xv_T, t)\\ 
    &- \tfrac{1+\tau^2}{2}g(t)^2 \sv(\xv_t \vert \xv_T, t)]\diff t
    + \tau g(t) \diff \ws_t,
\end{aligned}
\label{eq:ddbm_bwd}
\end{equation}
for all $\tau \geq 0$, where $\sv(\xv_t \vert \xv_T, t) = \nabla_{\xv_t}\log q_t(\xv_t \vert \xv_T)$ is the score of the DB's distribution $q_t$ given that $\xv_T = \yv$. 

Notably, besides the noisy sample $\xv_t$, the drift terms in \eqref{eq:ddbm_fwd} and \eqref{eq:ddbm_bwd} also depend on $\yv = \xv_T$. This dependency leads us to consider a more general diffusion process:
\begin{align}
    \diff \xv_t = \fv(\xv_t, \yv, t)\diff t + g(t) \diff \ws_t ,\quad \xv_0 \sim p(\xv_0 \vert \yv)\label{eq:fwd_diff_process_general},
\end{align}
where $\yv$ represents other factors affecting the process and does not need to have the same shape as $\xv_t$. When $\yv$ consists of zero entries, \eqref{eq:fwd_diff_process_general} simplifies to the standard diffusion process in \eqref{eq:fwd_diff_process}. Currently, all structure-preserving diffusion frameworks are based on \eqref{eq:fwd_diff_process}.
\subsection{Group Invariance and Equivariance}
\label{sec:group-invariance}
A set of functions $\Gc  = \{\kappa: \Xc \rightarrow \Xc\}$ equipped with an associative binary operation $\circ:\Gc\times\Gc\to \Gc$, composition in this case, is called a \textit{group} if (1) for any $\kappa_1, \kappa_2 \in \Gc$, $\kappa_1 \circ \kappa_2\in \Gc$; (2) $\Gc$ has an identity operator $\ev$ with $\ev\circ\kappa=\kappa\circ\ev=\kappa$; and (3) for any $\kappa \in \Gc$, there exists an inverse operator $\kappa^{-1}$ such that $\kappa^{-1} \circ \kappa = \kappa \circ \kappa^{-1} = \ev$. For instance, let $\fs_x$ be the operator that flips images horizontally, then $\Gc = \{\fs_x, \ev\}$ is a group as $\fs_x^{-1} = \fs_x$. 

In this paper, we limit our focus to $\kappa$ that do not alter Wiener processes. From now on we assume  $\Gc$ consists of isometries $\kappa$ that fix zero. That is, $\kappa$ is a distance-preserving transformation such that $\|\kappa\xv\|_2 = \|\xv\|_2$ and $\|\kappa\xv - \kappa \yv\|_2 = \|\xv - \yv\|$ for all $\xv,\yv \in \Rb^d$; see \cref{appx:isometry} for details. Consequently, any $\kappa \in \Gc$ can be expressed as an orthogonal matrix $A_\kappa \in \Rb^{d \times d}$, where $\kappa \xv = A_\kappa \xv$.

For a distribution with density $p$ defined on $\Rb^d$, we say $p$ is \emph{$\Gc$-invariant} if $p(\xv) = p(\kappa \xv)$ for all $\kappa \in \Gc$. Likewise, for a conditional distribution $p(\xv\vert\yv)$ with $\xv\in \Rb^m$ and $\yv\in \Rb^n$, we say it is \emph{$\Gv$-invariant} if $p(\kappa_1 \xv \vert \kappa_2 \yv) = p(\xv \vert \yv)$ for all $(\kappa_1, \kappa_2) \in \Gv$, $\xv\in \Rb^m$ and $\yv\in \Rb^n$. As shorthand, we will write $\Gv = \{\kappav = (\kappa_1, \kappa_2) | \kappa_1: \Rb^m \rightarrow \Rb^m,  \kappa_2: \Rb^n \rightarrow \Rb^n\}$  defined in $\Rb^{m+n}$ such that $\kappav (\xv, \yv) = (\kappa_1 \xv, \kappa_2 \yv) = (A_{\kappa_1}  \xv, A_{\kappa_2} \yv)$ with orthogonal $A_{\kappa_1} \in \Rb^{m\times m}$ and $A_{\kappa_2} \in \Rb^{n\times n}$. 

Since diffusion models learn a distribution by estimating its score, the following lemma demonstrates that if a distribution is $\Gc$-invariant, its score must be \emph{$\Gc$-equivariant}. We also independently present a special case of this result, excluding the condition on $\yv$, which aligns with the findings in existing structure-preserving diffusion frameworks \citep{HoogeboomSVVW2022,Mathieu2023}. Proofs are provided in \cref{appx:proofs}.
\begin{restatable}{lemma}{eqvScore}
\label{lem:eqv_score}
    $p(\xv\vert\yv)$ is $\Gv$-invariant if and only if $\sv(\kappa_1\xv\vert \kappa_2 \yv) = \kappa_1 \circ \sv(\xv\vert \yv)$ for all $(\kappa_1, \kappa_2) \in \Gv$, $\xv\in \Rb^m$ and $\yv\in \Rb^n$. Likewise, $p(\xv)$ is $\Gc$-invariant if and only if $\sv(\kappa\xv) = \kappa  \circ  \sv(\xv)$ for all $\kappa \in \Gc$.
\end{restatable}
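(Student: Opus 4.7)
The plan is to prove the conditional statement first; the unconditional statement follows as the degenerate case where we drop the $\yv$ variable (take $n=0$ or $\kappa_2$ trivial). The key structural fact I will exploit is that every $\kappa \in \Gc$ is realized by an orthogonal matrix $A_\kappa$, so that (i) the chain rule transports a gradient in $\xv$ to a gradient in $\kappa_1 \xv$ via multiplication by $A_{\kappa_1}^\top$, (ii) the identity $A_{\kappa_1}^\top A_{\kappa_1} = I$ is available, and (iii) changes of variables under $\kappa_1$ have unit Jacobian.

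For the forward direction, I take logarithms of both sides of the invariance identity $p(\kappa_1 \xv \vert \kappa_2 \yv) = p(\xv \vert \yv)$ and differentiate in $\xv$. By the chain rule applied to $\xv \mapsto A_{\kappa_1}\xv$, the left side becomes $A_{\kappa_1}^\top\,\sv(\kappa_1 \xv \vert \kappa_2 \yv)$ while the right side is $\sv(\xv \vert \yv)$. Multiplying through by $A_{\kappa_1}$ and invoking orthogonality yields the desired equivariance $\sv(\kappa_1 \xv \vert \kappa_2 \yv) = \kappa_1 \circ \sv(\xv \vert \yv)$.

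For the converse, I consider the log-ratio $h(\xv) := \log p(\kappa_1 \xv \vert \kappa_2 \yv) - \log p(\xv \vert \yv)$. The same chain-rule computation, combined with the equivariance hypothesis, gives $\nabla_\xv h(\xv) = A_{\kappa_1}^\top A_{\kappa_1}\, \sv(\xv\vert\yv) - \sv(\xv\vert\yv) = \zero$, so $h$ is constant on the connected domain $\Rb^m$. To promote this constancy to an \emph{equality} rather than mere proportionality, I exponentiate and integrate both sides against $\diff\xv$: the change of variables $\zv = \kappa_1 \xv$ on the left has unit Jacobian by orthogonality, so both integrals evaluate to $1$, forcing the multiplicative constant to be $1$.

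The routine chain-rule manipulations are not the obstacle; the only substantive step is pinning the integration constant to $1$ in the converse, which is exactly where the isometry hypothesis (ensuring $\abs{\det A_{\kappa_1}} = 1$) is genuinely needed—without it one could only conclude that the two densities differ by a nontrivial factor. The unconditional statement then follows by an identical argument with $\yv$ and $\kappa_2$ suppressed throughout.
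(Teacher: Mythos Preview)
Your proposal is correct and follows essentially the same approach as the paper: both directions proceed by taking logarithms and differentiating via the chain rule (using $D\kappa_1 = A_{\kappa_1}$), and the converse pins down the integration constant by appealing to the fact that both $p(\xv\vert\yv)$ and $p(\kappa_1\xv\vert\kappa_2\yv)$ are normalized densities. If anything, your treatment of the converse is more careful than the paper's---you correctly identify the constant as multiplicative in $p$ (additive in $\log p$) and explicitly invoke the unit Jacobian, whereas the paper writes $p(\xv\vert\yv) = p(\kappa_1\xv\vert\kappa_2\yv) + C$ with an additive $C$, which is a minor slip.
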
    

We visualize the meaning of \cref{lem:eqv_score} in \cref{fig:eqv_score_func} by illustrating an equivariant (unconditional) distribution under diagonal flipping about the line $x=y$. As can be seen, symmetric points across the diagonal have symmetric scores, showing that flipping a point before or after score evaluation yields the same result.

\section{STRUCTURE PRESERVING PROCESSES}
\label{sec:group_inv_sde}

In this section, we discuss diffusion processes that maintain the structure of the data distribution. Specifically, given a data distribution that is assumed to be group invariant, we explore the sufficient and necessary configurations of diffusion processes to preserve this invariance throughout the diffusion trajectory. These insights will serve as a foundation for the design of the equivariant neural networks shown in \cref{sec:model-architectures}.
We present our main theoretical results in \cref{prop:eqv_drift}:

\begin{restatable}{proposition}{eqvDrift}
\label{prop:eqv_drift}
    Given a diffusion process in \eqref{eq:fwd_diff_process_general} with $\Gv$-invariant $p_0(\xv_0 \vert \yv)$, let $[\zero]_{p_t}$ be the set of ODE drifts preserving the distribution $p_t$. Then $p_t(\xv_t \vert \yv)$ is $\Gv$-invariant for all $t\geq 0$  if and only if
    \begin{equation}
        \kappa_1^{-1} \circ \fv(\kappa_1 \xv, \kappa_2 \yv, t) - \fv(\xv, \yv, t) \in [\zero]_{p_t} \label{eq:eqv_drift_diff}
    \end{equation}
    for all $t > 0$, $\xv \in \Rb^m$, $\yv \in \Rb^n$ and $\kappav \in \Gv$.
\end{restatable}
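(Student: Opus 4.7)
The plan is to work directly with the Fokker--Planck equation induced by \eqref{eq:fwd_diff_process_general} and exploit uniqueness of its solutions. Since each $\kappa_1$ is an orthogonal isometry, it preserves the law of Brownian motion, so the push-forward $\zv_t := \kappa_1 \xv_t$ of a solution to \eqref{eq:fwd_diff_process_general} solves an SDE with drift $\kappa_1\fv(\kappa_1^{-1}\zv_t,\yv,t)$ and the same diffusion coefficient $g(t)$. The key general fact I would use is that two SDEs with identical diffusion coefficient and identical initial distribution produce the same marginals at every $t$ if and only if their drifts differ by an element of $[\zero]_{p_t}$ (the common marginal), because their Fokker--Planck equations then coincide and admit a unique solution given the initial data.

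For the forward direction, assume $p_t(\cdot|\yv)$ is $\Gv$-invariant. Pushing the original SDE forward through $\kappa_1$ yields an SDE with drift $\kappa_1\fv(\kappa_1^{-1}\xv,\yv,t)$, initial distribution $p_0(\cdot|\kappa_2\yv)$, and marginal $p_t(\cdot|\kappa_2\yv)$, all three identifications coming from $\Gv$-invariance. Since the original SDE with $\yv$ replaced by $\kappa_2\yv$ has the same initial distribution and the same marginal, the two drifts must differ by an element of $[\zero]_{p_t(\cdot|\kappa_2\yv)}$. Relabeling $\yv\mapsto\kappa_2^{-1}\yv$ and then $\kappav\mapsto\kappav^{-1}$ (using closure of $\Gv$ under inversion) rewrites this statement as exactly \eqref{eq:eqv_drift_diff}.

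For the reverse direction, assume \eqref{eq:eqv_drift_diff}. Let $\zv_t := \kappa_1^{-1}\xv_t$; this process solves an SDE with drift $\kappa_1^{-1}\fv(\kappa_1\zv_t,\yv,t)$ and initial distribution $p_0(\cdot|\kappa_2^{-1}\yv)$ by $\Gv$-invariance of $p_0$. The SDE obtained from \eqref{eq:fwd_diff_process_general} by replacing $\yv$ with $\kappa_2^{-1}\yv$ has the same initial distribution and produces marginal $p_t(\cdot|\kappa_2^{-1}\yv)$. Applying \eqref{eq:eqv_drift_diff} at $\yv\mapsto\kappa_2^{-1}\yv$ shows the two drifts differ by a member of $[\zero]_{p_t(\cdot|\kappa_2^{-1}\yv)}$, so Fokker--Planck uniqueness forces $\zv_t\sim p_t(\cdot|\kappa_2^{-1}\yv)$. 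Writing out densities, this means $p_t(\kappa_1\zv|\yv)=p_t(\zv|\kappa_2^{-1}\yv)$; the substitution $\yv\mapsto\kappa_2\yv$ then yields $p_t(\kappa_1\xv|\kappa_2\yv)=p_t(\xv|\yv)$, i.e., the desired $\Gv$-invariance.

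The main obstacle is bookkeeping the direction of the group action: the hypothesis is stated as $\kappa_1^{-1}\fv(\kappa_1\xv,\ldots)$, while the natural push-forward analysis produces $\kappa_1\fv(\kappa_1^{-1}\xv,\ldots)$. Reconciling the two shapes requires invoking closure of $\Gv$ under inversion together with substitutions $\yv\mapsto\kappa_2^{\pm1}\yv$, and one must simultaneously track how $[\zero]_{p_t}$—implicitly indexed by the conditioning $\yv$—transforms under those substitutions. A secondary point is to justify uniqueness of the Fokker--Planck equation under whatever regularity conditions on $\fv$, $g$, and $p_0$ the paper tacitly assumes; this step can alternatively be bypassed by passing to the probability-flow ODE and invoking \cref{lem:eqv_score} to cancel the score terms, which gives the same conclusion via the continuity equation.
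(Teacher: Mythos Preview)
Your approach is correct and takes a somewhat different route from the paper. The paper first establishes the ODE version of the statement (its \cref{lem:eqv_drift}) via the weak Liouville equation, and then for the SDE case handles the two directions asymmetrically: the forward implication passes to the probability-flow ODE (\cref{lem:fwd_ode}), applies the ODE lemma, and cancels the score contribution using \cref{lem:eqv_score_meas}; the backward implication computes directly on the Fokker--Planck equation, invoking a separate lemma (\cref{lem:inv_laplacian}) that the Laplacian commutes with the isometry $\kappav$. Your argument is more uniform: you treat both directions by comparing the pushed-forward SDE with the relabelled SDE and appealing to the principle ``same diffusion, same initial law, same marginals $\Leftrightarrow$ drifts differ by an element of $[\zero]_{p_t}$'', together with Fokker--Planck uniqueness. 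The ingredient you use---orthogonal $\kappa_1$ preserves the Wiener law---is exactly the SDE-level counterpart of the paper's Laplacian-invariance lemma, so the content is equivalent but your packaging avoids the detour through the probability-flow ODE and the separate score-cancellation step. The trade-off is that the paper's modular route makes explicit the intermediate fact that the PF-ODE drift is itself equivariant (which they reuse elsewhere), whereas your push-forward argument is shorter but does not isolate that lemma. Your caveat about the bookkeeping of $\kappav$ versus $\kappav^{-1}$ and the $\yv$-dependence of $[\zero]_{p_t}$ is well placed; the paper sidesteps this by working with test functions $\psi=\phi\circ\kappav$ throughout rather than relabelling, but your substitution scheme is equally valid once $\Gv$-closure under inversion is invoked.
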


While \cref{prop:eqv_drift} is presented based on the conditional distribution \( p_t(\xv_t \vert \yv) \), it also applies to the unconditional case by setting \( n = 0 \). In this scenario, \( p_t(\xv_t \vert \yv) \) reduces to \( p_t(\xv_t) \), \(\Gv\) simplifies to \(\Gc\) consisting of \(\kappa_1\), and \cref{eq:eqv_drift_diff} becomes \( \kappa_1^{-1} \circ \fv(\kappa_1 \xv, t) - \fv(\xv, t) \in [\zero]_{p_t} \). Existing structure-preserving diffusion models \citep{YimTDMDBJ2023, XuYSSET2022, HoogeboomSVVW2022, QiangSXGGZML2023, MartinkusLLVHRWCBGL2023} are based on the special case that \( \kappa_1^{-1} \circ \fv(\kappa_1 \xv, t) - \fv(\xv, t) = \zero \). However, this zero-drift condition is not the only one that preserves $p_t$. For instance, a spherical Gaussian can be preserved by any circular vector field, where the drift is aligned with the boundary of the level set of the density function.

To gain an intuitive understanding of why \cref{eq:eqv_drift_diff} results in the \(\Gv\)-invariance of \( p_t(\xv \vert \yv) \), we illustrate how the proposition works in DB models for $\Gv = \{\kappav = (\kappa, \kappa) \vert \kappa\in \Gc\}$. For simplicity, we assume \(\fv(\kappa \xv, \kappa \yv, t) - \kappa \circ \fv(\xv, \yv, t) = \zero\), which implies $\fv$ is equivariant. As discussed in \cref{sec:prel_diff_bridge}, in DBs, \(\yv\) corresponds to the starting point \(\xv_T\) of the backward process, where \(\xv_T\) can be intuitively understood as a noisy image and \(\xv_0\) as the corresponding denoised version. When $\fv$ is equivariant, it essentially says for an infinitesimal step $\delta$, the transition probability induced by the SDE in \cref{eq:fwd_diff_process_general} satisfies
\begin{align}
    p(\av \vert  \bv, \xv_T) = p(\kappa \av \vert \kappa\bv, \kappa\xv_T).
\end{align}
As $p_T(\xv_T \vert \xv_T) = p_T(\kappa\xv_T \vert \kappa\xv_T)$, which is $\Gv$-invariant, applying this relationship recursively from $t = T$ to $0$ implies $p(\xv_t \vert \xv_T) = p(\kappa\xv_t \vert \kappa \xv_T)$ for $t \in [0, T]$. (Since the SDE is solved reversely, the base case becomes the invariance of $p_T$ instead of $p_0$) Intuitively, suppose $\kappa$ denotes the image flipping operator. This basically says, when input blurry image $\xv_T$ is flipped, so is the denoised image $\xv_0$. In \cref{fig:eqv_sde_traj} (\textbf{Left}), we visualize the evolution of conditional $p_t$ when the conditioned end point $\xv_T$ is flipped \wrt $x = 0$. As we can see when $\xv_T$ is flipped, the trajectory from $\xv_T$ to $\xv_t$ is also flipped, which corresponds to the invariance of $p(\xv_t \vert \xv_T)$. 

For completeness, in \cref{fig:eqv_sde_traj} (\textbf{Right}), we also present an example of unconditional $p_t$ in \cref{prop:eqv_drift} by setting $n = 0$. In this case, to ensure that $p_t$ is invariant, it suffices that \( \fv(\kappa_1 \xv, t) - \kappa_1\circ \fv(\xv, t) = \zero \). Here, we visualize the evolution of $p_t$ driven by two different diffusion processes with $p_0$ invariant to flipping with respect to $x=0$ (that is, $\kappa_1(x)=-x$). In the upper plot, we have drift $\fv(x, t) = \frac{1 - x}{1-t}$ that pushes $x$ to $1$ and is not flip-equivariant for $t \geq 0$. As we observe, for all $t > 0$, $p_t$ is no longer flip-invariant, which corroborates \cref{prop:eqv_drift}. In contrast, the lower plot illustrates the VP-SDE (see \cref{tb:fg_selection}) with $\alpha_t = 1 - t$ for $t \in (0,1)$. The drift here, $\fv(x, t) = - \frac{x}{2(1-t)}$ is  is flip-equivariant as $\fv(-x, t) = - \fv(x, t)$. As shown, $p_t$ has a symmetric density for all $t \geq 0$, which is also aligned with \cref{prop:eqv_drift}. 

In summary, \cref{prop:eqv_drift} shows for conditional \( p_t \), \cref{eq:eqv_drift_diff} ensures the coupling relationship between condition \(\yv\) and the noise sample \(\xv_t\) is \(\Gv\)-invariant. In contrast, for unconditional \( p_t \), it ensures the sample \(\xv \sim p_t\) follows the same distribution when transformed by \(\kappa \in \Gc\).

The following proposition generalize this result to all groups consisting of linear isometries and linear drifts of the form \(\uv(\xv, t) = u(t) \xv \).
\begin{restatable}{proposition}{SPDiffLO}
\label{cor:sp_diff_linear_op}
    Assume $\uv(\xv, t) = u(t)\xv$ for some scalar function $u: \Rb \rightarrow \Rb$. Given any group $\Gc$ (or $\Gv$) composed of linear isometries, if the unconditional $p_t$ induced by \cref{eq:fwd_diff_process} is $G$-invariant at $t = 0$, then it is $\Gc$-invariant for all $t \geq 0$. Likewise, if the conditional $q_t(\xv_t \vert \xv_T)$ induced by \cref{eq:ddbm_fwd} is $\Gv$-invariant at $t = 0$ then it is $\Gv$-invariant for all $t \geq 0$.
\end{restatable}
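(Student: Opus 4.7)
The plan is to invoke \cref{prop:eqv_drift} and verify that, for the linear drifts in question, the difference appearing in \eqref{eq:eqv_drift_diff} is identically $\zero$, so the inclusion in $[\zero]_{p_t}$ holds trivially. Every element of $\Gc$ (resp.\ $\Gv$) is represented by an orthogonal matrix $A_\kappa$, and scalars commute with any matrix, so the algebra reduces to elementary manipulations. For the unconditional case the drift is $\uv(\xv,t) = u(t)\,\xv$, and for any $\kappa \in \Gc$,
\begin{equation*}
    \kappa^{-1} \circ \uv(\kappa\xv, t) \;=\; A_\kappa^{-1}\bb{u(t)\,A_\kappa \xv} \;=\; u(t)\,\xv \;=\; \uv(\xv,t),
\end{equation*}
so \eqref{eq:eqv_drift_diff} holds with $\zero$ and \cref{prop:eqv_drift} yields $\Gc$-invariance of $p_t$ for all $t \geq 0$.

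For the conditional case, the drift of \eqref{eq:ddbm_fwd} splits as $\fv(\xv,\xv_T,t) = \uv(\xv,t) + g(t)^2\,\hv(\xv,\xv_T,t)$, where $\hv(\xv,\xv_T,t) = \nabla_\xv \log p(\xv_T\mid \xv)$ is the score of the transition kernel induced by \eqref{eq:fwd_diff_process}. The $\uv$ piece is handled as above, so the task reduces to showing that $\hv$ is equivariant under the diagonal action $\kappav = (\kappa,\kappa)$ natural to DDBM (cf.\ the discussion following \cref{prop:eqv_drift}). Because \eqref{eq:fwd_diff_process} is a linear SDE with isotropic diffusion, its transition kernel is a spherical Gaussian $\Nc(\mu(t,T)\,\xv,\,\sigma^2(t,T)\,I)$, with scalar mean factor and variance determined by $u$ and $g$. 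Differentiating the log density then gives
\begin{equation*}
    \hv(\xv,\xv_T,t) \;=\; \tfrac{\mu(t,T)}{\sigma^2(t,T)}\bb{\xv_T - \mu(t,T)\,\xv},
\end{equation*}
which is linear in both $\xv$ and $\xv_T$; orthogonality of $A_\kappa$ then immediately yields $\kappa^{-1}\,\hv(\kappa\xv,\kappa\xv_T,t) = \hv(\xv,\xv_T,t)$. Hence $\fv$ is $\Gv$-equivariant, \eqref{eq:eqv_drift_diff} is once more zero, and \cref{prop:eqv_drift} delivers $\Gv$-invariance of $q_t(\cdot\mid \xv_T)$ for all $t \geq 0$.

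I expect the main obstacle to be the second step: establishing the closed-form Gaussian transition kernel for the time-inhomogeneous linear SDE explicitly enough to read off $\hv$. This is a standard Ornstein--Uhlenbeck-style computation—the conditional mean evolves by a scalar ODE driven by $u$, and the covariance stays isotropic because $g$ is scalar—but it is the ingredient that ties the argument together. Once this closed form is in hand, the remainder is purely orthogonal-matrix bookkeeping, and the proposition follows as a direct application of \cref{prop:eqv_drift}.
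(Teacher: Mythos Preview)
Your proposal is correct, and for the unconditional part it is exactly what the paper does: verify that $\kappa^{-1}\uv(\kappa\xv,t)-\uv(\xv,t)=\zero$ and invoke \cref{prop:eqv_drift}.

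For the bridge part you take a genuinely different route. The paper does \emph{not} compute the Gaussian transition kernel explicitly; instead it argues abstractly via \cref{lem:eqv_pushforward}: since $\uv$ is equivariant, \cref{prop:eqv_drift} (applied with $\yv=\xv_0$) shows $p(\xv_T\mid\xv_t)$ is invariant under the diagonal action, and then differentiating gives equivariance of $\hv$ without ever writing down its closed form (\cref{prop:appx:equiv_bridge}). Your approach instead exploits the specific linear-isotropic structure of the SDE to write $p(\xv_T\mid\xv_t)=\Nc(\mu(t,T)\xv_t,\sigma^2(t,T)I)$ and read off $\hv$ directly. Both are valid; the paper's argument is slightly more general (it would apply to any $\uv$ satisfying $\kappa^{-1}\uv(\kappa\xv,t)=\uv(\xv,t)$, not only the scalar-linear case), whereas yours is more self-contained and avoids the extra layer of invoking \cref{prop:eqv_drift} twice. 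Since the proposition already restricts to $\uv(\xv,t)=u(t)\xv$, your explicit Ornstein--Uhlenbeck computation is perfectly adequate and arguably more transparent here.
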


Since the drift terms of both VP and VE-SDE in \cref{tb:fg_selection} (\Cref{appx:diffusion_coefficients}) take the form \(u(t)\xv\), \cref{cor:sp_diff_linear_op} indicates that the induced diffusion process and the corresponding diffusion bridges are structure-preserving for any group composed of linear isometries.

\begin{figure*}
\begin{floatrow}
\CenterFloatBoxes
\ffigbox[0.19\columnwidth]{%
  \includegraphics[width=\columnwidth]{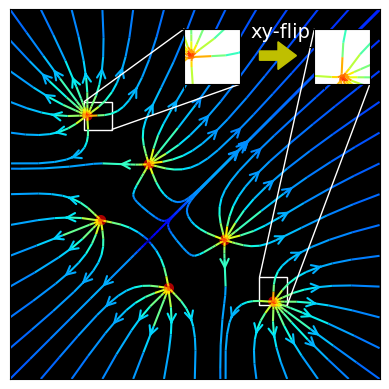}
}{%
  \caption{The vector fields of score functions that are equivariant under xy-flip.}
  \label{fig:eqv_score_func}
}
\hfill
\ffigbox[0.72\columnwidth]{%
	\vspace{0.8em}
    \centering
    \begin{minipage}{0.32\textwidth}
    \includegraphics[width=1 \textwidth]{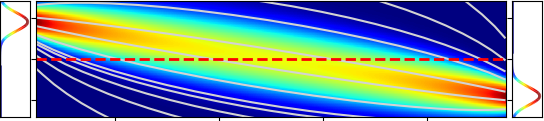}
    \includegraphics[width=1 \textwidth]{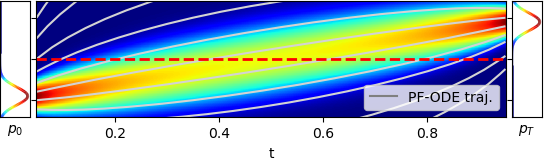}
    \end{minipage}
    \begin{minipage}{0.32\textwidth}
    \includegraphics[width=1 \textwidth]{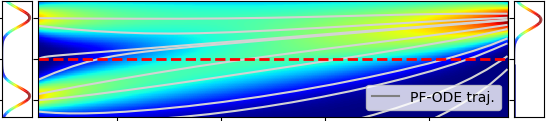}
    \includegraphics[width=1 \textwidth]{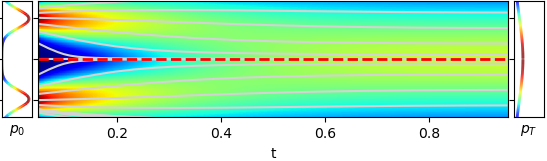}
    \end{minipage}
    \vspace{0.5em}
    \raisebox{-0.38\height}
    {~\includegraphics[width=0.032\textwidth]{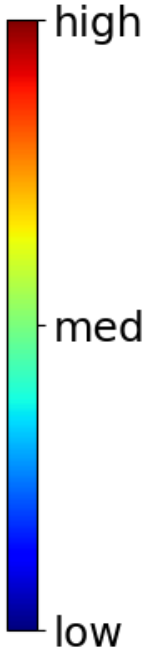}} 
}{%
  \caption{\textbf{Left:} The evolution of $p_t$ driven by DB processes induced by the VE-SDE in \cref{tb:fg_selection} (\Cref{appx:diffusion_coefficients})  conditioned on the end point $\xv_T = -1$ (upper) and $\xv_T = 1$ (lower). \textbf{Right:}  The upper plot has $f(x, t) = \frac{1-x}{1-t}$ and $g(t) = 1$. The lower is the VP-SDE in \cref{tb:fg_selection} (\Cref{appx:diffusion_coefficients})  with $\alpha_t = 1 -t$.}    
  \label{fig:eqv_sde_traj}
}
\vspace{-0.5em}
\end{floatrow}
\end{figure*}

\section{STRUCTURE PRESERVING MODELS}
\label{sec:model-architectures}



In this section, we explore applying the insights from \cref{sec:group_inv_sde} to ensure the data generated by SPDM adheres to a \(\Gv\)-invariant distribution. As mentioned in \cref{sec:preliminary}, sampling a diffusion model involves solving the SDE in \cref{eq:general_reverse_SDE_family} or \cref{eq:ddbm_bwd} by estimating the score using a neural network \(\sv_{\thetav}\). We will discuss several effective methods to design and train \(\sv_{\thetav}\) so that it meets the properties outlined in \cref{prop:eqv_drift}, achieving theoretically guaranteed structure-preserving sampling.  A summary of these methods limitations can be found in \cref{appx:limitations}.

\subsection{Structure Preserving Sampling}
\label{sec:inv_samp_dist}
\textbf{Unconditioned Distribution Sampling}. 
By \cref{prop:eqv_drift}, if a diffusion process is structure preserving, $p_t$ is $\Gc$-invariant for all $t\geq 0$. So given the prior distribution $p_T$ is $\Gc$-invariant and by \cref{lem:eqv_score}, for all $t \geq 0$, the score $\nabla_\xv \log p_t(\xv)$ is $\Gc$-equivariant. Thus, if the score estimator $\sv_{\thetav}(\xv, t)$ perfectly learns the $\Gc$-equivariant property and satisfies \cref{eq:eqv_drift_diff}, the drift of backward SDE~\cref{eq:general_reverse_SDE_family}:
\begin{align}
    \cev{\fv}_{\thetav, \lambda} (\xv_t, t) =  \uv(\xv_t, t) - \tfrac{1}{2}(1+\lambda^2) g^2(t) ~\sv_{\thetav}(\xv, t)\label{eq:bwd_drift}
\end{align}
also satisfies \cref{eq:eqv_drift_diff} as $[\zero]_{p_t}$ is closed under addition (see \cref{appx:distr_pres_drift}). Applying \cref{prop:eqv_drift} with reversed~$t$, we can then conclude that the generated samples must follow a $\Gc$-invariant distribution. 

\textbf{Equivariant Style-transfer Through Diffusion Bridges Conditioned on $\xv_T$}. When the drift $\uv(\xv, t)$ of original SDE in \cref{eq:fwd_diff_process} satisfies \cref{eq:eqv_drift_diff}, given stacked group $\Gv = \{(\kappa, \kappa)\vert \kappa \in \Gc\}$, we can show that the drift $\uv(\xv_t, t)+ g(t)^2 \hv(\xv_t,\xv_T, t)$
in \cref{eq:ddbm_fwd} also satisfies \cref{eq:eqv_drift_diff} (see \cref{prop:appx:equiv_bridge} in \cref{appx:SPDM} for the proof). As a result, by \cref{prop:eqv_drift}, $p_t(\xv_t\vert \xv_T)$ is $\Gv$-invariant for all $t \in [0,T]$ and thus by \cref{lem:eqv_score}, its score is equivariant and thus satisfies \cref{eq:eqv_drift_diff}. In this way, if the score estimator $\sv_{\thetav}(\xv_t, \xv_T, t)$ perfectly learns the equivariant property such that $\sv_{\thetav}(\kappa\xv_t, \kappa\xv_T, t) = \kappa\circ\sv_{\thetav}(\xv_t, \xv_T, t)$, the drift of reverse-time SDE \cref{eq:ddbm_bwd}
satisfies \cref{eq:eqv_drift_diff}; therefore, the invariant coupling between $\yv = \xv_T$ and $\xv_t$ is preserved for all $t \in [0,T]$ during the sampling process. 

Building on this observation, to ensure the generated data preserves the necessary geometric structure, it suffices to train a group equivariant score estimator $\sv_{\thetav}$. We present two theoretically guaranteed $\Gc$-equivariant implementations of $\sv_{\thetav}$, SPDM+WT and SPDM+FA.

\textbf{Weight Tying (SPDM+WT).}
Currently, most existing diffusion models are based on the U-Net backbone \citep{SalimansKCK2017, RonnebergerFB2015}. As the only components that are not equivariant are CNNs, we replace them with group-equivariant CNNs \citep{CohenW2016,RavanbakhshSP2017,EstevesABMD18,KondorTrivedi2018,KniggeRB2022,Yarotsky2022} to make the entire network equivariant. 

In particular, as we only consider linear isometry groups, we can make CNNs equivariant by tying the weights of the convolution kernels $\ks$, which will also reduce the total number of parameters and improve the computation efficiency \citep{RavanbakhshSP2017}. (For more general groups, refer to \cite{CohenW2016, KniggeRB2022} for methods to make CNNs \(\Gc\)-equivariant.) We provide more details on our selections of weight-tied kernels in \cref{appx:weight-tied-kernel}.

\textbf{Output Combining (SPDM+FA).}
When $\Gc$ contains finite elements, we can achieve $\Gc$-equivariance through frame averaging (FA) \citep{PunyASMGHL2022}, leveraging the following fact: for any function $\rv: \Rb^d \rightarrow \Rb^d$, 
\begin{align}
    \tilde\rv(\xv, \yv) = \tfrac{1}{\vert\Gc \vert}~{\textstyle\sum_{\kappa \in \Gc}} \kappa^{-1}~\rv(\kappa \xv, \kappa \yv)
\end{align}
is $\Gc$-equivariant, where $\vert\Gc \vert$ denotes the number of elements in $\Gc$ and the second argument of $\rv$ can be discarded for the approximation of the score not conditioned on $\yv$. Based on this fact, we can obtain an equivariant estimator $\tilde\sv_{\thetav}(\cdot, t)$ of the score by setting $\rv(\cdot) = \sv_{\thetav}(\cdot, t)$. Note that unlike other FA-based diffusion models~\citep{MartinkusLLVHRWCBGL2023,DuvalSHMMBR2023}, our method trains $\sv_{\thetav}(\cdot, t)$ using regular score-matching and only adopts FA during inference time. This design significantly saves training costs. To see why FA is not necessary during training, we note that our previous discussion has shown that the ground truth score function is equivariant; therefore, when $\sv_{\thetav}$ is well trained, estimator $\kappa^{-1}\sv_{\thetav} {(\kappa~\cdot, t)}$ for $\kappa \in \Gc$ will produce very similar output, estimating the value of the score function at $\xv_t$ (given $\yv$). Thus, their average is also a valid estimator. To boost the model's performance, additional regularizers can be used to encourage $\kappa^{-1}\sv_{\thetav} {(\kappa~\cdot, t)}$ for $\kappa \in \Gc$ to have the same output; we discuss this technique with extra details in \cref{appx:equivariance-regularization}.

\subsection{\texorpdfstring{$\Gc$}{Lg}-equivariant Trajectory}
\label{sec:eqv_samp_traj}

The sampling of SPDM is governed by the SDE in \cref{eq:general_reverse_SDE_family} or \cref{eq:ddbm_bwd}, collectively written as 
\begin{align}
    \diff \xv_t =  \cev{\fv}_{\thetav, \lambda}(\xv_t, \yv, t)\diff t + \lambda g(t) \diff \ws_t \label{eq:general_reverse_SDE_family:eqv}
\end{align}
where $\yv$ can be optionally discarded. In practice, the sampling process solves \cref{eq:general_reverse_SDE_family:eqv} iteratively through
\begin{align}
    \! \!\! \xv_{i-1}\!\leftarrow \! \cev{\fv}_{\thetav, \lambda}(\xv_i, \yv,t_i) (t_{i-1} \!-\! t_i)\! + \!\lambda g(t_i)\sqrt{t_i \!-\! t_{i-1}}{\epsilonv_i}\! \label{eq:euler_maruyama}
\end{align}
with preset time steps $\{t_i\}_{i=1}^n$ and $\epsilonv_i\sim\Nc(\zero, I)$. 

\begin{figure}[t!]
	\centering
    \includegraphics[width=0.6\textwidth]{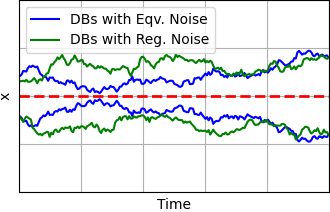}
    \caption{The trajectories of DBs with and without equivariant noise.}\label{fig:eqv_noise}
\end{figure}
While the techniques discussed in \cref{sec:inv_samp_dist} ensure the equivariance of \(\cev{\fv}_{\thetav, \lambda}\), they do not guarantee the equivariance of the sampled noise sequence \(\{\epsilonv_i\}_{i = 1}^n\), in a per-sample sense. As a result, the sample trajectory \(\xv_t\) may not be equivariant. This is visualized in \cref{fig:eqv_noise} with green curves, where the drifts of DBs are equivariant to the flip about $x = 0$ but the trajectory is not. Due to this asymmetry, the output of the SPDM will not be equivariant. One option to address this problem is to adopt ODE sampling by setting \(\lambda = 0\). However, this method is not always preferred as SDE sampling can significantly improve image quality \citep{KarrasAAL2022, SongDKKEP2021, ZhouLKE2024}. For \(\lambda > 0\), we also need \(\{\epsilonv_i\}_{i=1}^n\) to be equivariant in the sense that for \(\kappa \in \Gc\), if the starting point \(\xv_n\) is updated to \(\kappa \xv_n\), then \(\{\epsilonv_i\}_{i=1}^n\) is also updated to \(\{\tilde \epsilonv_i\}_{i=1}^n\) with \(\tilde \epsilonv_i = \kappa \epsilonv_i\). In this way, the trajectory becomes equivariant, as shown by the blue curves in \cref{fig:eqv_noise}. \cref{tab:eqv_noise} presents the effectiveness of sampling from an equivariant model with and without the noise satisfying the equivariant property. As shown, perfect equivariance is achieved only by combining equivariant noise (EN) and FA as seen in the SPDM+FA implementation.

In \cref{appx:eqv_aligner}, we present a simple technique to achieve EN by fixing the random seed and matching some artificial features between $\xv_n$ and $\epsilonv_n$. In our empirical study, we use this method to inject noise into $\xv_t$ as shown in \cref{fig:image-lysto-denosing} so that the rotation of the input results in a precise output rotation. 

\begin{table}[t]
    \centering
    \ttabbox{
    \caption{Equivariance of DB outputs when enforcing equivariant drift by frame-averaging (FA) and adding equivariant noise (EN).}\label{tab:eqv_noise}
    }{%
        \resizebox{0.9\columnwidth}{!}{%
        \begin{tabular}{@{}lcc@{}}
            \toprule
            & LYSTO & CT-PET  \\ \cmidrule(lr){2-3} 
            Model & $\Delta\hat\xv_0\downarrow$ & $\Delta\hat\xv_0\downarrow$ \\ \midrule
            DDBM  (Baseline)     & 52.44 & 164.91\\ 
            DDBM+FA              & 44.50 & 153.39 \\ 
            DDBM+EN              & 31.93 & 70.52  \\ 
            DDBM+FA+EN (SPDM+FA) & \textbf{0.00} & \textbf{0.00} \\ \bottomrule
        \end{tabular}%
    }
    }
\end{table}

\section{EMPIRICAL STUDY}
\label{sec:emp_study}

In this section, we present experiments demonstrating the effectiveness of the methods from \cref{sec:model-architectures}. The results support our theoretical work in \cref{sec:group_inv_sde} and offer additional insights. 

\paragraph{Datasets.} To evaluate the performance and equivariance capabilities of our models over image generation tasks, we adopt the rotated MNIST~\citep{LarochelleECBB2007}, LYSTO \citep{JiaoLYSTO2023}, and ANHIR \citep{BorovecEL2020} datasets. In order to validate equivariant trajectory sampling of our models we evaluate denoinsing LYSTO images, and style transfer from CT scan images to PET scan images of the same patient from the CT-PET dataset \citep{GatidisHFFNPSKCR2022}. See \cref{appx:datasets} for a more detailed discussion on the datasets used.

\paragraph{Models.} 
We implement regular diffusion models and bridge models (DDBM) \citep{ZhouLKE2024} based on VP-SDEs \citep{HoJA2020, SongME2021}, which are structure-preserving with respect to \(C_4\), \(D_4\), and flipping, as per \cref{cor:sp_diff_linear_op}. For generation tasks, we present the performance of the standard diffusion model, VP-SDE, as a baseline, along with SP-GAN \citep{BirrellKRLZ2022}, the only GAN-based model with theoretical group invariance guarantees. We also report the mean performance of GE-GAN \citep{DeyCG2021}. The tested models and their invariance and equivariance properties are summarized in \cref{tab:model-summary} (\Cref{appx:Hyperparameters}).

For style-transfer tasks, in addition to the original DDBM implementation \citep{ZhouLKE2024}, we report the performance of the popular style-transfer method Pix2Pix \citep{IsolaZZE2017} and the unconditional diffusion bridge model ${\rm I^2SB}$ \citep{LiuVHTNA2023}. For the denoising task on LYSTO, all models use pixel-space implementations. For the CT-PET dataset, all models except Pix2Pix are trained in a latent space, with images first encoded by a fine-tuned pretrained VAE from Stable Diffusion \citep{RombachBLEO2022}. FA was applied during fine-tuning and inference to ensure equivariance. 

All models, except SPDM-WT, are trained with data augmentation using randomly selected operators from their respective groups. We apply both non-leaky augmentation as in EDM \citep{KarrasAAL2022} and self-conditioning \citep{ChenCLLSZ2023} to improve diffusion model sample quality. For sampling we make use of both the DDPM and DDIM samplers \citep{HoJA2020,SongME2021}. Additional model configuration details can be found in \cref{appx:Hyperparameters} with training resources and training times in \cref{appx:model_resources}.

\begin{table}[t]
    \centering 
    \renewcommand{\arraystretch}{1.1} 
    \ttabbox{%
    \resizebox{0.97\columnwidth}{!}{%
    \begin{tabular}{lcccccc}
        \toprule
        \multicolumn{1}{c}{} & \multicolumn{6}{c}{Rotated MNIST} \\ \cline{2-7}
        \multirow{2}{*}{Model} & \multicolumn{4}{c}{FID$\downarrow$} & Inv-FID$\downarrow$ & $\Delta\hat\xv_0\downarrow$ \\
        \cmidrule(lr){2-5} \cmidrule(lr){6-7} & $1\%$ & $5\%$ & $10\%$ & $100\%$  & $100\%$ & $100\%$ \\ \hline
        VP-SDE & 5.97 & \textbf{3.05} & 3.47 & 2.81 & 2.21 & 36.98 \\
        SPDM+WT & 5.80 & 3.34 & 3.57 & 3.50 & 2.20 & \textbf{0.00} \\
        SPDM+FA & \textbf{5.42} & 3.09 & \textbf{2.83} & \textbf{2.64} & \textbf{2.07} & \textbf{0.00} \\
        SP-GAN & 149 & 99 & 88 & 81 & -- & -- \\
        SP-GAN (Reprod.) & 16.59 & 11.28 & 9.02 & 10.95 & 19.92 & -- \\ \hline
        GE-GAN & -- & -- & 4.25 & 2.90 & -- & -- \\
        GE-GAN (Reprod.) & 15.82 & 7.44 & 5.92 & 4.17 & 58.61 & -- \\ \hline
    \end{tabular}%
    }  
    }{
    \caption{Model Comparison on Rotated MNIST, LYSTO and ANHIR datasets.}\label{tab:fid-rot-mnist}
    }
    \vspace{0.5em}
    \resizebox{1\columnwidth}{!}{%
    \begin{tabular}{lcccccc}
        \toprule
        & \multicolumn{3}{c}{LYSTO} & \multicolumn{3}{c}{ANHIR} \\ \cmidrule(lr){2-4} \cmidrule(lr){5-7} 
        \multirow{2}{*}{Model} & \multirow{2}{*}{FID$\downarrow$} & \multirow{2}{*}{Inv-FID$\downarrow$} & \multirow{2}{*}{$\Delta\hat\xv_0\downarrow$} & \multirow{2}{*}{FID$\downarrow$} & \multirow{2}{*}{Inv-FID$\downarrow$} & \multirow{2}{*}{$\Delta\hat\xv_0\downarrow$} \\
        & & & & & & \\ \hline
        VP-SDE & 7.88 & 0.66 & 20.77 & 8.03 & 0.57 & 39.82 \\
        SPDM+WT & 12.75 & \textbf{0.59} & \textbf{0.00} & 11.73 & 0.43 & \textbf{0.00} \\
        SPDM+FA & \textbf{5.31} & 0.6 & \textbf{0.00} & \textbf{7.57} & \textbf{0.31} & \textbf{0.00} \\
        SP-GAN & 192 & -- & -- & 90 & -- & -- \\
        SP-GAN (Reprod.) & 16.29 & 0.66 & -- & 17.12 & 0.28 & -- \\ \hline
        GE-GAN & 3.90 & -- & -- & 5.19 & -- & -- \\
        GE-GAN (Reprod.) & 23.20 & 27.84 & -- & 14.16 & 6.87 & -- \\ \hline
    \end{tabular}
    }
\end{table}

\begin{table}[t]
    \centering
    \renewcommand{\arraystretch}{1.1} 
    \ttabbox{
    \resizebox{0.95\columnwidth}{!}{%
    \begin{tabular}{@{}lcccccccc@{}}
        \toprule
        & \multicolumn{4}{c}{LYSTO} & \multicolumn{4}{c}{CT-PET} \\ \cmidrule(lr){2-5} \cmidrule(lr){6-9}
        Model & FID$\downarrow$ & $L_1$ $\downarrow$ & SSIM$\uparrow$ & $\Delta\hat\xv_0\downarrow$ & FID$\downarrow$ & $L_1$ $\downarrow$ & SSIM$\uparrow$ & $\Delta\hat\xv_0\downarrow$ \\ \midrule
        DDBM & 17.28 & 0.076 & 0.696 & 52.44 & 18.13 & \textbf{0.041} & 0.861 & 164.91 \\
        SPDM+FA & \textbf{16.21} & \textbf{0.071} & 0.721 & \textbf{0.00} & \textbf{17.74} & 0.042 & 0.860 & \textbf{0.00} \\
        Pix2Pix & 78.43 & 0.087 & 0.654 & 113.63 & 20.26 & 0.043 & \textbf{0.862} & 172.11 \\
        ${\rm I^2SB}$ & 20.45 & 0.073 & \textbf{0.722} & 105.83 & 27.51 & 0.051 & 0.832 & 96.83 \\ \bottomrule
    \end{tabular}
    }
    }{
     \caption{Model Comparison on LYSTO denoising and CT-PET style transfer datasets.}\label{tab:bridge_models}
    }
\end{table}

\subsection{Image Generation Tasks}
\label{sec:results}
To demonstrate our models are able to match or exceed expected performance over the listed image generation datasets, we report the FID score \citep{HeuselRHTH2017} of each in \cref{tab:fid-rot-mnist}. To ensure consistency, we reproduced the results of SP-GAN and GE-GAN\footnote{We note that the reproduced FID of GE-GAN is significantly higher than reported by \citeauthor{DeyCG2021}, as their score is based on a customized {\rm InceptionV3} finetuned on LYSTO and ANHIR.  While included in the table for reference, these scores are not comparable with other FIDs.} and computed the FID using the standard {\rm InceptionV3} model. Details on our FID calculation are provided in \cref{appx:fid_computation}. Samples from each model are presented in \cref{appx:sample-images}. \cref{tab:fid-rot-mnist} shows that diffusion-based models consistently outperform SP-GAN across all datasets, with SPDM+WT and SPDM+FA performing on pair or better.

\begin{figure*}[t!]
    \centering
        \includegraphics[width=0.9\columnwidth]{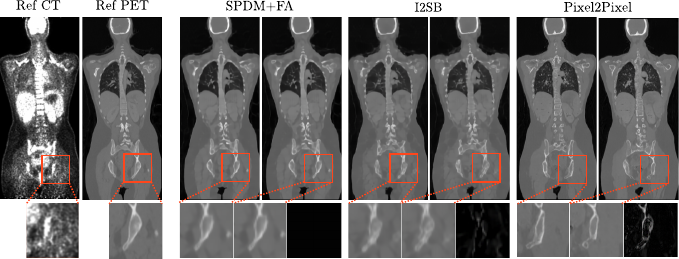}
        \caption{PET images generated by SPDM+FA, I2SB, and Pixel2Pixel for the CT-PET style transfer task. The leftmost group shows the input CT and ground-truth PET. }
    \label{fig:bridge_models_samples}
\end{figure*}

\textbf{Validating Equivariance.}
To quantify the degree of $\Gc$-invariance of the learned sampling distribution, we introduce a metric called \textit{Inv-FID}. Given a set of sampled images $\Dc_s$ from $\hat{p}_0$, Inv-FID calculates the maximum FID between $\kappa_1(\Dc_s)$ and $\kappa_2(\Dc_s)$ for $\kappa_1, \kappa_2 \in \Gc$. If $\hat{p}_0$ is perfectly \(\Gc\)-invariant, applying any $\kappa \in \Gc$ to its outcomes will leave the resulting distribution unchanged. Thus, the closer the FID between $\kappa_1(\Dc_s)$ and $\kappa_2(\Dc_s)$ is to zero, the more $\Gc$-invariant $\hat{p}_0$ is. As shown in \cref{tab:fid-rot-mnist}, diffusion models with theoretical guarantees, that is SPDM+WT and SPDM+FA, tend to achieve lower scores. Interestingly, the differences in Inv-FID scores across diffusion models are relatively small, indicating that these models naturally learn invariant properties. Therefore, in situations where invariance in the sampling distribution is not crucial, standard diffusion models may be sufficient.

\textbf{Validating Equivariant Sampling Trajectory.} 
To empirically validate our methods' theoretical guarantees of equivariant sampling trajectories, we implemented an image-denoising task using SDEdit \citep{MengHSSWZE2022}, as shown in \cref{fig:image-lysto-denosing}. Given a low-resolution (or corrupted) image \(\tilde\xv_0\), we add equivariant noise to obtain \(\xv_t\) using the technique from \cref{sec:eqv_samp_traj}. This technique is also applied when solving backward SDEs to obtain the denoised image \({\rm dn}(\tilde\xv_0)\), where \({\rm dn}\) represents the denoising process. As discussed in \cref{sec:eqv_samp_traj}, if a diffusion model is \(\Gc\)-equivariant, we should have \({\rm dn}(\kappa \, \tilde\xv_0) - \kappa \, {\rm dn} (\tilde\xv_0) \approx \zero\) for all \(\kappa \in \Gc\). In \cref{tab:fid-rot-mnist}, we report the average maximum pixel-wise distance \(\Delta \xv_0\) between \({\rm dn}(\kappa \, \tilde\xv_0)\) and \(\kappa \, {\rm dn} (\tilde\xv_0)\) over 16 randomly sampled corrupted \(\tilde\xv_0\) images. \(\kappa\) is randomly picked for each \(\tilde\xv_0\). The results show that theoretically equivariant models consistently have nearly zero \(\Delta \xv_0\), while models without theoretical guarantees produce significantly different outputs, which could be problematic in applications like medical image analysis. Likewise, in \cref{tab:bridge_models}, for a model $\mv_\theta$, we adopt a similar idea to measure its equivariance  
$\Delta \hat\xv_0$ by reporting the average maximum pixel-wise distance between 
\({\rm \mv_\theta}(\kappa \, \tilde\yv)\) and \(\kappa \, \mv_\theta (\tilde\yv)\), where $\yv$ is the input.

\textbf{Results.} Among models with theoretically guaranteed structure-preserving properties, SPDM+WT struggles to achieve FID scores comparable to FA methods on complex datasets like LYSTO. This is likely due to the weight-tying technique limiting the model's expressiveness and optimization. In contrast, SPDM+FA maintains sample quality and achieves the best performance on most datasets. This result corroborates our discussion made in \cref{sec:inv_samp_dist}, it being sufficient to train a score-based model using regular score-matching and combine the score-based model's outputs during the inference time to ensure equivariance without compromising the model's performance.

\subsection{Equivariant Image Style Transfer Tasks}
We compare the performance of Pix2Pix \citep{IsolaZZE2017}, ${\rm I^2SB}$ \citep{LiuVHTNA2023}, and our SPDM+FA models on the tasks of LYSTO image denosing and style-transfer converting a CT to PET scan image, as described at the start of the section. The results are shown in \cref{fig:bridge_models_samples}. In addition to FID for measuring sample quality and \(\Delta \hat\xv_0\) for measuring the model's equivariance, we report the \(L_1\) loss between the output and ground truth for local structure similarity and SSIM \citep{WangBSS2004} for global feature alignment. 

\textbf{Results.} SPDM+FA achieves nearly perfect group equivariance for both tasks, the best scores under most measures, and close-to-the-best scores in the rest. A qualitative sample comparison is provided in \cref{fig:bridge_models_samples}. Two images are generated from each model: the left conditioned on the original CT and the right an x-flipped CT that is x-flipped, reverted, again after generation. Perfect equivariance would result in two identical images. The bottom row zooms in on a selected area, showing the difference between the two samples. SPDM+FA's black patch indicates perfect equivariance, while the I2SB and Pixel2Pixel's, with white pixels, indicate imperfect equivariance. 

Beyond its perfect equivariance and the high image quality reflected in SPDM+FA’s low FID score (\cref{tab:bridge_models}), the visualizations in \cref{fig:bridge_models_samples} further show that our method generates PET images that closely match the ground truth, particularly in preserving bone shape. In contrast, other methods suffer from lower reconstruction accuracy and significant bone shape distortions when the input CT is flipped. Such issues could lead to misdiagnoses and unreliable clinical conclusions.	

These findings validate the effectiveness of the techniques introduced in \cref{sec:model-architectures} and reinforce our framework’s applicability to diffusion bridges, guiding the development of equivariant bridge models.

\section{DISCUSSION}

\label{sec:discussion}
In this paper, we investigated structure-preserving diffusion models (SPDM), an extended diffusion framework that accounts for invariants in the diffusion process. This extension allows us to effectively characterize the structure-preserving properties of a broader range of diffusion processes, including diffusion bridges used by DDBM \citep{ZhouLKE2024}. We presented a characterization of the drift terms that achieve a structure-preserving process, complementing existing work that primarily focuses on sufficient conditions. Based on the developed theoretical insights, we discussed several effective techniques to ensure the invariant distributions of samples and the equivariant properties of diffusion bridges. Empirical results on image generation and style-transfer tasks support our theoretical claims and demonstrate the effectiveness of the proposed methods in achieving structure-preserving sampling while maintaining high image quality.


\section*{Acknowledgements}

We would like to thank, Neel Dey, for providing the pre-processed ANHIR dataset used in \cite{DeyCG2021} and for clarifying some details around how the computation of FID was carried out within the forgoing paper. We also thank the reviewers and the area chair for the constructive comments. We gratefully acknowledge funding support from NSERC and the Canada CIFAR AI Chairs program. Resources used in preparing this research were provided, in part, by the Province of Ontario, the Government of Canada through CIFAR, and companies sponsoring the Vector Institute.
\FloatBarrier

\bibliography{invariance}
\bibliographystyle{apalike}

\section*{Checklist}
\label{sec:checklist}




\begin{enumerate}
 \item For all models and algorithms presented, check if you include:
 \begin{enumerate}
   \item A clear description of the mathematical setting, assumptions, algorithm, and/or model. [Yes]. All assumptions for the given theorem are clearly stated if needed, with more detailed derivations provided in the paper appendix. All dataset are described in detail, and model training paramters are provided in the appendix.
   \item An analysis of the properties and complexity (time, space, sample size) of any algorithm. [No] The primary motivation for the given experiments is in validating the theoretical guarantees from our primary proposition that characterizes diffusion model structure preserving for isometry groups. 
   \item (Optional) Anonymized source code, with specification of all dependencies, including external libraries. [No] We will provide anonymized source code if requested but not otherwise.
 \end{enumerate}

 \item For any theoretical claim, check if you include:
 \begin{enumerate}
   \item Statements of the full set of assumptions of all theoretical results. [Yes] Detailed proofs are provided in the appendix section.
   \item Complete proofs of all theoretical results. [Yes] Detailed proofs are provided in the appendix section.
   \item Clear explanations of any assumptions. [Yes] These are provided when necessary within the main body of the paper. All assumptions are stated explicitly in the appendix.     
 \end{enumerate}

 \item For all figures and tables that present empirical results, check if you include:
 \begin{enumerate}
   \item The code, data, and instructions needed to reproduce the main experimental results (either in the supplemental material or as a URL). [No] We will provide anonymized source code if requested but not otherwise. One of the medical imaging datasets used is under restricted licence due to privacy concerns, so we are not able to provide any data relating to this dataset such as processed data or model checkpoints.
   \item All the training details (e.g., data splits, hyperparameters, how they were chosen). [Yes] All key training parameters for the various models are provided in the appendix of the paper.
         \item A clear definition of the specific measure or statistics and error bars (e.g., with respect to the random seed after running experiments multiple times). [Yes] All custom measures are defined within the main text or a citation is provided for clarity. Additional details for some of the measures used in benchmarking are outlined in the paper Appendix.
         \item A description of the computing infrastructure used. (e.g., type of GPUs, internal cluster, or cloud provider). [Yes] We briefly mention the computing hardware used to train all the models mentioned in the paper in the introductory paragraph of the Empirical study section.
 \end{enumerate}

 \item If you are using existing assets (e.g., code, data, models) or curating/releasing new assets, check if you include:
 \begin{enumerate}
   \item Citations of the creator if your work uses existing assets. [Yes] We cite the work of any authors we make use of (e.g., code from existing machine learning models, dataset curators, etc.)
   \item The license information of the assets, if applicable. [Yes] One of the dataset used is under restricted licence due to primary concerns. This is mentioned in the main body of the text when the dataset is introduced.
   \item New assets either in the supplemental material or as a URL, if applicable. [Yes] All necessary information for replicating the dataset used in the experiments is either provided either in the main paper or in the supplemental sections.
   \item Information about consent from data providers/curators. [Yes] In so far as we mention necessary licences where required. 
   \item Discussion of sensible content if applicable, e.g., personally identifiable information or offensive content. [Yes] As mentioned above in the checklist, and in the main body of the paper, one of the dataset used is under restricted license due to patient privacy concerns. 
 \end{enumerate}

 \item If you used crowdsourcing or conducted research with human subjects, check if you include:
 \begin{enumerate}
   \item The full text of instructions given to participants and screenshots. [Not Applicable]
   \item Descriptions of potential participant risks, with links to Institutional Review Board (IRB) approvals if applicable. [Not Applicable]
   \item The estimated hourly wage paid to participants and the total amount spent on participant compensation. [Not Applicable]
 \end{enumerate}

\end{enumerate}

\onecolumn
\newpage
\appendix

\section{COMMON DIFFUSION PROCESS COEFFICIENTS}
\label{appx:diffusion_coefficients}

Here we provide a overview of some commonly used drift and diffusion coefficients, namely, those corresponding to he variance preserving~(VP, \citealt{HoJA2020, SongME2021}) and variance exploding~(VE, \citealt{SongDKKEP2021}) SDEs.

\begin{table}[!h]
    \centering
    \caption{Choices of $\uv(\xv, t)$ and $g(\xv)$ where $\eta_t=\frac{\alpha^2_t}{\sigma_t^2}$ \citep{ZhouLKE2024}.}
    \resizebox{1\linewidth}{!}{%
    \begin{tabular}{@{}cccccc@{}}
    \toprule
    SDE & $\uv(\xv, t)$                               & $g(t)^2$                                             & $p(\xv_t\vert \xv_0)$               & $\nabla_{\xv_t} \log p (\xv_T \vert \xv_t)$            &  $p_t(\xv_t \vert \xv_0, \xv_T)$                                                               \\ \midrule
    VP  & $ \frac{\diff \log \alpha_t}{\diff t}~\xv $ & $\frac{\diff\sigma_t^2}{\diff t}-\frac{\diff \log \alpha^2_t}{\diff t}\sigma_t^2$      & $\Nc(\alpha_t \xv_0, \sigma_t^2 \Iv)$ & $\frac{(\alpha_t / \alpha_T)\xv_T - \xv_t }{\sigma_t^2 (\eta_t / \eta_T - 1)}$ & $\Nc\Big(\frac{\eta_T}{\eta_t}\frac{\alpha_t}{\alpha_T} \xv_T + \alpha_t \xv_0(1-\frac{\eta_T}{\eta_t}), \sigma_t^2 (1 -\frac{\eta_T}{\eta_t})\Big)$
    \\ \midrule
    VE  & $\zero$                                     & $\frac{\diff \sigma_t^2}{d t}$ & $\Nc(\xv_0, \sigma_t^2 \Iv)$          & $\frac{\xv_T - \xv_t}{\sigma_T^2 - \sigma_t^2}$         & $\Nc\Big(\frac{\sigma_t^2}{\sigma_T^2} \xv_T + (1- \frac{\sigma_t^2}{\sigma_T^2})\xv_0 , \sigma_t^2 (1- \frac{\sigma^2_t}{\sigma^2_T})\Big)$                                            \\ \bottomrule
    \end{tabular}
        }
    \label{tb:fg_selection}
\end{table}

\section{DERIVATION DETAILS OF THE THEORETICAL RESULTS}
\label{appx:proofs}


In this section, we provide detailed derivations of our theoretical results. For conciseness, we complete most of the proofs in measure theory notation and show their equivalence to those presented in the main text.

In \cref{appx:isometry}, we demonstrate that the isometry assumption on group operators results in their linearity. In \cref{appx:liouville}, we briefly review the Liouville equations, which play a crucial role in characterizing the distribution evolution of particles driven by an ODE drift. In \cref{appx:distr_pres_drift}, we discuss a special family of ODE drifts that preserve distributions, characterizing the equivalence of various drifts by inducing the same evolution of \(p_t\). This result helps derive the equivalent conditions on drifts to achieve structure-preserving ODE and SDE processes. \cref{appx:SP_ODE} discusses the structure-preserving conditions for ODE processes, and we extend the results to SDE processes in \cref{appx:SPDM}.

\subsection{Isometries}
\label{appx:isometry}
The groups $\Gc$ involved in our discussions from \cref{sec:group-invariance} are assumed to consist of isometries $\kappa$ satisfying $\|\kappa \xv\| = \|\xv\|$. Then, for $\kappa \in \Gc$ and $\xv, \yv \in \Rb^d$, we have $\| \kappa \xv - \kappa \yv \|_2 = \|\xv- \yv\|_2$. Since $\kappa \in \Gc$ is bijective, by the Mazur–Ulam theorem \citep{Nica2012}, $\kappa$ is affine and thus can be written as 
\begin{align}
    \kappa(\xv) = A \xv + \bv
\end{align}
for some $A_\kappa \in \Rb^{d \times d}$ and $\bv_\kappa \in \Rb^d$. Besides, $A_\kappa$ is orthogonal:
\begin{lemma}
\label{lem:iso_jacob_orth}
    If $\kappa(\xv) = A_\kappa \xv + \bv_\kappa$ is an isometry, then $A_\kappa^\top A_\kappa = \Iv$.
\end{lemma}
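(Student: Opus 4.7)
The plan is to reduce the claim to a statement about the quadratic form associated with $A_\kappa^\top A_\kappa$ and then use polarization. Since $\kappa(\xv) = A_\kappa \xv + \bv_\kappa$ is an isometry, for every $\xv,\yv \in \Rb^d$ we have
\begin{equation*}
\|\kappa(\xv) - \kappa(\yv)\|_2 = \|A_\kappa(\xv-\yv)\|_2 = \|\xv - \yv\|_2.
\end{equation*}
Letting $\zv = \xv - \yv$ range over all of $\Rb^d$, the first step is to record that $\|A_\kappa \zv\|_2 = \|\zv\|_2$ for every $\zv \in \Rb^d$.

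Squaring this identity gives $\zv^\top (A_\kappa^\top A_\kappa - \Iv)\zv = 0$ for all $\zv \in \Rb^d$. The second step is to argue that a symmetric matrix whose associated quadratic form vanishes identically must itself be zero; the cleanest route is polarization, applying the identity to $\zv = \uv + \wv$ and $\zv = \uv - \wv$ and subtracting to obtain $\uv^\top(A_\kappa^\top A_\kappa - \Iv)\wv = 0$ for all $\uv,\wv$, from which $A_\kappa^\top A_\kappa = \Iv$ follows by choosing $\uv,\wv$ to range over the standard basis. There is no real obstacle here; the only subtlety is noticing that $A_\kappa^\top A_\kappa - \Iv$ is automatically symmetric, which is what legitimizes the polarization argument. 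This completes the proof.
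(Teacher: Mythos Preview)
Your proof is correct and follows essentially the same route as the paper: both arguments first reduce the isometry property to $\|A_\kappa \zv\| = \|\zv\|$ for all $\zv$, and then invoke polarization to upgrade norm preservation to inner-product preservation, yielding $A_\kappa^\top A_\kappa = \Iv$. The only cosmetic difference is that the paper applies the polarization identity directly to $\langle A_\kappa \xv, A_\kappa \yv\rangle$, whereas you phrase it in terms of the vanishing quadratic form $\zv^\top(A_\kappa^\top A_\kappa - \Iv)\zv$; the underlying idea is identical.
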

\begin{proof}
    As $\kappa$ is an isometry, then for $\xv, \yv \in \Rb^d$,
    \begin{align}
        \|A_\kappa\xv -A_\kappa\yv\| = \|\kappa \xv - \kappa \yv\| = \|\xv -\yv\|.
    \end{align}
    In addition,
    \begin{align}
        \inner{A_\kappa\xv}{A_\kappa\yv} = \frac{1}{4} \left[ \|A_\kappa\xv - A_\kappa(-\yv) \|^2 - \| A_\kappa\xv - A_\kappa\yv \|^2 \right] = \inner{\xv}{\yv}
    \end{align}
    That is $\inner{A_\kappa^\top A_\kappa \xv}{\yv} = \inner{\xv}{\yv}$, which implies $A_\kappa^\top A_\kappa = \Iv$. 
\end{proof}
\begin{remark}
\label{rmk:isometry_jacob}
    \Cref{lem:iso_jacob_orth} suggests that $D \kappa (\xv) = A_\kappa$ for all $\xv \in \Rb^d$. 
\end{remark}
In addition, since $\|\kappa \xv\|_2 = \|\xv\|_2$, we have $\|A_\kappa \xv + \bv\| = \|\xv\|$ for all $\xv$. Setting $A_\kappa \xv = -\bv$ yields $\|\bv\| = 0$, or equivalently, $\bv = \zero$. 

\fbox{\begin{minipage}{\textwidth}
Therefore, for all the group operators $\kappa$ appearing in our discussion, we can write:
\begin{align}
    \kappa \xv = A_\kappa \xv
\end{align}
for some orthogonal $A_\kappa \in \Rb^{d \times d}$. 
\end{minipage}}

The following lemma can significantly simplify the discussion on the geometric properties of diffusion processes in \cref{appx:liouville} and \cref{appx:SPDM}:
\begin{lemma}
    \label{lem:comp_supp_inv_under_kappa}
    Let $C_c^\infty(\Rb^d)$ be the set of compactly supported functions. Then for $\kappa\in \Gc$,
    \begin{align}
        \{\phi \circ \kappa \vert \phi \in C_c^\infty(\Rb^d)\} = C_c^\infty(\Rb^d)
    \end{align}
\end{lemma}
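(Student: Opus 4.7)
The plan is to prove the set equality by showing both inclusions, leveraging the fact established in the excerpt that any $\kappa \in \Gc$ acts as $\kappa\xv = A_\kappa \xv$ with $A_\kappa$ orthogonal. In particular, $\kappa$ is a linear bijection on $\Rb^d$, hence a $C^\infty$-diffeomorphism whose inverse $\kappa^{-1}$ (given by $A_\kappa^\top$) is also an isometry and therefore also a member of $\Gc$ (since $\Gc$ is a group).

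For the forward inclusion ($\subseteq$), I would fix $\phi \in C_c^\infty(\Rb^d)$ and verify that $\phi \circ \kappa \in C_c^\infty(\Rb^d)$. Smoothness follows from the chain rule: $\phi$ is $C^\infty$ and $\kappa$ is linear, hence $C^\infty$. For compact support, I would note that $\mathrm{supp}(\phi \circ \kappa) = \kappa^{-1}(\mathrm{supp}(\phi))$; since $\kappa^{-1}$ is continuous and $\mathrm{supp}(\phi)$ is compact, its preimage is compact as well.

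For the reverse inclusion ($\supseteq$), given any $\psi \in C_c^\infty(\Rb^d)$, I would define $\phi := \psi \circ \kappa^{-1}$. By the argument just given, applied to $\kappa^{-1} \in \Gc$ in place of $\kappa$, we have $\phi \in C_c^\infty(\Rb^d)$. Then $\phi \circ \kappa = \psi \circ \kappa^{-1} \circ \kappa = \psi$, witnessing $\psi \in \{\phi \circ \kappa : \phi \in C_c^\infty(\Rb^d)\}$.

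There is no real obstacle here; the only subtlety worth stating explicitly is that $\Gc$ being a group of isometries fixing zero guarantees $\kappa^{-1} \in \Gc$, which in turn guarantees $\kappa^{-1}$ is continuous (in fact linear), so the preimage-of-compact-is-compact argument applies in both directions symmetrically.
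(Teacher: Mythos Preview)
Your proposal is correct and essentially identical to the paper's own proof: both establish the two inclusions by noting that composition with the smooth bijection $\kappa$ (or $\kappa^{-1}$) preserves smoothness via the chain rule and preserves compact support because $\kappa^{-1}$ is continuous, and both obtain the reverse inclusion by writing an arbitrary $\psi$ as $(\psi\circ\kappa^{-1})\circ\kappa$. The only cosmetic remark is that your phrase ``its preimage is compact'' would be cleaner as ``its image under $\kappa^{-1}$ is compact,'' since the relevant fact is that continuous images of compact sets are compact.
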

\begin{proof}
    If $\phi \in C_c^\infty(\Rb^d)$ has a compact support $C$, then $\phi \circ \kappa$ has a support $\kappa^{-1} C$, which is also compact because $\kappa^{-1} \in \Gc$ is also affine (thus continuous) and a continuous image of a compact set is compact. Moreover, since $\phi$ is infinitely differentiable, so is $\phi \circ \kappa$. Thus, $\{\phi \circ \kappa \vert \phi \in C_c^\infty(\Rb^d)\} \subseteq C_c^\infty(\Rb^d)$. In addition, for $\psi \in C_c^\infty(\Rb^d)$, we have $\phi = \psi \circ \kappa^{-1} \in C_c^\infty(\Rb^d)$ such that $\phi \circ \kappa = \psi$. Hence, $C_c^\infty(\Rb^d) \subseteq \{\phi \circ \kappa \vert \phi \in C_c^\infty(\Rb^d)\}$.
\end{proof}

\subsection{Liouville Equation}
\label{appx:liouville}

Our proof relies on the Liouville equation in measure theory notation. We provide an intuitive and easy-to-follow proof here and show its equivalence to the popular version in the probability density notation in \Cref{rmk:eqv_liouville_density}. 

Consider $N$ non-interacting particles moving according to a deterministic ODE in $\Rb^d$:
\begin{align}
	\diff \xv_t = \uv(\xv_t, t) \diff t.  \label{eq:appx:ode_process}
\end{align}
Then their distribution is characterized by a measure $\mu^{(N)}_t$ such that for any compactly supported function $\phi \in C_c^\infty(\Rb^d)$ we have
\begin{align}
	\int \phi(\xv) \diff \mu^{N}_t(\xv) = \frac{1}{N}\sum_{i = 1}^{N} \phi(\xv^i_t). 
\end{align}
Then 
\begin{align}
	\frac{\partial}{\partial t}\int \phi(\xv) \diff \mu^{N}_t(\xv) 
	&= \frac{1}{N} \frac{\diff}{\diff t}\sum_{i=1}^N \phi(\xv^i_t)  
	= \frac{1}{N} \sum_{i=1}^N \nabla \phi(\xv^i_t) \cdot \uv(\xv_t^i, t)\\
	 &= \int \nabla \phi(\xv) \cdot \uv(\xv, t) \diff \mu^{N}_t(\xv). 
\end{align}
Then if we suppose the initial distribution 
\begin{align}
	\mu^{N}_0 (\xv, 0) \rightarrow^\star \mu_0(\xv)~\mbox{as}~N\rightarrow \infty
\end{align}
in a sense that $\int\phi(\xv) \diff \mu_0(\xv) \rightarrow \int\phi(\xv) \diff \mu_0(\xv)$ for any $\phi \in C_c^\infty(\Rb^d)$. Then we can establish the limit $\mu_t^N(\xv) \rightarrow^\star \mu_t(\xv)$ and $\mu_t$ satisfies 
\begin{align}
\label{eq:liouville_meas}
	\frac{\partial}{\partial t}\int \phi(\xv) \diff \mu_t(\xv) = \int \nabla \phi(\xv) \cdot \uv(\xv, t) \diff \mu_t(\xv). 
\end{align}

Notably, the setting we consider in the main text assume that the drift could optionally depend on some additional (fixed) term $\yv$ such that
\begin{align}
    \diff \xv_t = \fv(\xv_t, \yv, t) \diff t,
\end{align}
where $\xv_t \in \Rb^m$ and $\yv \in \Rb^n$ with $m > 0$ and $n \geq 0$.\footnote{We use $n = 0$ to indicate the case that $\fv$ does not depend on $\yv$. Unless otherwise stated, we will continue to use this convention.}
In this case, the process can be rewritten as
\begin{align}
    \diff \begin{bmatrix}
        \xv_t \\ 
        \yv
    \end{bmatrix} = \begin{bmatrix}
        \fv(\xv_t, \yv, t)\\
        \zero
    \end{bmatrix} \diff t = \uv\big([\xv_t, \yv]^\top, t \big) \diff t.
\end{align}
Applying \cref{eq:liouville_meas}, we obtain
\begin{equation}
\label{eq:liouville_meas_cond}
\boxed{
\frac{\partial}{\partial t} \int \phi(\xv, \yv) \diff \mu_t(\xv \vert \yv) =  \int \hspace{-0.2em}\nabla_1  \phi(\xv, \yv) \cdot \fv(\xv_t, \yv, t) \diff \mu_t(\xv \vert \yv) \qquad \emph{(Liouville eq. meas.)}
}
\end{equation}
where $\nabla_1 \psi (\xv, \yv, \ldots) := \frac{\partial \psi (\xv, \yv, \ldots)}{\partial \xv}$ denote the gradient with respect to the first argument.

\begin{remark}
\label{rmk:eqv_liouville_density}
Let $\lambda$ denote the Lebesgue measure. When the probability measure $\mu_t(\xv \vert \yv)$ has density $p_t(\xv \vert \yv) \in C^1(\Rb^m \times \Rb^n \times [0,T])$ with respect to $\xv$, we have 
\begin{align*}
&~~~~~~\frac{\partial}{\partial t}\int_c \phi(\xv, \yv) p_t(\xv \vert \yv) \diff \lambda(\xv) = \frac{\partial}{\partial t}\int_c \phi(\xv, \yv) \diff \mu_t(\xv \vert \yv)\\
	&= \int_c \nabla_1 \phi(\xv, \yv) \cdot \fv(\xv, \yv, t) \diff \mu_t(\xv\vert \yv) = \int_c \nabla_1 \phi(\xv, \yv) \cdot \fv(\xv, \yv, t) p_t(\xv
 \vert \yv) \diff \lambda(\xv) \\
	&= [ p_t(\xv \vert \yv)\fv(\xv, \yv, t)\cdot\phi(\xv, \yv)]_{\partial  c} - \int_c \phi(\xv, \yv) ~ \nabla_\xv \cdot\big(p_t(\xv\vert \yv) \fv(\xv, \yv, t)\big) \diff \lambda(\xv) \\
	&=  - \int_c \phi(\xv, \yv)~ \nabla_\xv \cdot \big(p_t(\xv\vert \yv) \fv(\xv, \yv, t) \big) \diff \lambda(\xv). 
\end{align*}
As this holds for all $\phi \in C_c^\infty(\Rb^{m+n})$, we obtain the regular Liouville equation \citep{Oksendal2003, Ehrendorfer2006}:
\begin{equation}
\label{eq:liouville_density}
\boxed{
	\frac{\partial}{\partial t}p_t(\xv \vert \yv) = - \nabla_\xv \cdot\big(p_t(\xv\vert \yv) \fv(\xv, \yv, t) \big). \qquad\emph{(Liouville eq. density)}
 }
\end{equation}
\vspace{1em}
\end{remark}

\subsection{Distribution-preserving Drifts.}
\label{appx:distr_pres_drift}
While a zero drift implies $p_t = p_0$ for all $t > 0$, the converse is not necessarily true:
\begin{example}
\label{ex:nonzero_drift_preserve_dist}
    Let $p_0$ be the density of a spherical Gaussian $\Nc(\zero, \Iv)$ in $\Rb^2$. For $\fv(\xv, t) = [y, -x]^{\top}$, by the Liouville equation, at $t = 0$
    \begin{align}
        \frac{\partial}{\partial t} p_t(x , y) &= - \nabla \cdot \Big[\frac{1}{Z} \exp(-\frac{x^2 + y^2}{2}) \,[y, -x]^\top\Big]\\
        &= \frac{\partial}{\partial x}\left[\frac{1}{Z} \exp(-\frac{x^2 + y^2}{2}) y \right] - \frac{\partial}{\partial y}\left[\frac{1}{Z} \exp(-\frac{x^2 + y^2}{2}) x \right] \\
        &= \frac{1}{Z}\left[-\exp(-\frac{x^2 + y^2}{2})xy + \exp(-\frac{x^2 + y^2}{2})xy \right] = 0.
    \end{align}
    As a result, $\fv$ does not change $p_0$, although it is not zero. 
\end{example}
In general, 
\begin{lemma}
   Given a measure $\mu$, drift $\fv$ does not change the distribution if for all $\phi \in  C_c^\infty(\Rb^{d})$
\begin{align}
  0 = \int \nabla \phi(\xv) \cdot \fv(\xv, t) \diff \mu(\xv)  \label{eq:def_dist_prev_drift}
\end{align} 
for all $\xv$ and $t$. We use $[\zero]_{\mu}(\xv)$ to denote the set of drifts that do not alter distribution $\mu$. That is, if $\fv$ satisfies \cref{eq:def_dist_prev_drift}, we have $\fv \in [\zero]_{\mu}$.
\end{lemma}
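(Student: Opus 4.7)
The plan is to invoke the measure-form Liouville equation \cref{eq:liouville_meas} and argue that the constant trajectory $\mu_t \equiv \mu$ is the evolution of $\mu_0 = \mu$ under the ODE $\diff \xv_t = \fv(\xv_t, t)\diff t$. The hypothesis \cref{eq:def_dist_prev_drift} makes the verification immediate, so the substantive content of the proof is simply checking that the ansatz solves the PDE and then invoking uniqueness to conclude that no other trajectory can emanate from the same initial datum.

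Concretely, I would recall from the preceding subsection that any measure-valued trajectory $\mu_t$ describing the pushforward of $\mu_0$ along the flow of $\fv$ satisfies
$$\frac{\partial}{\partial t}\int \phi(\xv)\,\diff \mu_t(\xv) \;=\; \int \nabla \phi(\xv)\cdot \fv(\xv, t)\,\diff \mu_t(\xv), \qquad \phi\in C_c^\infty(\Rb^d).$$
Substituting the candidate $\mu_t = \mu$, the left-hand side vanishes because $\int \phi\,\diff\mu$ is independent of $t$, and the right-hand side vanishes at every $t$ by the hypothesis \cref{eq:def_dist_prev_drift}. Thus $\mu_t \equiv \mu$ is a consistent solution that matches the initial datum $\mu_0 = \mu$. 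To conclude that the evolved measure equals $\mu$, I would appeal to the standard uniqueness result for the continuity/Liouville equation: under mild regularity on $\fv$ the flow map $\Phi_t$ is well-defined and $(\Phi_t)_\# \mu_0$ is the unique solution, so comparing with the constant ansatz yields $(\Phi_t)_\# \mu = \mu$, i.e.\ $\fv \in [\zero]_\mu$.

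The only non-trivial step I anticipate is the uniqueness/continuation argument, which the preceding subsection treats only informally. An elementary alternative that stays entirely within the formalism of the paper is a bootstrap: let $S = \{t \geq 0 : \mu_t = \mu\}$. Then $0 \in S$, and the computation above shows that for any $t \in S$ and every $\phi \in C_c^\infty(\Rb^d)$ the time derivative of $s\mapsto \int \phi\,\diff \mu_s$ at $s=t$ is zero, so the pairings stay constant on a neighborhood of $t$; combined with weak continuity of $s\mapsto \mu_s$ and the fact that $C_c^\infty(\Rb^d)$ is measure-determining, $S$ is both open and closed in $[0,\infty)$, hence $S = [0,\infty)$ and $\fv$ preserves $\mu$ for all time.
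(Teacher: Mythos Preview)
Your proposal is correct and follows the same route as the paper: both simply invoke the Liouville equation \eqref{eq:liouville_meas} (or \eqref{eq:liouville_meas_cond}) and observe that the hypothesis \eqref{eq:def_dist_prev_drift} makes the right-hand side vanish, so $\mu_t\equiv\mu$ solves the continuity equation. The paper's proof is a one-liner that leaves the uniqueness step implicit, whereas you spell it out; your bootstrap alternative is slightly shaky as written (vanishing derivative at a single $t\in S$ does not by itself give constancy on a neighborhood), but your primary argument via well-posedness of the flow is exactly what the paper intends.
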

\begin{proof}
    This is an immedate result of \cref{eq:liouville_meas_cond} by setting the left-hand side zero. 
\end{proof}
\begin{remark}
For any $\mu$, $\zero \in [\zero]_{\mu}$.  
\end{remark}
\begin{remark}
If $\fv, \gv \in [\zero]_{\mu}$, then $\alpha \fv + \beta \gv \in [\zero]_{\mu}$, for $\alpha, \beta \in \Rb$, \ie, $[\zero]_{\mu}$ is a vector space. 
\end{remark}
\begin{remark}
    In the main text, we use the notation \([\zero]_p\) instead of \([\zero]_\mu\) to represent distribution-preserving drifts that maintain a distribution with density \(p\), which corresponds to the distribution measure \(\mu\).
\end{remark}

\subsection{Structural Preserving ODE Processes}
\label{appx:SP_ODE}
In this section, we discuss the sufficient and necessary condition of structurally preserved ODE processes. 
Here, we consider ODE process:
\begin{align}
    \diff \xv_t = \fv(\xv_t, \yv, t) \diff t \label{eq:ode_process_general}
\end{align}
with $\xv_t \in \Rb^m$, and $\yv \in \Rb^n$ denote additional conditions of the process. Here, we assume $m > 0$ and $n \geq 0$, where $n = 0$ denote the case when $\fv$ does not depend on $\yv$. We note that for a similar setting with discrete time step and drift $\fv$ that does not depend on $\yv$, a sufficient condition on $\Gc$-invariance of $\mu_t$ for $t \geq 0$ has been discussed by \cite{PapamakariosNRDML2021} and \cite{KohlerKN2020}.  

Let $\mu_t(\xv_t \vert \yv)$ be the probability measure of $\xv_t$ induced by the ODE process \eqref{eq:ode_process_general} conditioned on $\yv$. Let $\Gv = \{\kappav = (\kappa_1, \kappa_2) | \kappa_1: \Rb^m \rightarrow \Rb^m,  \kappa_2: \Rb^n \rightarrow \Rb^n\}$ be a group of isometries defined in $\Rb^{m+n}$ such that $\kappav (\xv, \yv) = (\kappa_1 \xv, \kappa_2 \yv)$. It is easy to see that the sets of $\kappa_1$ and $\kappa_2$ are also groups of isometries. We will respectively denote them as $\Gc_1$ and $\Gc_2$. In addition, by \cref{lem:iso_jacob_orth}, we have
\begin{align}
    \kappav(\xv, \yv) = 
    \begin{bmatrix}
        A_{\kappa_1} & \zero\\
        \zero & A_{\kappa_2}
    \end{bmatrix}
    \begin{bmatrix}
        \xv \\
        \yv
    \end{bmatrix}
    +
    \begin{bmatrix}
        \bv_{\kappa_1}\\
        \bv_{\kappa_2}
    \end{bmatrix},
\end{align}
where $A_{\kappa_1} \in \Rb^{m\times m}$ and $A_{\kappa_2} \in \Rb^{n\times n}$ are orthogonal. 

Furthermore, by \cref{rmk:isometry_jacob}, we have
\begin{align}
    D\kappav = A_{\kappav} = 
    \begin{bmatrix}
        A_{\kappa_1} & \zero\\
        \zero & A_{\kappa_2}
    \end{bmatrix}.
\end{align}

We say $\mu_t(\xv_t \vert \yv)$ is \emph{$\Gv$-invariant} if for all $\kappav \in \Gv$, $\mu_t(\kappa_1\xv_t|\kappa_2 \yv) = \mu_t(\xv_t| \yv)$. \cref{lem:inv_prob_density} shows that this definition is equivalent to the one given in \cref{sec:group-invariance}.

\begin{lemma}
\label{lem:inv_prob_density}
Assume $\mu(\cdot \vert \cdot )$ has density $p(\xv \vert \yv)$. Then
$\mu(\cdot \vert \cdot )$ is $\Gv$-invariant if and only if the density $p(\xv \vert \yv) = p(\kappa_1 \xv \vert \kappa_2 \yv)$ for all $\kappav \in \Gv$, $\xv \in \Rb^m$ and $\yv \in \Rb^n$. 
\end{lemma}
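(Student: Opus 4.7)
The plan is to unwind the measure-theoretic $\Gv$-invariance condition into an integral identity against test functions, apply the change-of-variables formula using that $\kappa_1$ is a linear isometry with unit Jacobian determinant, and match it term-by-term against the density-level condition.

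First, I would formalize the equality ``$\mu(\kappa_1 \xv \vert \kappa_2 \yv) = \mu(\xv \vert \yv)$'' as the pushforward identity $(\kappa_1)_{\sharp}\, \mu(\cdot \vert \kappa_2 \yv) = \mu(\cdot \vert \yv)$, equivalently the test-function statement that for every $\phi \in C_c^\infty(\Rb^m)$ and every $\yv \in \Rb^n$,
\begin{equation*}
    \int \phi(\kappa_1 \xv) \diff \mu(\xv \vert \kappa_2 \yv) = \int \phi(\xv) \diff \mu(\xv \vert \yv).
\end{equation*}
Rewriting both sides through the density $p$ and substituting $\uv = \kappa_1 \xv$ on the left, I would invoke \cref{lem:iso_jacob_orth}: since $A_{\kappa_1}^\top A_{\kappa_1} = \Iv$, the Jacobian satisfies $\abs{\det D\kappa_1} = 1$, so $\diff \xv = \diff \uv$, and the identity reduces to
\begin{equation*}
    \int \phi(\uv)\, p(\kappa_1^{-1} \uv \vert \kappa_2 \yv) \diff \uv = \int \phi(\uv)\, p(\uv \vert \yv) \diff \uv.
\end{equation*}

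Second, because this holds for all $\phi \in C_c^\infty(\Rb^m)$, the fundamental lemma of the calculus of variations yields $p(\kappa_1^{-1} \uv \vert \kappa_2 \yv) = p(\uv \vert \yv)$ for Lebesgue-a.e.\ $\uv$. Since $\Gv$ is closed under inversion, I would replace $\kappav$ by $\kappav^{-1} \in \Gv$ and then substitute $\yv \mapsto \kappa_2 \yv$, recovering $p(\kappa_1 \xv \vert \kappa_2 \yv) = p(\xv \vert \yv)$, the density-level invariance claimed in the lemma. The converse direction is obtained by running the chain in reverse: starting from the density equality, test it against an arbitrary $\phi \in C_c^\infty(\Rb^m)$, perform the change of variables $\xv \mapsto \kappa_1^{-1} \xv$ (again with unit Jacobian), and read off the pushforward identity that defines measure-level $\Gv$-invariance.

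The main obstacle is essentially notational: being careful that ``$\mu(\kappa_1 \xv \vert \kappa_2 \yv) = \mu(\xv \vert \yv)$'' encodes a pushforward statement rather than a pointwise density identity (otherwise the lemma would be a tautology), and correctly tracking the symmetry $\kappa_1 \leftrightarrow \kappa_1^{-1}$ when the group action is moved across the integral. Once the orthogonality of $A_{\kappa_1}$ supplied by \cref{lem:iso_jacob_orth} kills the Jacobian-determinant factor, the computation itself is a single line of change of variables.
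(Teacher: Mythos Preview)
Your proposal is correct and follows essentially the same route as the paper: formalize $\Gv$-invariance through test functions, change variables, and use that $\kappa_1$ has unit Jacobian to match densities pointwise. The only cosmetic differences are that the paper tests against $\phi \in C_c^\infty(\Rb^{m+n})$ (with $\yv$ a dummy argument, since the integral is only over $\xv$) and cites \cref{lem:inv_leb} rather than \cref{lem:iso_jacob_orth} for the Jacobian, and that its change of variables lands directly on $p(\kappa_1\xv\vert\kappa_2\yv)$ without your extra step of replacing $\kappav$ by $\kappav^{-1}$; none of this changes the substance.
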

\begin{proof}
$\mu(\cdot \vert \cdot )$ is $\Gv$-invariant if and only if for all $\kappav \in \Gv$, $\phi \in C_c^\infty(\Rb^{m+n})$, 
\begin{align}
	\int \phi(\xv, \yv) \diff \muv(\xv \vert \yv) = \int \phi(\kappav^{-1} (\xv, \yv)) \diff \muv(\xv\vert \yv). 	
\end{align}
That is, 
\begin{align*}
	&\int \phi(\xv, \yv)p(\xv\vert \yv)\diff \lambda(\xv) = \int \phi(\kappav^{-1} (\xv, \yv)) \diff \muv(\xv\vert \yv) = \int \phi(\xv, \yv) \diff \mu(\kappa_1 \xv \vert \kappa_2 \yv) \\
	=& \int \phi(\xv, \yv) p(\kappa_1 \xv\vert \kappa_2 \yv)\diff \lambda(\kappa_1 \xv) \overset{(\rm \cref{lem:inv_leb})}{=}  \int \phi(\xv, \yv) p(\kappa_1 \xv\vert \kappa_2 \yv)\diff \lambda(\xv).
\end{align*}
Therefore, $p(\xv\vert \yv) = p(\kappa_1\xv\vert \kappa_2 \yv)$. Since every step is reversible, the proof is completed.
\end{proof}

Then we give the equivalent conditions on the drift terms to ensure the structure-preserving property of ODE flows. 
\begin{lemma}
\label{lem:eqv_drift}
Consider the ODE process in \eqref{eq:ode_process_general} with $\Gv$-invariant  $\mu_0(\cdot \vert \cdot)$. Then, $\mu_t$ is $\Gv$-invariant for all $t\geq 0$  if and only if
\begin{align}
    A_{\kappa_1}^\top \fv(\kappa_1 \xv, \kappa_2 \yv, t) - \fv(\xv, \yv, t) \in [\zero]_{\mu_t}.
   \label{eq:eqv_drift_ode}
\end{align}
for all $t\geq 0$, $\xv \in \Rb^m$, $\yv \in \Rb^n$ and $\kappav = (\kappa_1, \kappa_2) \in \Gv$.
\end{lemma}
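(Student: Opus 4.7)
The plan is to exploit the Liouville equation from \cref{appx:liouville} together with the density characterization of $\Gv$-invariance in \cref{lem:inv_prob_density}: $\mu_t$ is $\Gv$-invariant iff $p_t(\xv\vert\yv)=p_t(\kappa_1\xv\vert\kappa_2\yv)$ for every $\kappav\in\Gv$. The central object is the ``pulled-back'' density $p_t^{\kappav}(\xv\vert\yv):=p_t(\kappa_1\xv\vert\kappa_2\yv)$ and its associated measure $\mu_t^{\kappav}$. The first key step is a chain-rule computation showing that $p_t^{\kappav}$ itself satisfies a Liouville equation, but with a modified drift
\[
    \partial_t p_t^{\kappav}(\xv\vert\yv) \;=\; -\nabla_\xv\cdot\bigl[p_t^{\kappav}(\xv\vert\yv)\,\tilde\fv(\xv,\yv,t)\bigr],\qquad \tilde\fv(\xv,\yv,t):=A_{\kappa_1}^\top\fv(\kappa_1\xv,\kappa_2\yv,t).
\]
The crucial identity is $(\nabla_{\xv'}\cdot\gv)(A_{\kappa_1}\xv)=\nabla_\xv\cdot\bigl(A_{\kappa_1}^\top\gv(A_{\kappa_1}\xv)\bigr)$, which follows from the orthogonality of $A_{\kappa_1}$ guaranteed by \cref{lem:iso_jacob_orth} together with $|\det A_{\kappa_1}|=1$, and uses the absence of a translation part established in \cref{appx:isometry}.

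For the $(\Rightarrow)$ direction, $\Gv$-invariance of $\mu_t$ at every $t$ means $p_t\equiv p_t^{\kappav}$ as densities. Equating the two Liouville equations then yields $\nabla_\xv\cdot\bigl[p_t(\tilde\fv-\fv)\bigr]=0$, which in the measure form of \cref{eq:def_dist_prev_drift} is precisely $\tilde\fv-\fv\in[\zero]_{\mu_t}$, i.e.\ \cref{eq:eqv_drift_ode}. Equivalently, one can run the same argument directly on the measure-form Liouville equation in \cref{appx:liouville} by taking a test function $\phi$ and comparing $\partial_t\int\phi\,d\mu_t$ with $\partial_t\int\phi\circ\kappav^{-1}\,d\mu_t$, then invoking \cref{lem:comp_supp_inv_under_kappa} so that the resulting vanishing integral identity holds over all of $C_c^\infty$.

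For the $(\Leftarrow)$ direction, assume \cref{eq:eqv_drift_ode}. Then $\nabla_\xv\cdot\bigl[p_t(\tilde\fv-\fv)\bigr]=0$, so the Liouville equation for $p_t$ with drift $\fv$ can equivalently be written with $\tilde\fv$ replacing $\fv$. Hence both $p_t$ and $p_t^{\kappav}$ solve the same continuity equation, and by the assumed invariance of $\mu_0$ they share the initial condition $p_0=p_0^{\kappav}$. Uniqueness of the Liouville flow under standard regularity assumptions on $\fv$ (hence on $\tilde\fv$) then forces $p_t=p_t^{\kappav}$ for all $t\ge 0$, which is the claimed $\Gv$-invariance.

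The main obstacle is the chain-rule identity in Step~1: it is the technical heart of the proof and the place where the hypothesis that $\Gv$ consists of isometries fixing zero is critically used, collapsing the differential of $\kappav$ to the constant orthogonal matrix $A_{\kappa_1}$ and keeping the divergence operator in its original form while producing the adjoint action $A_{\kappa_1}^\top$ on $\fv$. A secondary, softer subtlety in the $(\Leftarrow)$ direction is justifying uniqueness of the measure-valued Liouville equation; this is standard under, e.g., local Lipschitz regularity of $\fv$, but worth flagging as an implicit assumption.
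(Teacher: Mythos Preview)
Your proposal is correct and follows essentially the same approach as the paper: both arguments pivot on showing that the pulled-back density (resp.\ measure) satisfies a Liouville equation with the modified drift $\tilde\fv=A_{\kappa_1}^\top\fv(\kappa_1\cdot,\kappa_2\cdot,t)$, then compare the two continuity equations in each direction. The only cosmetic difference is that you carry out the argument in density form (invoking the divergence identity $(\nabla\cdot\gv)(A_{\kappa_1}\xv)=\nabla\cdot(A_{\kappa_1}^\top\gv(A_{\kappa_1}\xv))$ directly), whereas the paper stays in the weak test-function formulation of \cref{eq:liouville_meas_cond} and uses \cref{lem:comp_supp_inv_under_kappa} to close the $(\Rightarrow)$ direction; your explicit flagging of the uniqueness assumption in the $(\Leftarrow)$ direction is something the paper leaves implicit.
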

\begin{proof}
($\Rightarrow$) Assume that $\mu_t$ is $\Gv$-invariant for all $t\geq 0$. 
For all $\phi \in C_c^\infty(\Rb^{m + n})$ and $\kappav \in \Gv$, let $\psi = \phi \circ \kappav$. We note that by \cref{lem:comp_supp_inv_under_kappa}, $\psi \in  C_c^\infty(\Rb^{m + n})$. Then, for $t \geq 0$, we have
\begin{align*}
	0 &= \frac{\diff}{\diff t} \int \phi(\xv, \yv) \diff \mu_t(\xv|\yv) - \frac{\diff}{\diff t} \int \phi(\xv, \yv) \diff \mu_t(\kappa_1^{-1}\xv\vert \kappa_2^{-1}\yv) \\
	&= \frac{\diff}{\diff t} \int \phi(\xv, \yv) \diff \mu_t(\xv|\yv) - \frac{\diff}{\diff t} \int \phi(\kappa_1\xv, \kappa_2 \yv) \diff \mu_t(\xv\vert \yv) \\
	\overset{\eqref{eq:liouville_meas_cond}}&{=} \int \nabla_1\phi(\xv, \yv)^\top \fv(\xv, \yv, t) \diff \mu_t(\xv \vert \yv) - \int \nabla_1\psi(\xv, \yv)^\top \fv(\xv, \yv, t) \diff \mu_t(\xv\vert \yv) \\
	\overset{(\rm \Gv-inv)}&{=} \int \nabla_1\phi(\kappa_1 \xv, \kappa_2 \yv)^\top \fv(\kappa_1 \xv,\kappa_2\yv, t) \diff \mu_t(\xv\vert \yv) - \int \nabla_1\psi(\xv, \yv)^\top \fv(\xv, \yv, t) \diff \mu_t(\xv\vert \yv) \\
&= \int \nabla_1\psi(\xv, \yv)^\top D \kappa_1(\xv)^\top \fv(\kappa_1 \xv,\kappa_2\yv, t) \diff \mu_t(\xv\vert \yv) - \int \nabla_1\psi(\xv, \yv)^\top \fv(\xv, \yv, t) \diff \mu_t(\xv\vert \yv) \\
&= \int \nabla_1\psi(\xv, \yv)^\top \Big(A_{\kappa_1}(\xv)^\top \fv(\kappa_1 \xv, \kappa_2  \yv, t) -  \fv(\xv, \yv, t) \Big) \diff \mu_t(\xv\vert \yv).
\end{align*}
By \cref{lem:comp_supp_inv_under_kappa}, $\psi$ can be any functions in $C_c^\infty(\Rb^{m + n})$. Thus, \cref{eq:eqv_drift_ode} follows.

\vspace{1em}

\noindent
($\Leftarrow$)
Assume \eqref{eq:eqv_drift_ode} holds and $\muv_0$ is $\Gv$-invariant. For all $\phi \in C_c^\infty(\Rb^{m + n})$ and $\kappav \in \Gv$, let $\psi = \phi \circ \kappav$. 
 Then we have
\begin{align*}
	&~~~~ \frac{\diff}{\diff t}\int \phi(\xv, \yv) \diff\muv_t(\kappa_1^{-1} \xv\vert  \kappa_2^{-1}\yv) = \frac{\diff}{\diff t} \int \phi(\kappa_1 \xv, \kappa_2 \yv) \diff \muv_t(\xv\vert \yv) = \frac{\diff}{\diff t} \int \psi(\xv, \yv) \diff \muv_t(\xv\vert \yv) \\
	\overset{\eqref{eq:liouville_meas_cond}}&{=} \int (\nabla_1\psi)(\xv, \yv)^\top \fv(\xv, \yv, t) \diff \mu_t(\xv \vert \yv)
 \overset{(\Av)}{=} \int (\nabla_1\phi)\,(\kappa_1 \xv, \kappa_2 \yv)^\top \fv(\kappa_1\xv, \kappa_2\yv, t) \diff \muv_t(\xv\vert \yv)\\
	&= \int (\nabla_1\phi)\,(\xv, \yv)^\top ~ \fv(\xv, \yv, t) \diff \muv_t(\kappa^{-1}_1\xv\vert \kappa^{-1}_2\yv), \numberthis\label{eq:eqv_drift_ode:2}
\end{align*}
where (\textbf{A}) is due to:
\begin{align*}
    0 \overset{\eqref{eq:eqv_drift_ode}}&{=} \int \nabla_1 \psi (\kappa_1 \xv, \kappa_2 \yv)^\top \Big(A_{\kappa_1}(\xv)^\top \fv(\kappa_1 \xv, \kappa_2  \yv, t) -  \fv(\xv, \yv, t) \Big) \diff \mu_t(\xv\vert \yv) \\
    &= \int \nabla_1\psi(\kappa_1 \xv, \kappa_2 \yv)^\top D \kappa_1(\xv)^\top \fv(\kappa_1 \xv,\kappa_2\yv, t) \diff \mu_t(\xv\vert \yv) - \int \nabla_1\psi(\xv, \yv)^\top \fv(\xv, \yv, t) \diff \mu_t(\xv\vert \yv) \\
    &= \int \nabla_1\phi(\kappa_1 \xv, \kappa_2 \yv)^\top \fv(\kappa_1 \xv,\kappa_2\yv, t) \diff \mu_t(\xv\vert \yv) - \int \nabla_1\psi(\xv, \yv)^\top \fv(\xv, \yv, t) \diff \mu_t(\xv\vert \yv) 
\end{align*}
Besides, we have
\begin{align}
	\frac{\diff}{\diff t}\int \phi_1( \xv, \yv) \diff\mu_t(\xv \vert \yv) \overset{\eqref{eq:liouville_meas_cond}}{=} \int \nabla\phi_1(\xv, \yv)^\top \fv(\xv, \yv, t) \diff \mu_t(\xv \vert \yv). \label{eq:eqv_drift_ode:3}
\end{align}
As $\mu_0(\xv \vert \yv)=\mu_0(\kappa_1^{-1}\xv \vert \kappa_2^{-1}\xv)$, \eqref{eq:eqv_drift_ode:2} and \eqref{eq:eqv_drift_ode:3} together suggest that $\mu_t(\xv\vert \yv)$ and $\mu_t(\kappa_1^{-1}\xv\vert \kappa^{-1}_2\yv)$ share the same Liouville's equation. Therefore, $\mu_t(\xv \vert \yv) = \mu_t(\kappa_1^{-1}\xv\vert \kappa_2^{-1} \yv)$ for all $t\geq 0$. 
\end{proof}

\subsection{Structural Preserving SDE Processes}
\label{appx:SPDM}
In this section, we assume all the measures involved have densities. We first show that Lebesgue measure is $\Gc$-invariant, where $\Gc$ is a group of isometries.
\begin{lemma}
	For all $\kappa\in \Gc$, $\det D\kappa (\xv) = \det A_\kappa = 1$ or $-1$ for all $\xv \in \Rb^d$. 
\end{lemma}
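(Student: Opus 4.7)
The plan is to chain together the structural facts about isometries already established earlier in the appendix. By the discussion following \cref{lem:iso_jacob_orth} and \cref{rmk:isometry_jacob}, every $\kappa \in \Gc$ can be written as $\kappa \xv = A_\kappa \xv$ for a matrix $A_\kappa \in \Rb^{d \times d}$ satisfying $A_\kappa^\top A_\kappa = \Iv$, and moreover $D\kappa(\xv) = A_\kappa$ for all $\xv \in \Rb^d$. So the statement $\det D\kappa(\xv) = \det A_\kappa$ is immediate from the latter identity and does not depend on $\xv$.

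It then remains to show that $\det A_\kappa \in \{+1, -1\}$. The plan is to apply the determinant to both sides of the orthogonality relation $A_\kappa^\top A_\kappa = \Iv$, using multiplicativity of $\det$ together with $\det(A_\kappa^\top) = \det(A_\kappa)$. This yields $\det(A_\kappa)^2 = \det(\Iv) = 1$, from which the conclusion follows by taking square roots in $\Rb$.

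There is essentially no obstacle here: the lemma is a one-line consequence of the previously proven orthogonality of $A_\kappa$ combined with \cref{rmk:isometry_jacob}. The only thing worth being careful about is citing the correct earlier results so the reader sees both that $D\kappa$ is constant in $\xv$ (hence the ``for all $\xv$'' part is trivial) and that orthogonality is what forces the determinant to square to $1$.
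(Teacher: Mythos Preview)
Your proposal is correct and follows essentially the same approach as the paper: both invoke \cref{lem:iso_jacob_orth} and \cref{rmk:isometry_jacob} to obtain $D\kappa(\xv)=A_\kappa$ with $A_\kappa^\top A_\kappa=\Iv$, then take determinants to conclude $(\det A_\kappa)^2=1$. Your write-up is slightly more explicit about why the ``for all $\xv$'' clause is trivial, but the argument is the same.
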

\begin{proof}
For $\kappa\in \Gc$, by \cref{lem:iso_jacob_orth} and \cref{rmk:isometry_jacob}, we have $D\kappa (\xv)^\top D\kappa (\xv) = A_\kappa^\top A_\kappa = I$. Then $\det(D\kappa (\xv))^2 = (\det A_\kappa)^2 = 1$, which implies $\det D\kappa (\xv) = \det A_\kappa = \pm 1$
\end{proof}
\begin{lemma}
\label{lem:inv_leb}
The Lebesgue measure $\lambda$ is $\Gc$-invariant.
\end{lemma}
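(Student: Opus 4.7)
The plan is to combine the two results already established earlier in this appendix: every $\kappa \in \Gc$ acts as $\kappa \xv = A_\kappa \xv$ for some orthogonal matrix $A_\kappa$ (from the discussion in \cref{appx:isometry}), and the immediately preceding lemma shows that $\lvert \det A_\kappa \rvert = 1$. Given these two facts, $\Gc$-invariance of $\lambda$ reduces to the classical statement that the Lebesgue measure is invariant under orthogonal linear transformations.

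Concretely, I would proceed as follows. First, reduce the invariance claim $\lambda(\kappa E) = \lambda(E)$ for all Borel $E \subseteq \Rb^d$ and $\kappa \in \Gc$ to the integral form
\begin{equation*}
\int_{\Rb^d} \phi(\xv) \diff \lambda(\xv) = \int_{\Rb^d} \phi(\kappa \xv) \diff \lambda(\xv)
\end{equation*}
for all $\phi \in C_c^\infty(\Rb^d)$; the extension to general measurable sets follows from standard approximation/monotone class arguments. Next, apply the change-of-variables formula to the right-hand side using the diffeomorphism $\xv \mapsto \kappa \xv = A_\kappa \xv$, whose Jacobian determinant is $\det A_\kappa$. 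The substitution $\yv = A_\kappa \xv$ gives $\diff \lambda(\yv) = \lvert \det A_\kappa \rvert \, \diff \lambda(\xv) = \diff \lambda(\xv)$ since $\lvert \det A_\kappa \rvert = 1$, so the right-hand side equals $\int \phi(\yv) \diff \lambda(\yv)$, as required.

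There is essentially no hard step here: the substantive content has been absorbed into the two preceding results (that $\kappa$ is linear orthogonal, and that orthogonal matrices have unit-modulus determinant). The only care required is invoking the change-of-variables formula at the correct level of generality (so that it applies to all $\phi \in C_c^\infty$, which is enough to characterize the measure) and pointing out that the invariance of $\lambda$ under $\kappa$ and under $\kappa^{-1}$ together give $\lambda(\kappa E) = \lambda(E)$ on all Borel sets. This lemma then serves as the technical plug-in used in the proof of \cref{lem:inv_prob_density}, justifying the manipulation $\diff \lambda(\kappa_1 \xv) = \diff \lambda(\xv)$ that appeared there.
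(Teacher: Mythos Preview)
Your proposal is correct and follows essentially the same approach as the paper: test against $\phi \in C_c^\infty(\Rb^d)$, apply the change-of-variables formula, and use $\lvert \det A_\kappa \rvert = 1$ from the preceding lemma to conclude $\int \phi(\xv)\diff\lambda(\xv) = \int \phi(\kappa\xv)\diff\lambda(\xv)$. The paper's version is slightly terser and stops at the pushforward identity $\lambda = \kappa^{\#}\lambda$ without spelling out the extension to Borel sets, but the substance is identical.
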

\begin{proof}
  For all $\phi\in C_c^\infty(\Rb^{d})$ and $\kappa \in \Gc$, we have
  \begin{align*}
    \int\phi(\xv)\diff \lambda(\xv) &= \int \phi(\kappa \xv) \diff \lambda(\kappa \xv) \overset{\rm (\cref{lem:inv_leb})}{=} \int\phi(\kappa \xv) \, \vert \det D \kappa(\xv) \vert \diff \lambda(\xv)\\
     &= \int\phi(\kappa \xv)  \diff \lambda(\xv) = \int \phi(\xv) \diff \lambda(\kappa^{-1} \xv)
  \end{align*}
Therefore, $\lambda = \kappa^{\#}\lambda$.
\end{proof}

To deal with the invariance property associated with the diffusion term, we prove a lemma similar to Lem F.4 of \citep{YimTDMDBJ2023}. The lemma basically says Laplacian is invariant with respect to isometries:
\begin{lemma}
\label{lem:inv_laplacian}
	For $\kappav \in \Gv$ and $v: \Rb^{m+n} \rightarrow \Rb$, we have
	\begin{align}
		\Delta_1 (v \circ \kappav) (\xv, \yv) = (\Delta_1 v) \circ \kappav(\xv, \yv),
	\end{align}
	where
	\begin{align}
		&(\Delta_1 u) \, (\xv, \yv) = \sum_{k=1}^m \frac{\partial^2}{\partial x_k^2} u(\xv, \yv) = (\nabla_1 \cdot \nabla_1 u) (\xv, \yv),\\
		& (\nabla_1 u) \, (\xv, \yv) = \frac{\partial u}{\partial \xv} (\xv,  \yv). 
	\end{align}
\end{lemma}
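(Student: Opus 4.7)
The plan is to reduce the statement to a direct chain-rule computation, exploiting the block-diagonal orthogonal structure of $\kappav$ established earlier in \cref{appx:isometry}. Recall from that discussion that every $\kappav = (\kappa_1,\kappa_2) \in \Gv$ acts linearly as
\[
\kappav(\xv,\yv) = (A_{\kappa_1}\xv,\; A_{\kappa_2}\yv),
\]
with $A_{\kappa_1} \in \Rb^{m\times m}$ orthogonal and $A_{\kappa_2} \in \Rb^{n\times n}$ orthogonal. Since $\Delta_1$ only differentiates in the first $m$ coordinates, the $\yv$-block is inert and only the orthogonality of $A_{\kappa_1}$ will matter.

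First, I would apply the chain rule once to obtain
\[
\nabla_1 (v \circ \kappav)(\xv,\yv) \;=\; A_{\kappa_1}^\top\, (\nabla_1 v)(A_{\kappa_1}\xv,\, A_{\kappa_2}\yv),
\]
using that $\partial \kappav / \partial \xv$ is the block $A_{\kappa_1}$. Applying the chain rule a second time gives, componentwise,
\[
\frac{\partial^2 (v\circ\kappav)}{\partial x_i\, \partial x_j}(\xv,\yv) \;=\; \sum_{k,\ell=1}^{m} (A_{\kappa_1})_{\ell i}\,(A_{\kappa_1})_{k j}\, \frac{\partial^2 v}{\partial z_\ell\, \partial z_k}\!\bigl(A_{\kappa_1}\xv,\, A_{\kappa_2}\yv\bigr).
\]

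Taking the trace $i=j$ and summing, the coefficient of $\partial^2 v/\partial z_\ell \partial z_k$ becomes $\sum_i (A_{\kappa_1})_{\ell i}(A_{\kappa_1})_{ki} = (A_{\kappa_1} A_{\kappa_1}^\top)_{\ell k}$. By \cref{lem:iso_jacob_orth}, $A_{\kappa_1}^\top A_{\kappa_1} = I$; combined with the square-matrix identity $A_{\kappa_1} A_{\kappa_1}^\top = I$, this Kronecker-collapses the double sum to $\sum_k \partial^2 v / \partial z_k^2$ evaluated at $(A_{\kappa_1}\xv, A_{\kappa_2}\yv)$, which is precisely $(\Delta_1 v)\circ \kappav(\xv,\yv)$.

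I do not anticipate any genuine obstacle: the argument is a textbook change-of-variables, and orthogonality of $A_{\kappa_1}$ is all that is needed. The only thing to be careful about is maintaining clean index bookkeeping so that the two Jacobian factors pair up correctly into $A_{\kappa_1} A_{\kappa_1}^\top$ rather than $A_{\kappa_1}^\top A_{\kappa_1}$; either way the result is $I$, but presenting it explicitly makes the invocation of orthogonality transparent. An alternative, coordinate-free phrasing would write $\Delta_1 u = \tr(\nabla_1^2 u)$ and note that conjugation by $A_{\kappa_1}$ preserves the trace, but the indexed derivation above is more self-contained and avoids introducing additional notation.
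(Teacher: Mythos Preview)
Your argument is correct. The block-diagonal linear action of $\kappav$ means the $\yv$-coordinates are inert for $\Delta_1$, and the double chain rule followed by the orthogonality identity $A_{\kappa_1}A_{\kappa_1}^\top = I$ collapses the Hessian trace exactly as you describe. There is no gap.

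The paper, however, takes a different route. Rather than a pointwise chain-rule computation, it proves the identity \emph{weakly}: it introduces the projection matrix $M = \mathrm{diag}(\Iv_m,\zero)$ so that $\Delta_1 u = \nabla\cdot(M\nabla u)$, pairs both sides against an arbitrary test function $\phi \in C_c^\infty(\Rb^{m+n})$, and integrates by parts twice, invoking the $\Gc$-invariance of Lebesgue measure (\cref{lem:inv_leb}) along the way. Equality of the integrals for all $\phi$ then yields the pointwise statement. Your direct approach is considerably more elementary and avoids the test-function machinery entirely; the paper's approach is heavier but stays within the measure-theoretic idiom used throughout the appendix and dovetails with how $\Delta_1$ later appears inside integrals in the proof of \cref{prop:eqv_drift_meas}. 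Either proof is perfectly adequate here.
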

\begin{proof}
 Let
	\begin{align}
	M = \begin{bmatrix}
			\Iv_m & \zero\\
			\zero & \zero
		\end{bmatrix}
	\end{align}
	where $\zero$ denotes a zero matrix of a proper size. Then it is easy to see
	\begin{align}
		\Delta_1 v(\xv, \yv) = \nabla \cdot \Big( M~\nabla v  (\xv, \yv) \Big). 
	\end{align}
	As a result, for all $\phi\in C_c^\infty(\Rb^{m + n})$, we have	
	\begin{align*}
		 &\int \phi(\xv, \yv) (\Delta_1 v) \circ \kappav(\xv, \yv) \diff \lambda(\xv, \yv) = \int \phi(\kappa_1^{-1}\xv, \kappa_2^{-1} \yv) ~ (\Delta_1 v)(\xv, \yv) \diff \lambda(\xv, \yv)\\
    &= \int\phi(\kappa_1^{-1}\xv, \kappa_2^{-1} \yv) ~ \nabla \cdot \Big( M~\nabla v  (\xv, \yv) \Big) \diff \lambda(\xv, \yv)\\
    &= \left[\phi(\kappa_1^{-1}\xv, \kappa_2^{-1} \yv)~M \nabla v(\kappa_1^{-1}\xv, \kappa_2^{-1} \yv)\right]_{\partial c} - \int M \nabla v(\xv, \yv) \cdot \nabla (\phi\circ \kappav^{-1}) (\xv, \yv) \diff \lambda(\xv, \yv)\\
    &= - \int M \nabla v(\xv, \yv) \cdot \Big( (\nabla \phi)(\kappav^{-1}(\xv, \yv) )^\top D \kappav^{-1}(\xv, \yv) \Big) \diff \lambda(\xv, \yv)\\
    &= - \int M \nabla v(\xv, \yv) \cdot \Big(\nabla \phi( \kappav^{-1} (\xv, \yv) )^\top A_{\kappav}^{\top} \Big) \diff \lambda(\xv, \yv )\\
    &= - \int M \nabla v( \kappa_1 \xv, \kappa_2 \yv) \cdot \Bigg(\nabla \phi(\xv, \yv)^\top A_{\kappav}^{\top} \Big) \diff \lambda(\xv, \yv)\\
    \overset{\rm (\cref{lem:inv_leb})}&{=} - \int  \nabla v( \kappa_1 \xv, \kappa_2 \yv)^\top M^\top A_{\kappav} \nabla \phi(\xv, \yv) \diff \lambda(\xv, \yv)\\
    &= - \int  \nabla v(\kappa_1 \xv, \kappa_2 \yv)^\top 
    \begin{bmatrix}
    	A_{\kappa_1} & \zero\\
    	\zero & \zero
    \end{bmatrix} \nabla \phi(\xv, \yv) \diff \lambda(\xv, \yv)\\
    &= - \int \big[\nabla_1 (v\circ \kappav)(\xv, \yv)^\top ~~ \zero \big]~\nabla \phi(\xv, \yv) \diff \lambda(\xv, \yv)\\
    &= - \int \big[\nabla_1 (v\circ \kappa)(\xv, \yv)^\top ~~ \zero \big]~\nabla \phi(\xv, \yv) \diff \lambda(\xv, \yv) + \Big[\phi(\xv, \yv)~\big[\nabla_1 (v\circ \kappav)(\xv, \yv)^\top ~~ \zero \big]\Big]_{\partial c}\\
    &=\int \phi(\xv, \yv)~~\nabla \cdot \big[\nabla_1 (v\circ \kappav)(\xv, \yv)^\top ~~ \zero \big] \diff \lambda(\xv, \yv) =\int \phi(\xv, \yv)~\Delta_1(v \circ \kappav)(\xv, \yv) \diff \lambda(\xv, \yv).
    \end{align*}
   Thus, $(\Delta_1 v) \circ \kappav (\xv, \yv) = \Delta_1(v \circ  \kappav ) (\xv, \yv)$. 
\end{proof}

\begin{lemma}
\label{lem:eqv_score_meas}
    $\mu(\cdot \vert \cdot )$ is $\Gv$-invariant if and only if
    \begin{align}
        \sv(\kappa_1\xv\vert \kappa_2 \yv) = A_{\kappa_1}~\sv(\xv\vert \yv) \label{eq:eqv_score_meas}
    \end{align}
    for all $\kappav \in \Gv$, $\xv \in \Rb^m$ and $\yv \in \Rb^n$, where $\sv(\xv \vert \yv)$ denotes the score function $\nabla_\xv\log p(\xv\vert \yv)$. 
\end{lemma}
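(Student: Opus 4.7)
The plan is to prove Lemma~\ref{lem:eqv_score_meas} by reducing it to a statement about densities via Lemma~\ref{lem:inv_prob_density} and then differentiating (for the forward direction) or integrating (for the converse). Throughout, I treat $\yv$ as a fixed parameter and work pointwise in $\xv$; the key algebraic fact I will repeatedly use is that $A_{\kappa_1}$ is orthogonal so $A_{\kappa_1}^\top A_{\kappa_1} = \Iv$, and $|\det A_{\kappa_1}|=1$.

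For the forward direction, assume $\mu(\cdot\vert\cdot)$ is $\Gv$-invariant. By Lemma~\ref{lem:inv_prob_density} this is equivalent to $p(\xv\vert\yv) = p(\kappa_1 \xv \vert \kappa_2 \yv)$ for every $\kappav \in \Gv$. Taking logarithms and differentiating both sides with respect to $\xv$, the left side is $\sv(\xv\vert\yv)$, and the right side, by the chain rule together with $D\kappa_1 = A_{\kappa_1}$ (Remark~\ref{rmk:isometry_jacob}), equals $A_{\kappa_1}^\top\, \sv(\kappa_1\xv \vert \kappa_2\yv)$. Multiplying both sides on the left by $A_{\kappa_1}$ and using $A_{\kappa_1} A_{\kappa_1}^\top = \Iv$ yields \cref{eq:eqv_score_meas}.

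For the converse, assume the equivariance of the score and define the auxiliary function $q(\xv\vert\yv) := p(\kappa_1 \xv \vert \kappa_2 \yv)$. Differentiating $\log q$ and again applying the chain rule gives
\begin{equation*}
\nabla_\xv \log q(\xv\vert\yv) = A_{\kappa_1}^\top \sv(\kappa_1\xv\vert \kappa_2\yv) = A_{\kappa_1}^\top A_{\kappa_1} \sv(\xv\vert\yv) = \sv(\xv\vert\yv),
\end{equation*}
so $\nabla_\xv \bigl(\log q - \log p\bigr)(\xv\vert\yv)=0$, meaning $q(\xv\vert\yv) = c(\yv)\, p(\xv\vert\yv)$ for some scalar $c(\yv)$ independent of $\xv$. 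To pin down $c(\yv)=1$, I integrate in $\xv$: since $|\det A_{\kappa_1}|=1$, the change of variables $\xv' = \kappa_1 \xv$ together with Lemma~\ref{lem:inv_leb} gives $\int q(\xv\vert\yv)\,\diff\lambda(\xv) = \int p(\xv'\vert \kappa_2\yv)\,\diff\lambda(\xv') = 1$, which forces $c(\yv)=1$ and hence $p(\xv\vert\yv) = p(\kappa_1\xv\vert\kappa_2\yv)$. Applying Lemma~\ref{lem:inv_prob_density} in the reverse direction then yields $\Gv$-invariance of $\mu$.

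No part of this argument looks delicate, but the one place to be careful is the converse: deducing $q=p$ from equal scores requires ruling out a $\yv$-dependent normalizing factor, which is exactly why I need the orthogonality of $A_{\kappa_1}$ (through $|\det A_{\kappa_1}|=1$) to carry out the change of variables cleanly. The rest is routine chain-rule computation combined with the standing isometry assumptions collected in \cref{appx:isometry}.
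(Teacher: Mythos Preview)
Your proof is correct and follows essentially the same route as the paper: invoke Lemma~\ref{lem:inv_prob_density} to pass to densities, then differentiate the log-density identity with the chain rule and $D\kappa_1=A_{\kappa_1}$ for the forward direction, and integrate the score identity back up for the converse. Your converse is in fact slightly more careful than the paper's: the paper writes $p(\xv\vert\yv)=p(\kappa_1\xv\vert\kappa_2\yv)+C$ and argues $C=0$ by normalization, whereas you correctly obtain a multiplicative factor $c(\yv)$ from $\nabla_\xv(\log q-\log p)=0$ and then eliminate it via the change of variables $\xv'=\kappa_1\xv$ together with $|\det A_{\kappa_1}|=1$ (Lemma~\ref{lem:inv_leb}).
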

\begin{proof}
($\Rightarrow$) By Lem~\ref{lem:inv_prob_density},  if $\mu$ is $\Gv$-invariant, its density $p(\xv \vert \yv) = p(\kappa_1\xv \vert \kappa_2 \yv)$. Taking $\log$ on both sides, followed by taking the derivative with respect to $\xv$ yields 
\begin{align}
    A_{\kappa_1}^\top~\sv(\kappa_1\xv\vert \kappa_2 \yv) = \sv(\xv\vert \yv),
\end{align}
as $D\kappa_1(\xv) = A_{\kappa_1}$.

\noindent
($\Leftarrow$) Conversely, \eqref{eq:eqv_score_meas} yields
\begin{align}
	p(\xv \vert \yv) = p(\kappa_1\xv \vert \kappa_2 \yv) + C,
\end{align}
where $C$ must be zero so that $p(\xv \vert \yv)$ and $p(\kappa_1\xv \vert \kappa_2 \yv)$ are valid densities. 
\end{proof}

Then we prove the following Lemma presented in the \cref{sec:group-invariance}.
\eqvScore*
\begin{proof}
The conditional density case is immediate given \cref{lem:inv_prob_density} and \cref{lem:eqv_score_meas} while the unconditional one is the special case that $n = 0$. 
\end{proof}

\begin{lemma}[\cite{SongDKKEP2021}]
\label{lem:fwd_ode}
Let $p_t$ be the marginal distribution of $\xv_t$ that satisfies SDE:
\begin{equation}
	\diff \xv_t = \fv(\xv_t, \yv, t)\diff t + g(t) \diff \ws_t, ~~ \xv_0\sim p_0(\xv_0 \vert \yv). 
\end{equation}
Besides, let $\sv_t(\cdot \vert \yv) = \nabla \log p_t(\cdot \vert \yv)$. Then, the ODE
\begin{align}
 \diff \xv = \tilde\fv(\xv, \yv, t)\diff t  \label{eq:reverse_ODE}
\end{align}
with
\begin{align}
    \tilde \fv (\xv, \yv, t) &= \fv(\xv, \yv,  t) - \frac{1}{2} g(t)^2 ~ \sv_t(\xv \vert \yv)  \label{eq:fwd_ODE:1}
\end{align}
also has the same marginal distribution $p_t$ for all $t \geq 0$. 
\end{lemma}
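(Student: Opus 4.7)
The plan is to invoke the Fokker--Planck equation for the SDE, rewrite the diffusion term using the score, and observe that the resulting PDE is the continuity (Liouville) equation for the ODE with drift $\tilde\fv$. Since $\yv$ is a fixed parameter in \eqref{eq:fwd_diff_process_general}, all manipulations take place for each fixed $\yv \in \Rb^n$, and we may suppress it notationally.

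First, I would write the Fokker--Planck equation satisfied by the marginal density $p_t(\xv \vert \yv)$ of the SDE:
\begin{equation*}
    \partial_t p_t(\xv \vert \yv) = -\nabla_\xv \cdot \bigl(p_t(\xv \vert \yv)\,\fv(\xv, \yv, t)\bigr) + \tfrac{1}{2}g(t)^2 \,\Delta_\xv p_t(\xv \vert \yv).
\end{equation*}
Next, I would rewrite the diffusion (Laplacian) term using the identity $\nabla_\xv p_t = p_t \,\sv_t(\xv \vert \yv)$:
\begin{equation*}
    \Delta_\xv p_t(\xv \vert \yv) = \nabla_\xv \cdot \bigl(\nabla_\xv p_t(\xv \vert \yv)\bigr) = \nabla_\xv \cdot \bigl(p_t(\xv \vert \yv)\,\sv_t(\xv \vert \yv)\bigr).
\end{equation*}
Substituting this back and collecting the divergence gives
\begin{equation*}
    \partial_t p_t(\xv \vert \yv) = -\nabla_\xv \cdot \Bigl(p_t(\xv \vert \yv)\, \bigl[\fv(\xv, \yv, t) - \tfrac{1}{2}g(t)^2 \sv_t(\xv \vert \yv)\bigr]\Bigr) = -\nabla_\xv \cdot \bigl(p_t(\xv \vert \yv)\,\tilde\fv(\xv, \yv, t)\bigr).
\end{equation*}

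The right-hand side is exactly the Liouville equation \eqref{eq:liouville_density} derived in \Cref{rmk:eqv_liouville_density} for the deterministic flow \eqref{eq:reverse_ODE} with drift $\tilde\fv$. Since both the SDE marginals and the ODE marginals satisfy the same first-order transport PDE with the same initial condition $p_0(\xv \vert \yv)$, standard uniqueness results for the continuity equation (under the mild regularity implicitly assumed on $\fv, g, \sv_t$) force them to coincide for all $t \ge 0$.

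There is no real obstacle here; this is the classical probability flow ODE derivation of \citet{SongDKKEP2021}, adapted to allow the exogenous parameter $\yv$. The only point requiring minor care is that the score, Laplacian, and divergence are all taken with respect to $\xv$ alone (not $\yv$), which is consistent with $\yv$ being frozen along trajectories; this matches the convention $\nabla_1, \Delta_1$ used earlier in the appendix and ensures the Fokker--Planck/Liouville identification goes through verbatim.
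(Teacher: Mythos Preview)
Your proposal is correct and follows essentially the same approach as the paper: both write the Fokker--Planck equation for the SDE, use the identity $\nabla_\xv p_t = p_t\,\sv_t$ to absorb the Laplacian term into a single divergence, and recognize the result as the Liouville equation for the ODE with drift $\tilde\fv$. The only cosmetic difference is that the paper carries out the rewriting componentwise in $\partial/\partial x_i$, whereas you use vector notation throughout and are slightly more explicit about invoking uniqueness for the continuity equation at the end.
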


\begin{proof}
The marginal distribution $p_t(\xv \vert \yv)$ evolution is characterized by the Fokker-Planck equation \citep{Oksendal2003}:
\begin{align}
\frac{\partial p_t(\xv \vert \yv)}{\partial t} &= - \nabla \cdot \big(\fv(\xv, \yv, t) p_t(\xv\vert \yv)\big) + \frac{1}{2} \nabla \cdot \nabla \big(g(t)^2 p_t(\xv\vert \yv) \big)\\
&= - \sum_{i=1}^d\frac{\partial}{\partial x_i}[f_i(\xv, \yv,t) p_t(\xv\vert \yv)] + \frac{1}{2}\sum_{i=1}^d  \frac{\partial^2}{\partial x_i^2}[g(t)^2~p_t(\xv\vert \yv)] \\
&= - \sum_{i=1}^d \frac{\partial}{\partial x_i}\left\{[f_i(\xv,\yv, t) p_t(\xv\vert \yv)] - \frac{g(t)^2}{2}\big[  p_t(\xv\vert \yv)    \frac{\partial}{\partial x_i} \log p_t(\xv\vert \yv)\big] \right\}\\
&= - \sum_{i=1}^d\frac{\partial}{\partial x_i}\left[f_i(\xv,\yv,t) - \frac{g(t)^2}{2} \frac{\partial}{\partial x_i} \log p_t(\xv\vert \yv)\right] p_t(\xv\vert \yv),
\end{align}
where the last line is the Fokker-Planck equation of 
\begin{align}
 \diff \xv = \tilde\fv(\xv,\yv, t)\diff t\label{eq:general_reverse_SDE_family:4}
\end{align}
with $\tilde\fv(\xv, \yv, t)$ given in \eqref{eq:fwd_ODE:1}.
\end{proof}

Now we are ready to give the if and only if statement on the structurally preserving property of the distributions induced by 
\begin{align}
    \diff \xv_t = \fv(\xv_t, \yv, t) \diff t + g(t) \diff \ws_t \label{eq:appx:fwd_diff_process}
\end{align}
Notably, a sufficient condition given by \eqref{eq:eqv_drift_diff_meas} with the left-hand side equal to zero is firstly discussed by \cite{YimTDMDBJ2023}.

Then we give the equivalent conditions on the drift terms to ensure the structure-preserving property of SDE flows and its equivalence to the \cref{prop:eqv_drift} presented in the main text. 
\begin{proposition}
\label{prop:eqv_drift_meas}
 Given a diffusion process in \eqref{eq:appx:fwd_diff_process} with $\Gv$-invariant $\mu_0(\cdot \vert \cdot)$, $\mu_t(\cdot \vert \cdot)$ is $\Gv$-invariant for all $t\geq 0$  if and only if
\begin{align}
    A_{\kappa_1}^\top \fv(\kappa_1 \xv, \kappa_2 \yv, t) - \fv(\xv, \yv, t) \in [\zero]_{\mu_t}. \label{eq:eqv_drift_diff_meas}
\end{align}
for all $t > 0$, $\xv \in \Rb^m$, $\yv \in \Rb^n$ and $\kappav \in \Gv$.
\end{proposition}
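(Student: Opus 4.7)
The plan is to adapt the proof of Lemma~\ref{lem:eqv_drift} to the SDE setting by replacing the weak Liouville equation with the weak Fokker-Planck (FP) equation: for all $\phi \in C_c^\infty(\Rb^{m+n})$,
$$\frac{d}{dt}\int \phi\,d\mu_t(\xv|\yv) = \int \nabla_1\phi \cdot \fv(\xv,\yv,t)\,d\mu_t(\xv|\yv) + \tfrac{1}{2}g(t)^2 \int \Delta_1\phi\,d\mu_t(\xv|\yv).$$
Lemma~\ref{lem:inv_laplacian} supplies the key technical tool for propagating the Laplacian term through the group transformation, while the invariance of the Lebesgue measure (Lemma~\ref{lem:inv_leb}) and the stability of $C_c^\infty$ under $\kappav$ (Lemma~\ref{lem:comp_supp_inv_under_kappa}) carry over verbatim from the ODE proof.

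For the forward direction, assume $\mu_t$ is $\Gv$-invariant for all $t\geq 0$. Mirroring the $(\Rightarrow)$ step of Lemma~\ref{lem:eqv_drift}, set $\psi = \phi\circ\kappav \in C_c^\infty$. Since $\mu_t$ is $\Gv$-invariant, $\int\phi\,d\mu_t = \int\psi\,d\mu_t$, so their time derivatives agree. Expanding both through the FP equation and using $\nabla_1\psi(\xv,\yv) = A_{\kappa_1}^\top(\nabla_1\phi)(\kappav(\xv,\yv))$ together with $\Delta_1\psi = (\Delta_1\phi)\circ\kappav$ (Lemma~\ref{lem:inv_laplacian}), the Laplacian contributions cancel because $\int \Delta_1\psi\,d\mu_t = \int \Delta_1\phi\,d\mu_t$, which follows from $\Gv$-invariance after the change of variables $\xv\to\kappa_1\xv,\,\yv\to\kappa_2\yv$. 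The remaining drift terms collapse to
$$\int \nabla_1\psi(\xv,\yv)\cdot\big[A_{\kappa_1}^\top\fv(\kappa_1\xv,\kappa_2\yv,t) - \fv(\xv,\yv,t)\big]\,d\mu_t(\xv|\yv) = 0,$$
and since $\psi$ ranges over all of $C_c^\infty$ (Lemma~\ref{lem:comp_supp_inv_under_kappa}), this is exactly the drift condition \eqref{eq:eqv_drift_diff_meas}.

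For the reverse direction, fix $\kappav \in \Gv$ and set $\tilde\mu_t(\cdot|\cdot) := \mu_t(\kappa_1^{-1}\cdot|\kappa_2^{-1}\cdot)$. A change-of-variables computation (again using Lemma~\ref{lem:inv_leb} and Lemma~\ref{lem:inv_laplacian}) shows that $\tilde\mu_t$ satisfies the weak FP equation with drift $\bar\fv(\xv,\yv,t) := A_{\kappa_1}\fv(\kappa_1^{-1}\xv,\kappa_2^{-1}\yv,t)$ and the same diffusion $g(t)$. The assumed condition, reinterpreted through the change of variables $\xv\to\kappa_1^{-1}\xv,\,\yv\to\kappa_2^{-1}\yv$ (and using that $\Gv$ is closed under inversion so that $A_{\kappa_1^{-1}} = A_{\kappa_1}^\top$), is equivalent to $\bar\fv - \fv \in [\zero]_{\tilde\mu_t}$, which forces the weak FP equations governing $\mu_t$ and $\tilde\mu_t$ to coincide. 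Since $\tilde\mu_0 = \mu_0$ by the hypothesized $\Gv$-invariance of $\mu_0$, uniqueness of weak solutions to the linear FP equation gives $\tilde\mu_t = \mu_t$ for all $t\geq 0$, establishing $\Gv$-invariance.

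The principal obstacle is careful change-of-variables bookkeeping: correctly transporting the distribution-preserving set $[\zero]_\mu$ between $\mu_t$ and its pushforward $\tilde\mu_t$, and ensuring the Laplacian carries through unaltered via Lemma~\ref{lem:inv_laplacian}. One should resist the tempting shortcut of reducing the SDE case to Lemma~\ref{lem:eqv_drift} via the probability-flow ODE of Lemma~\ref{lem:fwd_ode}, whose drift $\tilde\fv = \fv - \tfrac{1}{2}g^2\sv_t$ depends on the score $\sv_t$: by Lemma~\ref{lem:eqv_score_meas}, the $\Gv$-equivariance of $\sv_t$ presupposes $\Gv$-invariance of $\mu_t$---precisely what the reverse direction aims to prove---making such a reduction circular.
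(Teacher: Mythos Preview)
Your proposal is correct, and the $(\Leftarrow)$ direction matches the paper's approach closely: both show that the pushforward $\tilde\mu_t$ satisfies the same Fokker--Planck equation as $\mu_t$ (you in weak form, the paper in strong/density form, splitting the right-hand side into a drift part handled via the step $(\mathbf{A})$ of Lemma~\ref{lem:eqv_drift} and a Laplacian part handled by Lemma~\ref{lem:inv_laplacian}), then invoke uniqueness with the common initial datum.

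The $(\Rightarrow)$ direction, however, differs. The paper in fact \emph{does} use the probability-flow ODE reduction you warn against: it passes to the drift $\tilde\fv = \fv - \tfrac{1}{2}g^2\sv_t$ via Lemma~\ref{lem:fwd_ode}, applies the ODE result Lemma~\ref{lem:eqv_drift} to obtain $A_{\kappa_1}^\top\tilde\fv(\kappa_1\xv,\kappa_2\yv,t) - \tilde\fv(\xv,\yv,t) \in [\zero]_{\mu_t}$, and then cancels the score terms using Lemma~\ref{lem:eqv_score_meas}. This is \emph{not} circular here, because in the forward direction $\mu_t$ is assumed $\Gv$-invariant, so the score is legitimately $\Gv$-equivariant. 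Your caution about circularity is well-placed only for the reverse direction, and indeed the paper avoids the PF-ODE there. Your direct weak-FP argument for $(\Rightarrow)$---cancelling the Laplacian terms via Lemma~\ref{lem:inv_laplacian} and invariance---is a clean alternative that sidesteps both Lemma~\ref{lem:fwd_ode} and Lemma~\ref{lem:eqv_score_meas}, treating both directions uniformly at the level of the weak FP equation.
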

\begin{proof}
$(\Rightarrow)$ Let $\tilde \fv$ denote the corresponding ODE drift shown in \eqref{eq:fwd_ODE:1}. Assume $\mu_t(\cdot \vert \cdot)$ is $\Gv$-invariant for all $t\geq 0$. Then by \cref{lem:eqv_drift}, for all $\kappav \in \Gv$,  the ODE drift $\tilde \fv$ satisfies
\begin{align}
    A_{\kappa_1}^\top \fv(\kappa_1 \xv, \kappa_2 \yv, t) - \fv(\xv, \yv, t) \in [\zero]_{\mu_t}.
\end{align}
That is,
\begin{align}
A_{\kappa_1}^\top \Big(\fv(\kappa_1\xv,\kappa_2 \yv, t) - \frac{1}{2}g(t)^2  \sv_t(\kappa_1\xv \vert \kappa_2\yv)\Big) - \Big(\fv(\xv, \yv, t) - \frac{1}{2}g(t)^2 \sv_t(\xv\vert \yv)\Big) \in [\zero]_{\mu_t}.
\end{align}
By \cref{lem:eqv_score_meas}, $\Gv$-invariance of $\mu_t$ implies that $A_{\kappa_1}^\top\sv_t(\kappa_1\xv\vert \kappa_2\yv) = \sv_t(\xv\vert \yv)$. Thus, \eqref{eq:eqv_drift_diff_meas} follows. 

\noindent
$(\Leftarrow)$ Assume \eqref{eq:eqv_drift_diff_meas} holds. For $\phi \in C_c^\infty(\Rb^{m+n})$ and $\kappav \in \Gv$, let $\psi = \phi \circ \kappav$,  and then we have
\begin{align*}
	& \frac{\diff}{\diff t}\int \phi( \xv, \yv) \diff\muv_t(\kappa_1^{-1} \xv \vert \kappa_2^{-1} \yv) = \frac{\diff}{\diff t} \int \psi(\xv, \yv) \diff \muv_t(\xv, \yv)\\
	\overset{\eqref{eq:liouville_meas_cond}}&{=} \int \nabla_1\psi(\xv, \yv)^\top \tilde\fv(\xv,\yv, t) \diff \mu_t(\xv \vert \yv) = \int \nabla_1\psi( \xv, \yv)^\top \Big(\fv(\xv, \yv, t) - \frac{1}{2}g^2(t)\sv_t(\xv\vert \yv) \Big) \diff \mu_t(\xv\vert \yv)\\
	&= \underbrace{\int \nabla_1\psi(\xv, \yv)^\top \fv(\xv, \yv, t) \diff \mu_t(\xv\vert \yv)}_{\bf I} - \frac{1}{2}g^2(t)\underbrace{\int\nabla_1\psi(\xv, \yv)^\top \sv_t(\xv \vert \yv) \diff \mu_t(\xv \vert \yv)}_{\bf II}. \numberthis \label{eq:prop:eqv_drift_meas:1}
\end{align*}
By \eqref{eq:eqv_drift_diff_meas} and applying the same argument to derive (\textbf{A}) in the proof of \cref{lem:eqv_drift}. We have
\begin{align*}
    \int \nabla_1\phi(\kappa_1 \xv, \kappa_2 \yv)^\top \fv(\kappa_1 \xv,\kappa_2\yv, t) \diff \mu_t(\xv\vert \yv) = \int \nabla_1\psi(\xv, \yv)^\top \fv(\xv, \yv, t) \diff \mu_t(\xv\vert \yv) = {\bf I}
\end{align*}
Then,
\begin{align*}
    {\bf I} =& \int \nabla_1\phi(\xv, \yv)^\top \fv(\xv,\yv, t) \diff \mu_t(\kappa_1^{-1}\xv\vert \kappa_2^{-1} \yv) \\
    =& - \int \phi(\xv, \yv) ~ \nabla_\xv \cdot\big(p_t(\kappa_1^{-1}\xv \vert \kappa_2^{-1}\yv ) \fv(\xv,\yv, t) \big) \diff \lambda(\xv).
\end{align*}
In addition,
\begin{align*}
    {\bf II} &= \int \nabla_1 \psi(\xv, \yv)^\top p_t(\xv \vert \yv) \diff \lambda(\xv) = -\int \psi(\xv, \yv) \Delta_1 p_t(\xv \vert \yv) \diff \lambda(\xv)\\
    &= - \int \phi(\xv, \yv) \Delta_1 p_t(\kappa^{-1}_1 \xv \vert \kappa^{-1}_2 \yv) \diff \lambda(\xv) \overset{\rm (Lem~\ref{lem:inv_laplacian})}{=} - \int \phi(\xv, \yv) ~ \Delta_1 (p_t\circ\kappav^{-1})(\xv \vert \yv) \diff \lambda(\xv)
\end{align*}
As a result, by Eq~\eqref{eq:prop:eqv_drift_meas:1}, we have
\begin{align*}
	&\frac{\diff}{\diff t}\int_c \phi(\xv, \yv) p_t(\kappa_1^{-1}\xv\vert \kappa_2^{-1}\yv ) \diff \lambda(\xv) = \frac{\diff}{\diff t}\int \phi( \xv, \yv) \diff\muv_t(\kappa_1^{-1} \xv, \kappa_2^{-1} \yv)\\
	 = &- \int_c \phi(\xv, \yv) \left[\nabla_\xv \cdot \big(p_t(\kappa_1^{-1}\xv \vert \kappa_2^{-1}\yv ) \fv(\xv,\yv, t)\big) - \frac{1}{2}g^2(t) \Delta_1 (p_t \circ \kappav^{-1}) (\xv, \yv) \right]\diff \lambda(\xv)
\end{align*}
Hence,
\begin{align}
	\frac{\diff}{\diff t}p_t(\kappa_1^{-1}\xv\vert \kappa_2^{-1}\yv)  = - \nabla_\xv \cdot \big(p_t(\kappa_1^{-1}\xv \vert \kappa_2^{-1}\yv ) \fv(\xv,\yv, t) \big) - \frac{1}{2}g^2(t) \Delta_1 (p_t \circ \kappav^{-1}) (\xv, \yv). 
\end{align}
By the Fokker-Planck equation, we also have
\begin{align}
	\frac{\diff}{\diff t}p_t(\xv \vert \yv)  = - \nabla_\xv \cdot(p_t(\xv\vert \yv) \fv(\xv, \yv, t)) + \frac{1}{2}g^2(t)~(\Delta_1 p_t) (\xv\vert \yv).
\end{align}
Therefore, $p_t = p_t \circ \kappav^{-1}$, which, by Lem~\ref{lem:inv_prob_density}, implies $\muv_t$ is $\Gv$-invariant. 
\end{proof}

\eqvDrift*
\begin{proof}
\cref{prop:eqv_drift} is equivalent to \cref{prop:eqv_drift_meas} but in probability density notations by \cref{lem:inv_prob_density}.
\end{proof}

Finally, we show how our theoretical results can be applied to characterize the structure-preserving properties of the diffusion bridges. The main results are given in \cref{prop:appx:equiv_bridge} and are collectively presented with the counterparts for the regular diffusion processes in \cref{cor:sp_diff_linear_op}.
\begin{lemma}
\label{lem:eqv_pushforward}
Let $p_t$ denote the distribution of $\xv_t$ generated by SDE:
	\begin{align}
		\diff \xv_t = \uv(\xv_t, t) \diff t + g(t) \diff \ws_t. 
	\end{align}
	Then $p(\kappa \xv_T \vert \kappa \xv_t ) = p(\xv_T \vert \xv_t )$ for all $\kappa \in \Gc_r$ and $T \geq t$ if 
	\begin{align}
		    A_\kappa^\top \uv(\kappa\xv, t) - \uv(\xv, t)  \in [\zero]_{\mu_t}, \label{eq:eqv_pushforward:1}
	\end{align}
	for all $\kappa \in \Gc$.
\end{lemma}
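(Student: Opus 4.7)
The plan is to apply \cref{prop:eqv_drift_meas} (equivalently \cref{prop:eqv_drift}) to the family of conditional distributions $\{p(\xv_s \vert \xv_t = \yv)\}_{s \geq t}$, treating the fixed starting point $\yv$ as the conditioning variable required by that proposition. With the diagonal group $\Gv = \{(\kappa,\kappa) : \kappa \in \Gc\}$, $\Gv$-invariance of this family reads $p(\kappa \xv_s \vert \kappa \yv) = p(\xv_s \vert \yv)$, so specializing to $s = T$ and $\yv = \xv_t$ recovers the lemma.

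First I would verify the two hypotheses of \cref{prop:eqv_drift_meas}. At $s = t$ the conditional is $\delta_\yv$, which is $\Gv$-invariant since $\kappa$ is an orthogonal isometry fixing the origin (by \cref{lem:iso_jacob_orth}) and hence $\delta_{\kappa \yv}(\kappa \xv_s) = \delta_\yv(\xv_s)$. For the drift condition, set $\fv(\xv_s, \yv, s) := \uv(\xv_s, s)$; because $\fv$ does not depend on $\yv$ and $\kappa_1 = \kappa_2 = \kappa$, the requirement $A_{\kappa_1}^\top \fv(\kappa_1 \xv_s, \kappa_2 \yv, s) - \fv(\xv_s, \yv, s) \in [\zero]_{\mu_s}$ collapses to $A_\kappa^\top \uv(\kappa \xv_s, s) - \uv(\xv_s, s) \in [\zero]_{\mu_s}$, which is precisely the lemma's hypothesis once we interpret $\mu_s$ as the marginal of the SDE started deterministically from $\yv$ at time $t$. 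By the Markov property, that marginal is exactly the conditional kernel $p(\cdot \vert \yv)$ we need to control, so the hypothesis delivers exactly what \cref{prop:eqv_drift_meas} demands.

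With both hypotheses in place, \cref{prop:eqv_drift_meas} yields $\Gv$-invariance of $p(\xv_s \vert \yv)$ for every $s \geq t$; setting $s = T$ and $\yv = \xv_t$ completes the argument. The main subtle point---and the step I expect to need the most care with---is the bookkeeping that identifies the marginal measure appearing in the hypothesis with the conditional kernel used when invoking the proposition; this identification rests on the Markov property of the SDE together with the observation that specializing the initial law to a Dirac mass turns a ``marginal'' statement into a ``conditional'' one. Once that identification is made precise, the rest of the argument is a direct invocation of the general structure-preserving criterion from \cref{sec:group_inv_sde}, with orthogonality of $\kappa$ taking care of all the geometric transformations automatically.
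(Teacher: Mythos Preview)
Your proposal is correct and follows essentially the same route as the paper: set $\yv$ equal to the initial point, take the diagonal group $\Gv=\{(\kappa,\kappa):\kappa\in\Gc\}$, let $\fv(\xv,\yv,s)=\uv(\xv,s)$ (independent of $\yv$), and invoke \cref{prop:eqv_drift_meas} together with \cref{lem:inv_prob_density}. The paper simply reduces to the case $t=0$ by ``without loss of generality'' and is less explicit than you are about the Dirac initial condition and the identification of $\mu_s$ with the conditional kernel, but the argument is the same.
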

\begin{proof}
	Without loss of generality, it is sufficient to show 
	\begin{align}
		p(\kappa \xv_t \vert \kappa \xv_0 ) = p(\xv_t \vert \xv_0 )
	\end{align}
	for all $\kappa \in \Gc$ and $t \geq 0$. Then let $\yv = \xv_0$, $\fv(\xv_t, \yv, t) = \uv(\xv, t)$ and $\Gv = \{(\kappa, \kappa) \vert \kappa \in \Gc_r\}$. As \eqref{eq:eqv_pushforward:1} implies $A_\kappa^\top \uv(\kappa\xv, t) - \uv(\xv, t) \in [\zero]_{\mu_t}$. Then, combined with \eqref{eq:eqv_pushforward:1}, Prop~\ref{prop:eqv_drift_meas} shows $\mu_t(\xv_t \vert \yv)$ is $\Gv$-invariant. By Lem~\ref{lem:inv_prob_density}, we have $p(\kappa\xv_t \vert \kappa \xv_0) = p(\xv_t \vert \xv_0)$, which completes the proof.  
\end{proof}

\begin{lemma}
\label{prop:appx:equiv_bridge}
Assume the two ends $(\xv_0, \xv_T) \in \Rb^d \times \Rb^d$ of the diffusion bridges follow a $\Gv$-invariant conditional distribution $\muv_{0\vert T}(\xv_0 \vert \xv_T)$, where $\Gv = \{(\kappa, \kappa) \vert \kappa \in \Gc\}$. Let $\muv_{t\vert T}$ denote the measure of $(\xv_t, \xv_T)$ induced by diffusion bridge:
	\begin{align}
		\diff \xv_t = \big(\uv(\xv_t, t)  + g(t)^2 \hv(\xv_t, t, \xv_T, T)\big)\diff t + g(t) \diff \ws_t,
	\end{align}
	where $\hv(\xv_t, t, \xv_T, T) = \nabla_{\xv_t} \log p(\xv_T | \xv_t)$ is the gradient of the log transition kernel from $t$ to $T$ generated by the original SDE:
	\begin{align}
		\diff \xv_t = \uv(\xv_t, t) \diff t + g(t) \diff \ws_t. 
	\end{align}
	If 
	\begin{align}
		A_\kappa^\top \uv(\kappa\xv, t) - \uv(\xv, t) =  \zero
	\end{align}
	for all $\kappa \in \Gc$, then $\uv(\xv, t) + g(t)^2 \hv(\xv, t, \yv, T)$ satisfies \cref{eq:eqv_drift_diff_meas} and  
 $\mu_{t\vert T}$ is $\Gv$-invariant for all $t \in [0,T]$. 
\end{lemma}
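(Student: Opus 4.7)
The plan is to reduce the statement to two applications of results already proven in the appendix: \cref{lem:eqv_pushforward} to establish an invariance of the log-transition kernel induced by the original SDE, and \cref{prop:eqv_drift_meas} to lift the equivariance of the bridge drift to $\Gv$-invariance of $\mu_{t|T}$. The hypothesis $A_\kappa^\top \uv(\kappa\xv,t) - \uv(\xv,t) = \zero$ is strictly stronger than the distribution-preserving difference condition, so both intermediate results will be directly applicable.

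First, I would apply \cref{lem:eqv_pushforward} to the original SDE $\diff\xv_t = \uv(\xv_t,t)\diff t + g(t)\diff \ws_t$. Its hypothesis $A_\kappa^\top \uv(\kappa\xv,t) - \uv(\xv,t) \in [\zero]_{\mu_t}$ holds trivially because the expression is identically zero. This yields $p(\kappa\xv_T \mid \kappa\xv_t) = p(\xv_T\mid \xv_t)$ for all $\kappa\in\Gc$ and $T\ge t$. Taking logarithms and differentiating in $\xv_t$ on both sides, using the chain rule $\nabla_{\xv_t}[\log p(\kappa\xv_T\mid \kappa\xv_t)] = A_\kappa^\top (\nabla_{\xv'} \log p(\kappa\xv_T\mid \xv'))\big|_{\xv' = \kappa\xv_t}$, gives
\begin{equation*}
    A_\kappa^\top \hv(\kappa\xv, t, \kappa\yv, T) = \hv(\xv, t, \yv, T).
\end{equation*}
In other words, the score of the transition kernel inherits the same pointwise $\Gc$-equivariance as $\uv$.

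Next, let $\fv(\xv,\yv,t) := \uv(\xv,t) + g(t)^2 \hv(\xv,t,\yv,T)$ be the bridge drift. Combining the assumed identity for $\uv$ with the equivariance of $\hv$ just derived, I obtain
\begin{equation*}
    A_\kappa^\top \fv(\kappa\xv,\kappa\yv,t) - \fv(\xv,\yv,t) = \big[A_\kappa^\top\uv(\kappa\xv,t) - \uv(\xv,t)\big] + g(t)^2\big[A_\kappa^\top\hv(\kappa\xv,t,\kappa\yv,T) - \hv(\xv,t,\yv,T)\big] = \zero,
\end{equation*}
which is in particular an element of $[\zero]_{\mu_{t|T}}$ for every $\kappav=(\kappa,\kappa)\in\Gv$. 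This verifies \cref{eq:eqv_drift_diff_meas} for the bridge SDE.

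Finally, I would invoke \cref{prop:eqv_drift_meas} applied to the bridge SDE (with conditioning variable $\yv = \xv_T$). The proposition requires $\Gv$-invariance of the initial conditional measure, which is exactly the assumption on $\muv_{0|T}$, and the drift-equivariance condition just verified. Its conclusion is precisely that $\mu_{t|T}$ is $\Gv$-invariant for every $t\in[0,T]$, completing the proof. The only non-routine step is the passage from invariance of the transition kernel to equivariance of its score, and this is immediate once one is careful with the chain rule and the identity $D\kappa = A_\kappa$ supplied by \cref{rmk:isometry_jacob}; no genuine obstacle is anticipated.
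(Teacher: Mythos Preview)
Your proposal is correct and follows essentially the same route as the paper: apply \cref{lem:eqv_pushforward} to obtain $p(\kappa\xv_T\mid\kappa\xv_t)=p(\xv_T\mid\xv_t)$, differentiate to get the equivariance of $\hv$, combine with the hypothesis on $\uv$ to verify \cref{eq:eqv_drift_diff_meas} for the bridge drift, and conclude via \cref{prop:eqv_drift_meas}. The paper's argument is identical in structure and substance.
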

\begin{proof}
	By Lem~\ref{lem:eqv_pushforward}, we have $p(\kappa \xv_T \vert \kappa \xv_t ) = p(\xv_T \vert \xv_t )$ for all $\kappa \in \Gc$. As a result, 
	\begin{align}
		A_{\kappa}\,\nabla_{\xv_t} \log p(\xv_T \vert \xv_t) = \nabla_{\kappa\xv_t} \log p(\kappa \xv_T \vert \kappa \xv_t). 
	\end{align}
	Or equivalently,
	\begin{align}
		 \hv(\xv_t, t, \xv_T, T) =  A_{\kappa}^\top \hv(\kappa\xv_t, t, \kappa\xv_T, T). 
	\end{align}
	As a result,
	\begin{align}
		\fv(\xv, \yv, t) = \uv(\xv, t) + g(t)^2 \hv(\xv, t, \yv, T). 
	\end{align}
	satisfies \eqref{eq:eqv_drift_diff_meas}, and thus \cref{prop:eqv_drift_meas} implies $\muv_{t \vert T}$ is $\Gv$-invariant.
\end{proof}

\SPDiffLO*
\begin{proof}
    The first part of the proposition regarding the unconditional $p_t$ follows \cref{prop:eqv_drift_meas} while the second part basically restates \cref{prop:appx:equiv_bridge} in density notation where their equivalence can be seen by \cref{lem:inv_prob_density}.
\end{proof}

\section{GROUP INVARIANT WEIGHT TIED CONVOLUTIONAL KERNELS}
\label{appx:weight-tied-kernel}

As stated in \cref{sec:model-architectures}, we currently limit our attention to linear groups $\Gc_\Lc$. In this setting, we can directly impose $\Gc_\Lc$-equivariance into the diffusion model by constructing specific CNN kernels. 

In particular, for a given linear group $\Gc_\Lc$ we can construct a group equivariant convolutional kernel $\ks\in\mathbb{R}^{d\times d}$, of the form
\begin{align}
    \ks = \begin{array}{|c|c|cc|c|}
        \hline
        k_{1,1} & k_{1,2} & \cdots & & k_{1,d} \\
        \hline
        \vdots & \vdots & \ddots & & \vdots \\
        \vdots & \vdots & & & \vdots \\
        \hline
        k_{d-1,1} & k_{d-1,2 }& \cdots &  & k_{d-1,d} \\
        \hline
        k_{d,1} & k_{d,2} & \cdots & & k_{d,d}\\
        \hline
    \end{array}\, ,
\end{align}

such that 
\begin{equation*}
    \hs(\ks*\xv) = \ks*\hs(\xv)
\end{equation*}
for any $h\in\Gc_\Lc$ and $\xv\sim p_{data}$ by constraining the individual kernel values to obey a system of equalities set by the group invariance condition 
\begin{equation}
    \hs(\ks) = \ks.
\end{equation}

\paragraph{Example: Vertical Flipping.} A concrete example,  which was discussed in \cref{sec:group-invariance}, is to consider the group $\Gc = \{\fs_x, \ev\}$ where $\fs_x$ is a vertical flipping  operation with $\fs_x^{-1} = \fs_x$. A convolutional kernel $\ks\in\mathbb{}R^{3\times 3}$ constrained to be equivariant to actions from this group would take the form:
\begin{align}
    \ks = \begin{array}{|c|c|c|}
        \hline
        d & a & d \\
        \hline
        e & b & e \\
        \hline
        f & c & f\\
        \hline
    \end{array}
\end{align}
It should be clear given the form of $\ks$ that 
\begin{equation*}
    \fs_x(\ks*\xv) = \ks*\fs_x(\xv)
\end{equation*}
and consequently also for $\fs_x^{-1}$, as desired. Likewise, we also present the weight-tied kernels for C4 and D4. 

\paragraph{Example: The $C_4$ Cyclic and $D_4$ Dihedral Group.} Recall that the $C_4$ cyclic group is composed of planar $90\deg$ rotations about the origin, and can be denoted as $C_4 = \{\es, \rs_1, \rs_2, \rs_3\}$ where $\rs_i$ represents a rotation of $i\times 90\deg$. Taking a convolutional kernel $\ks\in\mathbb{R}^{5\times 5}$ and constraining it to be $C_4$-equivariant results in $\ks$ being of the form:
\begin{align}
    \ks = \begin{array}{|c|c|c|c|c|}
        \hline
        a & b & c & d & a \\
        \hline
        d & e & f & e & b \\
        \hline
        c & f & g & f & c \\
        \hline
        b & e & f & e & d \\
        \hline
        a & d & c & b & a \\
        \hline
    \end{array}\, .
\end{align}
The $D_4$ dihedral group can then be ``constructed'' from $C_4$ by adding the vertial flipping operation from the past example; that is, $D_4=\{\es, \rs_1, \rs_2, \rs_3, \fs_x, \fs_x\circ\rs_1,  \fs_x\circ\rs_2,  \fs_x\circ\rs_3\}$. This requires further constraints to $\ks$ so that  
\begin{align}
    \ks = \begin{array}{|c|c|c|c|c|}
        \hline
        a & b & c & b & a \\
        \hline
        b & e & f & e & b \\
        \hline
        c & f & g & f & c \\
        \hline
        b & e & f & e & b \\
        \hline
        a & b & c & b & a \\
        \hline
    \end{array}\, .
\end{align}

Naturally, constraining convolutional kernels in this fashion has the advantage of reducing the number of model parameters -- with a possible loss in expressiveness when the kernel size is relatively small in comparison to the size of the group and structure of the data. For a more general discussion on $\Gc$-equivariant convolutional kernels in the context of CNNs we refer the reader to \citet{CohenW2016} and \citet{KniggeRB2022}.

\section{EQUIVARIANCE REGULARIZATION}
\label{appx:equivariance-regularization}
Instead of achieving $\Gc$-equivalence by adopting specific model architectures, as described in \cref{sec:model-architectures}, or by frame averaging \cite{PunyASMGHL2022}, we can also directly add a regularizer to the score-matching loss to inject this preference. Specifically, according to \cref{lem:eqv_score}, the estimated score $\sv_{\thetav}(\cdot , t)$ is equivariant if 
\begin{align}
    \sv_{\thetav}(\kappa\xv, \kappa\yv,t) = \kappa \sv_{\thetav}(\xv, \yv, t),
\end{align}
for all $\kappa\in \Gc$. (For the unconditional distribution, similar techniques can be applied by omitting the second argument of $\sv_{\thetav}$.) Thus, we propose the following regularizer to encourage the two terms to match for all $\xv$ and $t$:
\begin{align}
\label{eq:score_matching_regularization}
	\!\!\!\Rc({\thetav},\bar{{\thetav}})\! &= \!\Eb\! \left[ \!\frac{1}{|\Gc|}\!\!\sum_{\kappa\in\Gc}\!\big\| \sv_{\thetav}(\kappa \xv, \kappa  \yv, t) - \kappa  \sv_{\bar{{\thetav}}}(\xv, \yv, t) \big\|^2 \! \right]
\end{align}
where the expectation is taken over the same variables in the regular score-matching loss and $\bar{\thetav}$ denotes the exponential moving average (EMA) of the model weights
\begin{align}
\label{eq:ema}
    \bar{\thetav} \leftarrow {\rm stopgrad}(\mu \bar {\thetav} + (1-\mu) {\thetav})~~{\rm with}~~\mu\in [0,1),
\end{align}
which helps improve training stability. In practice, iterating over all elements in $\Gc$ may be intractable. Thus, for each optimization step, $\Rc(\thetav, \bar\thetav)$ is one-sample approximated by:
\begin{align}
\label{eq:score_matching_regularization_est}
	\Rc({\thetav},\bar{{\thetav}})&\approx \Eb \left[ \big\| \sv_{\thetav}(\kappa\xv, \kappa\yv, t) - \kappa \sv_{\bar{{\thetav}}}(\xv, \yv, t) \big\|^2 \right], 
\end{align}
with randomly picked $\kappa \in \Gc$.

\section{CONSTRUCTIONS OF EQUIVARIANT NOISY SEQUENCE}
\label{appx:eqv_aligner}

In this section, we present a method to construct an equivariant noisy sequence $\{\epsilonv_i\}_{i=1}^n$ with respect to some $\xv_n \sim  q(\xv)$ without knowing the ``true'' orientation of $\xv_n$.

Let $q$ denote the distribution of $\xv_n$. Construct a function $\phi:\Rb^d \rightarrow \Rb^d$ such that: (1) for all $\kappa \in \Gc$, $\xv\sim q$ or $\xv\sim \Nc(\zero, I)$, $\phi(\kappa \xv) = \kappa \phi(\xv)$ almost surely; (2) for all $\xv, \yv \in \Rb^d$, there exists a unique $\kappa \in \Gc$ such that $ \phi(\xv) = \kappa\phi(\yv)$. For example, for $\Rb^2$ with $\Gc$ consisting of element-swapping operators, $\phi$ can be the function that outputs one-hot vector indicating the max element of the input. We will present some selections of $\phi$ for common $\Gc$ below.

Given starting point $\xv_n$ and a noise sequence $\{\epsilonv_i\}_{i=1}^n$, choose $\kappa \in \Gc$ such that $\phi(\xv_n) =\kappa\phi(\epsilonv_n)$. Then we use the noise sequence $\tilde\epsilonv_i = \kappa\epsilonv_i$ for the evaluation of \cref{eq:euler_maruyama}. To see why this approach works, assume that $\xv_n$ is updated to $\rs \xv_n$ for some $\rs\in \Gc$. Then, $\phi(\rs \xv_n) = \rs \phi(\xv_n) = (\rs \circ \kappa)\phi(\epsilonv_n)$, and thus the sequence becomes $\{\rs \circ \kappa \, \epsilonv_i\}_{i=1}^n = \{\rs  \tilde \epsilonv_i\}_{i=1}^n$. Note that this is a general method to create an equivariant noise sequence with respect to any input.

Below, we present some choices of $\phi$ for some common linear operator groups for 2D images.

\paragraph{Example: Vertical Flipping.} The function $\phi_v$ can be chosen to output either of two images:
\begin{center}
    \includegraphics{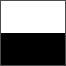}
    \includegraphics{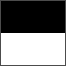}
\end{center}
Specifically, if the input image $\xv$ has the max value on the upper half of the image, $\phi$ returns the left plot; otherwise, the right one. (Here, we assume that it is almost surely that the max value cannot appear in both halves.)

It is obvious that if the input is flipped vertically, the output will be flipped in the same way. Therefore, the first condition is satisfied. For the second, if $\phi_v(\xv)$ and $\phi_v(\yv)$ have the same output, $\kappa$ is the identity operator; otherwise, $\kappa$ is the vertical flipping. For multichannel input, $\phi$ can be applied independently to each channel. 

Applying the same idea, we can derive the corresponding $\phi_h$ for horizontal flipping. 

\paragraph{Example: $C_4$ Cyclic Group.} We can use a similar idea to derive $\phi_{C4}$ for $C_4$ cyclic that is composed of planar $90\deg$ rotations about the origin. In this case, $\phi_{C4}$ has four possible outputs
\begin{center}
    \includegraphics{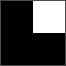}
    \includegraphics{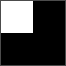}
    \includegraphics{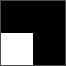}
    \includegraphics{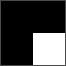}
\end{center}
such that $\phi_{C4}$ assigns the quadrant white if the input has the max value in that quadrant. (Here, we assume it is almost surely that the max value cannot appear in multiple quadrants.) Then, it is straightforward to see that $\phi_{C4}$ satisfies the two conditions of $\phi$.

\paragraph{Example: $D_4$ Dihedral Group.} As we have mentioned in \cref{appx:weight-tied-kernel}, the $D_4$ dihedral group can be ``constructed'' from $C_4$ by adding the vertial flipping operation. As a result, we can combine $\phi_v$ and $\phi_{C4}$ to construct the corresponding $\phi_{D4}$ for $D_4$. Assume that $\phi_v$ assigns one to the elements corresponding to the white pixels and zero to the ones associated with the black. Likewise, let $\phi_{C4}$ assign two to the elements corresponding to the white pixels and zero to the rest. Then we define $\phi_{D4} = \phi_{v} + \phi_{C4}$. It is easy to check that both $\phi_{v}$ and $\phi_{D4}$ satisfy the first condition of $\phi$ for all $\kappa\in D_4$ (i.e., vertical flipping, rotation, and their composition). We also note that the range $\phi_{D4}$ contains eight distinct elements. Starting from one element, we get all the elements by applying one of the eight operators in $D_4=\{\es, \rs_1, \rs_2, \rs_3, \fs_x, \fs_x\circ\rs_1,  \fs_x\circ\rs_2,  \fs_x\circ\rs_3\}$, which suggests $\phi_{D4}$ satisfies the second condition.

\section{DATASET DETAILS}
\label{appx:datasets}

This section contains detailed discussion on the contents and preprocessing of each dataset mentioned in \cref{sec:emp_study}.

\subsection{Rotated MNIST}
Rotated MNIST dataset \citep{LarochelleECBB2007} contains random \(90^\circ\) rotations of MNIST images \citep{Deng2012}, resulting in a \(C_4\)-invariant distribution. This dataset was generated following the description in \cite{KniggeRB2022}, and has been commonly used to evaluate group-invariant CNN models, as seen in \citep{DeyCG2021, BirrellKRLZ2022}, with experiments on 1\% (600), 5\% (3000), and 10\% (6000) of the dataset.

\subsection{LYSTO}
The LYSTO dataset \citep{JiaoLYSTO2023} consists of 20,000 labeled image patches at a resolution of 299x299x3 extracted at 40X magnification from breast, colon, and prostate cancer samples stained with CD3 or CD8 dyes. This data was preprocessed by first scaling all the images to 128x128x3 before randomly sampling 64x64x3 image patches from each image to generate the final dataset. The data was first scaled to increase the feature density within each randomly sampled patch. The resulting data exhibits \(D_4\) invariance due to natural rotational and mirror invariance.

\subsection{LYSTO Denosing}
To construct the LYSTO denosing dataset, we take the LYSTO 64x64x3 patches, generated as described above, and downscale them to \(1/4\) the resolution and then upscaled back using {\rm LANCZOS} interpolation to form conditional training pairs.

\subsection{ANHIR}
The ANHIR dataset \citep{BorovecEL2020} provides whole-slide images of lesions, lung-lobes, and mammary-glands at a variety of different resolutions, from 15kx15k to 50kx50k. We only make use of the lung images. This data was processed following the method outlined in \cite{DeyCG2021} from which random 64x64x3 image patches are extracted.

\subsection{CT-PET}
The CT-PET dataset \citep{GatidisHFFNPSKCR2022} includes 1014 (501 positives and 512 controls) annotated whole-body paired FDG-PET/CT scans, comprised of 3D voxels, of patients with malignant lymphoma, melanoma, and non-small cell lung cancer. We extract 40 slices per voxel to construct the training datasets. This data is restricted under TCIA restricted lience. Formal access must be filed for and granted before the data can be made available to practitioners from the  \href{https://www.cancerimagingarchive.net}{Cancer Imaging Archive} (CIA). Scripts for processing this data, such as into slices, are provided by CIA on the dataset page.

The style-transfer dataset was constructed by first slicing the 3D voxel volumes of the patients into 2D images of the middle of the patients. These images were then cropped to just contain the torso and head and then scaled to 256x256x3. A patient's CT scan slice was then paired with the matching PET scan image slice to form the final dataset. 

\section{MODEL DETAILS}
\label{appx:Hyperparameters}

\begin{table}[ht]
    \vspace{-0.7em}
    \centering
    \caption{Model summary. VP-SDE denotes the regular diffusion model with variance preservation configuration in \cref{tb:fg_selection}. 
    \cmark: Theo. guaranteed ~ \xmark: Not theo. guaranteed ~ --:~Not Applicable }
    \label{tab:SPDiff-model-summary}
    \resizebox{0.6\columnwidth}{!}{%
    \begin{tabular}{llccccccc}
        \toprule 
        Model & Arch. &  $\Gc_\Lc$-inv Smpl & Eqv Traj \\ \midrule
        VP-SDE & U-Net & \xmark & \xmark\\
        SPDM+WT & U-Net (WT) & \cmark & \cmark\\
        SPDM+FA &  U-Net  & \cmark & \cmark\\
        DDBM    & U-NET & \xmark & \xmark \\
        SPDM+FA (Bridge)    & U-NET & \cmark & \cmark \\
        SP-GAN  & CNN & \cmark & -- \\ 
        GE-GAN  & CNN & \xmark & -- \\
        Pix2Pix & U-NET \& CNN & \xmark & -- \\
        ${\rm I^2SB}$ & U-NET & \xmark & \xmark \\
        \bottomrule
    \end{tabular}
    }
    \label{tab:model-summary}
\end{table}

The implementation details and hyperparamters used while training the models presented in \cref{sec:emp_study} over the listed datasets are given below. \cref{tab:model-summary} provides a summary of all the models discussed within the paper and their theoretical invariance (equivariance) guarantees. 

We refer the reader to the reader to Appx.G. of \citet{BirrellKRLZ2022} for a more in-depth discussion of the implementation details and training parameters used to produce the results of SP-GAN reported in \cref{sec:emp_study}. We will only summarize the training parameters we changed from the defaults given in the forgoing reference.  

All Diffusion models (VP-SDE, SPDM+WT, SPDM+FA, SPDM+FA(Bridge)\,) are trained using the Adam optimizer \cite{KingmaB14} with learning rate $\eta = 0.0001,\, 0.0002$, and weight decay rate of $\gamma = 0.0$; separately, we make use of the exponential moving average (EMA) of the model weights \cref{eq:ema} with $\mu= 0.999,\, 0.9999,\, 0.9999432189950708$, which are the values commonly used when training this style of diffusion model.

\subsection{Rotated MNIST}
For the Rotated MNIST datasets, the diffusion models are configured using the following model parameters: dropout rate $\ds = 0.1$.

\begin{table}[ht]
\centering
\resizebox{0.95\columnwidth}{!}{%
\centering
    \begin{tabular}{@{}lllllccccc@{}}
    \toprule
    Model & Loss & Batch & Cond. & Aug & Attn. res. & Num. Ch. & Num. Heads & Ch. Scal. & Scale Shift \\ \midrule
    SPDM & $L_2$ & 32 & True & True & 8 & 128 & 16 & 1,2,2 & True \\
    SPDM+WT & $L_2$ & 32 & True & False & 8 & 128 & 16 & 1,2,2 & True \\
    SPDM+FA & $L_2+FA$ & 32 & True  & True & 8 & 128 & 16 & 1,2,2 & True \\
    \midrule
    \end{tabular}
}
\end{table}

The GAN based methods were trained using the scripts provided by \cite{DeyCG2021} and \cite{BirrellKRLZ2022} with the following settings:

\begin{table}[!ht]
\centering
\resizebox{0.7\columnwidth}{!}{%
\centering
    \begin{tabular}{@{}lllllccccc@{}}
    \toprule
    Model & Loss & Batch & Cond. & Aug & latent dim & gp weight & lr & alpha  \\ \midrule
    SP-GAN & $D_2^L$ & 64 & True & True & 64 & 10.0 & $1e{-4}$ & 2 \\
    GE-GAN & $RA$ & 64 & True & True & 64 & 10.0 & $1e{-4}$ & -- \\ \midrule
    \end{tabular}
}
\end{table}

\subsection{LYSTO}
For the LYSTO dataset, the diffusion models are configured using the following model parameters: dropout rate $\ds = 0.1$.

\begin{table}[!ht]
\centering
\resizebox{0.95\columnwidth}{!}{%
\centering
    \begin{tabular}{@{}lllllccccc@{}}
    \toprule
    Model & Loss & Batch & Cond. & Aug & Attn. res. & Num. Ch. & Num. Heads & Ch. Scal. & Scale Shift \\ \midrule
    VP-SDE & $L_2$ & 32 & True & True & 32,16,8 & 128 & 64 & 1,2,2,2 & True \\
    SPDM+WT & $L_2$ & 32 & True & False & 32,16,8 & 128 & 64 & 1,2,2,2 & True \\
    SPDM+FA & $L_2+FA$ & 32 & True & True & 32,16,8 & 128 & 64 & 1,2,2,2 & True \\
    \midrule
    \end{tabular}
}
\end{table}

The GAN based methods were trained using the scripts provided by \cite{DeyCG2021} and \cite{BirrellKRLZ2022} with the following settings:

\begin{table}[!ht]
\centering
\resizebox{0.7\columnwidth}{!}{%
\centering
    \begin{tabular}{@{}lllllccccc@{}}
    \toprule
    Model & Loss & Batch & Cond. & Aug & latent dim & gp weight & lr & alpha  \\ \midrule
    SP-GAN & $D_2^L$ & 32 & True & True & 128 & 10.0 & $1e{-4}$ & 2 \\
    GE-GAN & $RA$ & 32 & True & True & 128 & 10.0 & $1e{-4}$ & -- \\ \midrule
    \end{tabular}
}
\end{table}

\subsection{ANHIR}
For the ANHIR dataset, the diffusion models are configured using the following model parameters: dropout rate $\ds = 0.1$.

\begin{table}[!ht]
\centering
\resizebox{0.95\columnwidth}{!}{%
\centering
    \begin{tabular}{@{}lllllccccc@{}}
    \toprule
    Model & Loss & Batch & Cond. & Aug & Attn. res. & Num. Ch. & Num. Heads & Ch. Scal. & Scale Shift \\ \midrule
    VP-SDE & $L_2$ & 32 & True & True & 32,16,8 & 128 & 64 & 1,2,2,2 & True \\
    SPDM+WT & $L_2$ & 32 & True & False & 32,16,8 & 128 & 64 & 1,2,2,2 & True \\
    SPDM+FA & $L_2+FA$ & 32 & True & True & 32,16,8 & 128 & 64 & 1,2,2,2 & True \\ \midrule
    \end{tabular}
}
\end{table}

The GAN based methods were trained using the scripts provided by \cite{DeyCG2021} and \cite{BirrellKRLZ2022} with the following settings:

\begin{table}[!ht]
\centering
\resizebox{0.7\columnwidth}{!}{%
\centering
    \begin{tabular}{@{}lllllccccc@{}}
    \toprule
    Model & Loss & Batch & Cond. & Aug & latent dim & gp weight & lr & alpha  \\ \midrule
    SP-GAN & $D_2^L$ & 32 & True & True & 128 & 10.0 & $1e{-4}$ & 2 \\
    GE-GAN & $RA$ & 32 & True & True & 128 & 10.0 & $1e{-4}$ & -- \\ \midrule
    \end{tabular}
}
\end{table}

\subsection{LYSTO Denoising Task}
For the LYSTO denoising dataset, the diffusion models are configured using the following model parameters: 

\begin{table}[!ht]
\centering
\resizebox{0.95\columnwidth}{!}{%
\centering
    \begin{tabular}{@{}lllllccccc@{}}
    \toprule
    Model & Loss & Batch & Cond. & Aug & Attn. res. & Num. Ch. & Num. Heads & Ch. Scal. & Scale Shift \\ \midrule
    VP-SDE & $L_2$ & 32 & True & True & 32,16,8 & 128 & 64 & 1,2,2,2 & True \\
    SPDM+FA & $L_2+FA$ & 32 & True & True & 32,16,8 & 128 & 64 & 1,2,2,2 & True \\
    Pix2Pix  & GAN & 32 & True & True & -- & -- & -- & -- &--  \\
    ${\rm I^2SB}$ & $L_2$ & 64 & True & True & -- & -- & -- & -- & -- \\ \midrule
    \end{tabular}
}
\end{table}

The entries in the above table are left empty for both Pix2Pix and $I^2SB$ as their architectures differ from the other models. In particular, Pix2Pix being a GAN makes use of a U-NET for the generator and custom discriminator architecture. The model comes with several predefined U-NET architecture configurations that can be selected; however, it is lacking a configuration for 64x64 images. We defined a suitable configuration by modifying that provided for 128x128 by reducing the number of down-sampling layers from $7$ to $4$. All other settings are left as default. The $I^2SB$ model by default it make use of a preconfigured U-NET architecture which it downloads from \href{https://openaipublic.blob.core.windows.net/diffusion/jul-2021/256x256_diffusion_uncond.pt}{openai}. For this task we simply make use of the default configuration settings without any modification.

\subsection{CT-PET Style Transfer Task}
For the CT-PET dataset, all models except Pix2Pix are trained in latent space, where the original images are first encoded by a fine-tuned pretrained VAE from stable diffusion \citep{RombachBLEO2022}. FA was applied during the fine-tuning and inference to ensure equivariance. The VAE takes the 256x256x3 images and encodes them into a 32x32x4 latent space representation. 

Models are configured using the following model parameters: 

\begin{table}[ht]
\centering
\resizebox{0.95\columnwidth}{!}{%
\centering
    \begin{tabular}{@{}lllllccccc@{}}
    \toprule
    Model & Loss & Batch & Cond. & Aug & Attn. res. & Num. Ch. & Num. Heads & Ch. Scal. & Scale Shift \\ \midrule
    DDBM & $L_2$ & 32 & True & True & 32,16,8 & 128 & 64 & 1,2,2,2 & True \\
    SPDM+FA & $L_2+FA$ & 32 & True & True & 32,16,8 & 128 & 64 & 1,2,2,2 & True \\
    Pix2Pix  & GAN & 32 & True & True & -- & -- & -- & -- & -- \\
    $\rm I^2SB$ & $L_2$ & 64 & True & True & 32,16,8 & 128 & 64 & 1,2,2,2 & True\\ \midrule
    \end{tabular}
}
\end{table}

The entries in the above table are left empty for both Pix2Pix as the architecture differs from the other models. Pix2Pix makes use of a U-NET for the generator and custom discriminator architecture. The model comes with a predefined U-NET architecture for 256x256x3 images which we use. All other settings are left as default. As discussed above the $I^2SB$ model by default it make use of a preconfigured U-NET architecture. We can't make use of the default architecture for this task as it is incompatible with the latent space embedding produced by the VAE. Instead we configure the U-NET architecture in an identical fashion to SPBM.

\subsection{Computational Resources}
\label{appx:model_resources}
All the model results reported within \cref{sec:emp_study} were trained using NVIDIA A40 or L40S equivalent. Training times for each model are reported in \cref{tab:model_training_times}.

\begin{table}[h]
    \centering
    \renewcommand{\arraystretch}{1.1} 
    \ttabbox{
    \resizebox{0.85\columnwidth}{!}{%
    \begin{tabular}{@{\hspace{2em}}lcccc@{\hspace{2em}}lccc@{}}
        \toprule
        \multicolumn{4}{c}{\textbf{Training Times for LYSTO \& ANHIR}} 
        & \multicolumn{4}{c}{\textbf{Training Times for LYSTO \& CT-PET}} \\
        \cmidrule(lr){1-4} \cmidrule(lr){5-8}
        \text{Model} & \text{GPUs} & \text{LYSTO} & \text{ANHIR} 
        & \text{Model} & \text{GPUs} & \text{LYSTO} & \text{CT-PET} \\
        \cmidrule(lr){1-4} \cmidrule(lr){5-8}
        VP-SDE   & 2 & 5 days   & 5 days    & DDBM      & 4 & 2 days   & 2 days \\
        SPDM+WT  & 2 & 2 weeks  & 2 weeks   & SPDM+FA   & 4 & 2 days   & 2 days \\
        SPDM+FA  & 2 & 5 days   & 5 days    & Pix2Pix   & 1 & 2 hours  & 1 day  \\
        SP-GAN   & 1 & 2 days   & 2 days    & I\textsuperscript{2}SB & 2 & 3 days   & 3 days \\
        GE-GAN   & 1 & 2 days   & 2 days    &           &   &          &        \\
        \bottomrule
    \end{tabular}
    }
    }{
        \caption{Model training times for experiments discussed in \cref{sec:emp_study}.}
        \label{tab:model_training_times}
    }
\end{table}

\section{FID COMPUTATION}
\label{appx:fid_computation}

In \cref{sec:results} we report the Fr\'echet intercept distance (FID) \citep{HeuselRHTH2017} score of the various models on the datasets described in \cref{sec:emp_study} and \cref{appx:datasets}, respectively under $C_4$ and $D_4$ groups. In order to make the FID score robust to changes in image orientation, meaning the features the underlying {\rm InceptionV3} model extracts from the reference dataset can be compared to those extracted from the generated samples, we average the reference statistics over all actions within the group considered. In particular, suppose $\mathcal{D}_{ref}$ is a reference dataset (e.g., all the images within the rotated MNIST dataset) and $\mathcal{D}_{s}$ is a collection of samples generated from the model being evaluated with respect to group $\cal G$. Let ${\rm T}(\cdot)$ denote the operation that returns the mean and covariance statistics of the features extracted from a dataset; i.e., $T(\mathcal{D_s})=(\mu_s, \Sigma_s)$. Moreover, recall that FID is computed using the expression:
\begin{equation}
    {\rm FID} = d^2({\rm T}(\mathcal{D}_{ref},{\rm T}(\mathcal{D}_s)) = \|\mu_{ref}-\mu_s\|^2_2 + {\rm Tr}(\Sigma_{ref}+\Sigma_s-2(\Sigma_{ref}\Sigma_s)^{1/2}).
\end{equation}
Then the calculation of the FID with respect to the group $\mathcal{G}$ is done by first computing 
\begin{equation}
    {\rm T}_\mathcal{G}(\mathcal{D})=\frac{1}{|\mathcal{D}|}\sum_{\hs \in \mathcal{G}}T(A_\hs \mathcal{D}) = (\hat\mu, \hat\Sigma),
\end{equation}
where $A_\hs \mathcal{D} = \{A_\hs \xs \mid \xs\in \mathcal{D}\}$. Then we compute the final FID score as 
\begin{equation}
    {\rm FID}_\mathcal{G} = d^2({\rm T}_\mathcal{G}(\mathcal{D}_{ref}), {\rm T}(\mathcal{D}_s)).
\end{equation}
This formulation ensures that the reference statistics used in computing the FID score of a model conditioned to be equivalent are not biased. All FID values reported in \cref{tab:fid-rot-mnist} and \cref{tab:bridge_models}, potentially excluding those reported by other authors, were calculated in this fashion.

\section{LIMITATIONS}
\label{appx:limitations}

Here we provide a summary discussion of some limitations of the proposed method. Note that although our theory can handle arbitrary groups of linear isometries, our implemented methods are restricted to groups with finitely many elements, as the techniques +WT and +FA proposed cannot be immediately generalized to infinite groups without necessary approximation. With that said, we would like to note that when a specific group is chosen it may be feasible to design specialized models to achieve perfect equivariance even if the group contains infinite elements. For example, when not considering the structure of the drift’s attribute, existing work typically employs GNN to achieve SO(3) or SE(3) equivariance. We believe this approach can also be adapted to fit our more general setting. As our work concentrates on general groups, we have decided to reserve the study of specific groups for future research.
\newpage
\section{SAMPLE IMAGES}
\label{appx:sample-images}
Here, we include a collection of generated image samples from the models discussed within the paper across the various datasets in Section~\ref{sec:emp_study}.

\begin{figure}[!ht]
\centering
  \begin{subfigure}[t]{0.45\columnwidth}
    \centering
    \includegraphics[width=\columnwidth]{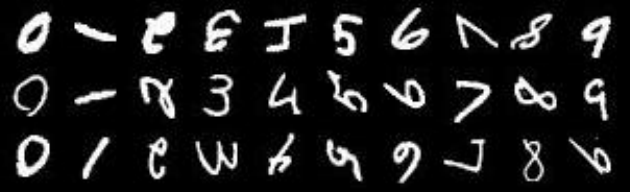}
    \caption{Reference $C_4$ rotated MNIST images.}
  \end{subfigure}
  ~
  \begin{subfigure}[t]{0.45\columnwidth}
    \centering
    \includegraphics[width=\columnwidth]{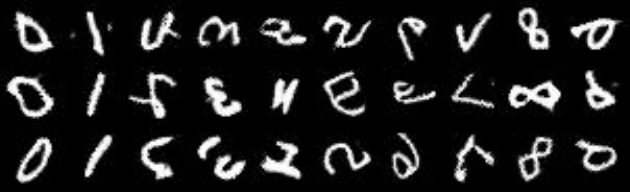}
    \caption{Images generated from SP-GAN.}
  \end{subfigure}
  \\
  \begin{subfigure}[t]{0.45\columnwidth}
    \centering
    \includegraphics[width=\columnwidth]{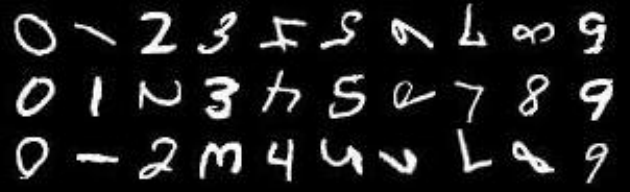}
    \caption{Images generated from VP-SDE.}
  \end{subfigure}
  ~
  \begin{subfigure}[t]{0.45\columnwidth}
    \centering
    \includegraphics[width=\columnwidth]{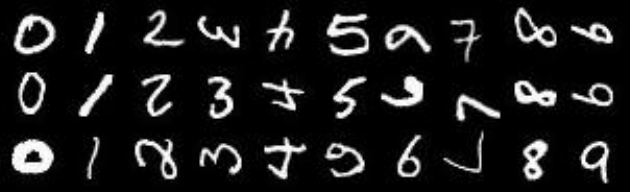}
    \caption{Images generated from SPDM+WT}
  \end{subfigure}
  \\
  \begin{subfigure}[t]{0.45\columnwidth}
    \centering
    \includegraphics[width=\columnwidth]{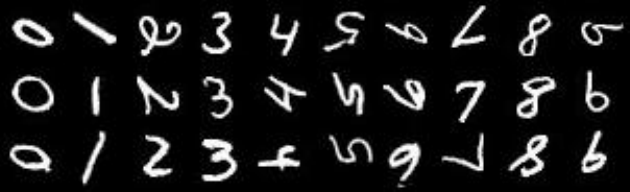}
    \caption{Images generated from SPDM+FA}
  \end{subfigure}
  ~
  \caption{Sample comparison between models trained on the Rotated MNIST 28x28x1 dataset as described in \cref{sec:emp_study}.}
  \label{fig:appx-lysto-images}
\end{figure}

\begin{figure}[!ht]
\centering
    \begin{subfigure}[t]{0.7\columnwidth}
        \centering
        \includegraphics[width=\columnwidth]{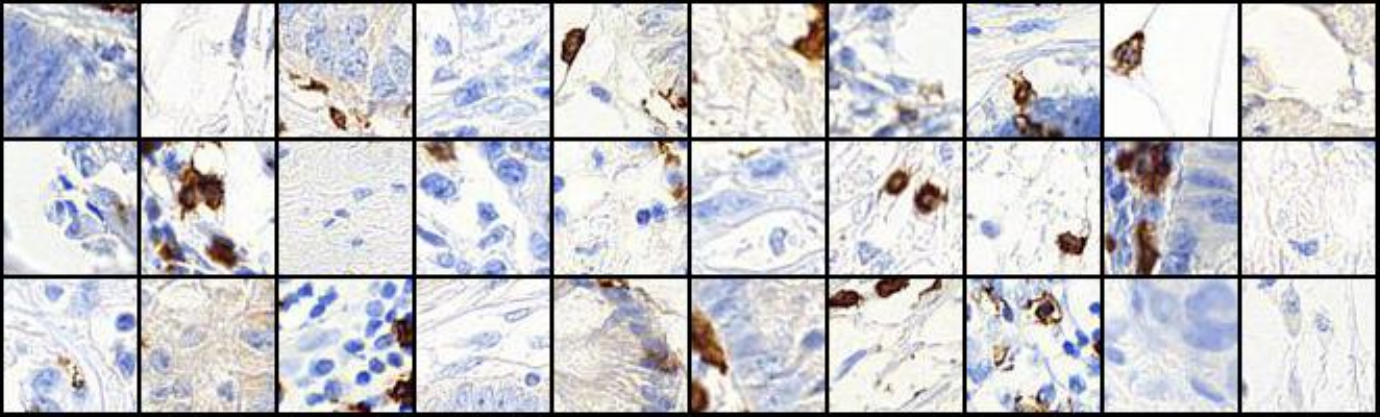}
        \caption{Reference images.}
    \end{subfigure}

    \begin{subfigure}[t]{0.7\columnwidth}
        \centering
        \includegraphics[width=\columnwidth]{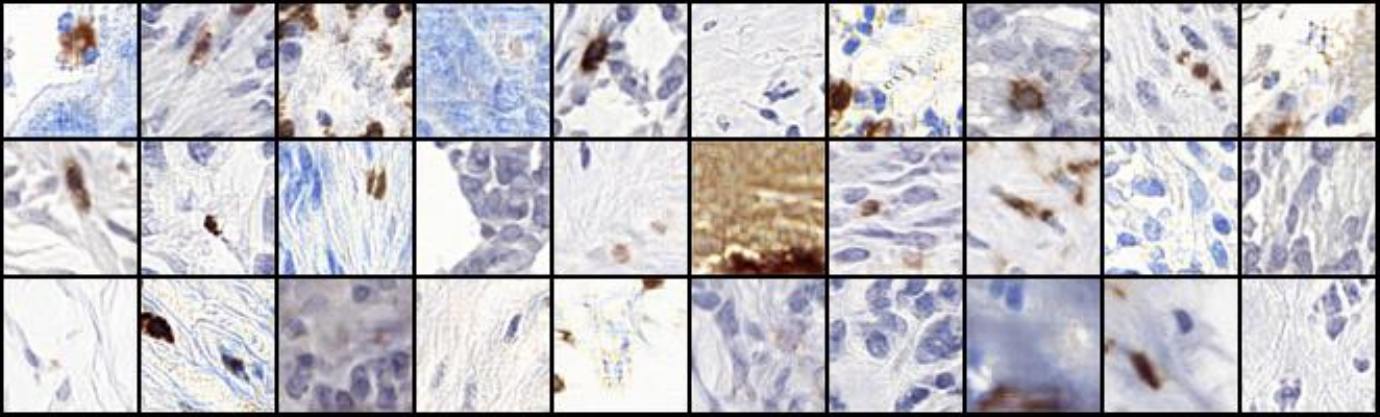}
        \caption{Images generated from SP-GAN.}
    \end{subfigure}

    \begin{subfigure}[t]{0.7\columnwidth}
        \centering
        \includegraphics[width=\columnwidth]{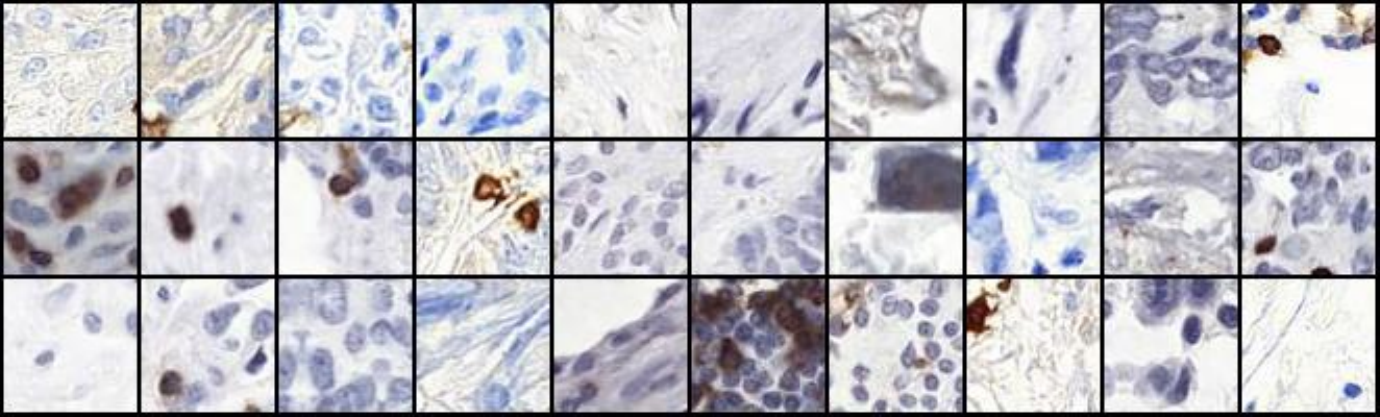}
        \caption{Images generated from VP-SDE}
    \end{subfigure}

    \begin{subfigure}[t]{0.7\columnwidth}
        \centering
        \includegraphics[width=\columnwidth]{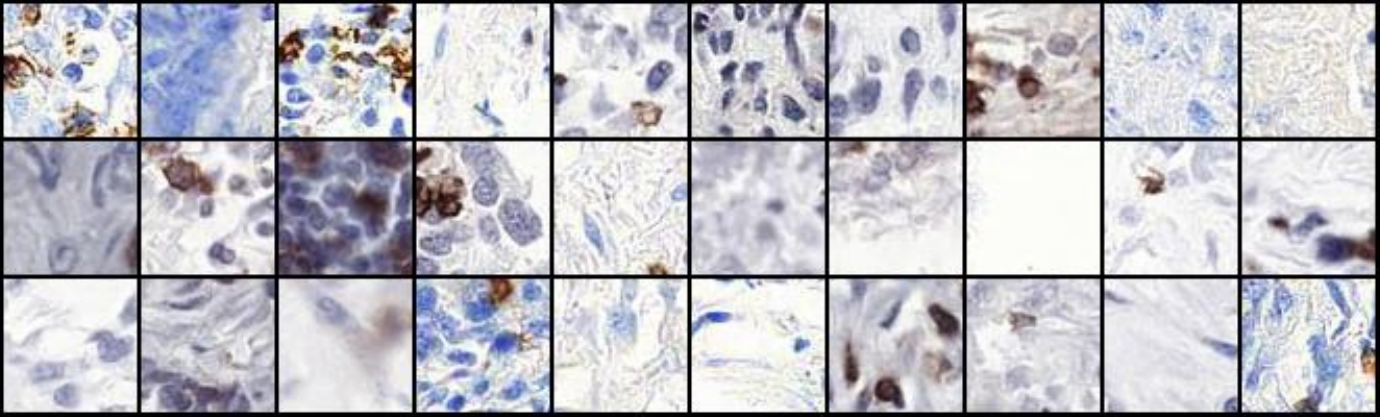}
        \caption{Images generated from SPDM+WT.}
    \end{subfigure}
    
    \begin{subfigure}[t]{0.7\columnwidth}
        \centering
        \includegraphics[width=\columnwidth]{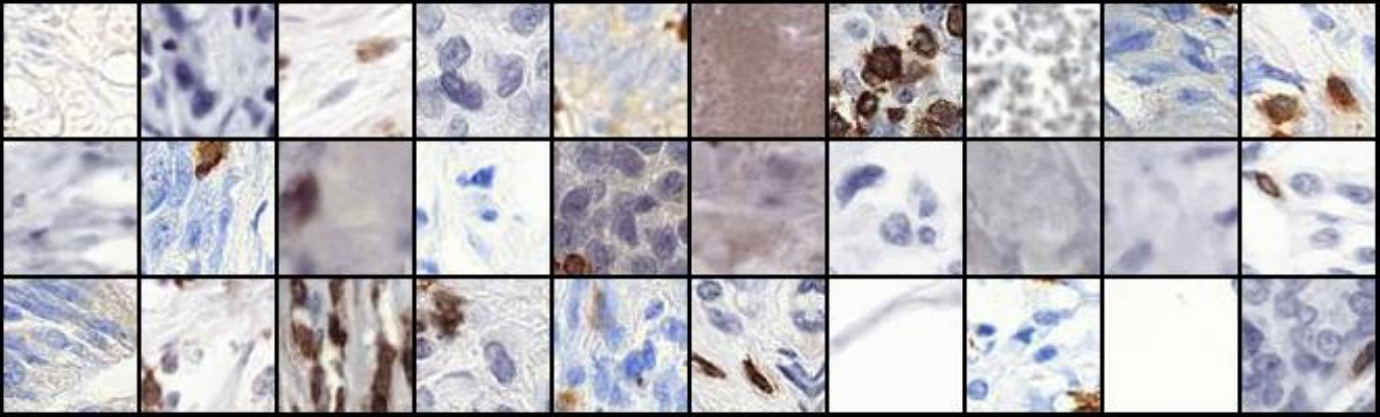}
        \caption{Images generated from SPDM+FA}
    \end{subfigure}

    \caption{Sample comparison between models trained on the LYSTO 64x64x3 dataset from \cref{sec:emp_study}.}
\end{figure}

    
    

\begin{figure}[!ht]
\centering
    \begin{subfigure}[t]{0.7\columnwidth}
        \centering
        \includegraphics[width=\columnwidth]{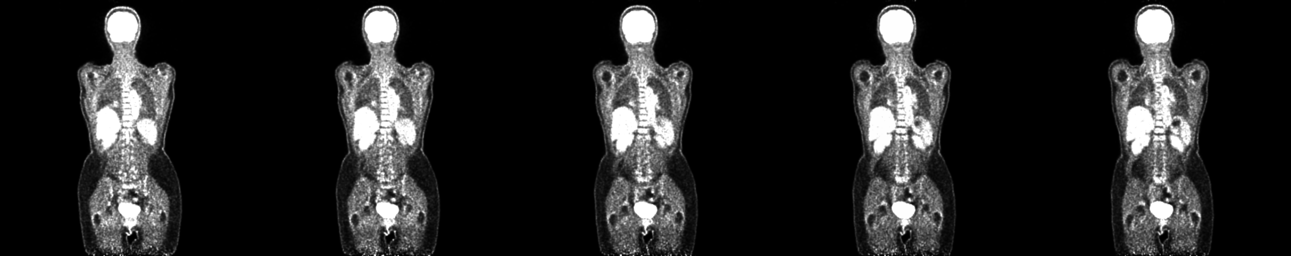}
    \end{subfigure}
    \begin{subfigure}[t]{0.7\columnwidth}
        \centering
        \includegraphics[width=\columnwidth]{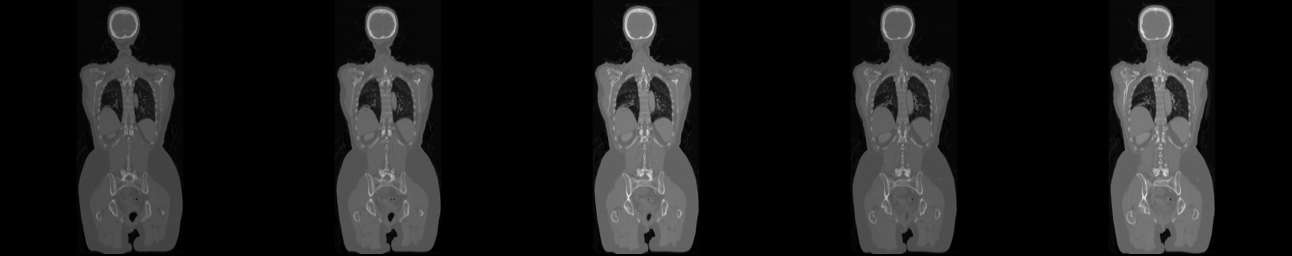}
        \caption{Reference images of CT and PET images.}
    \end{subfigure}

    \begin{subfigure}[t]{0.7\columnwidth}
        \centering
        \includegraphics[width=\columnwidth]{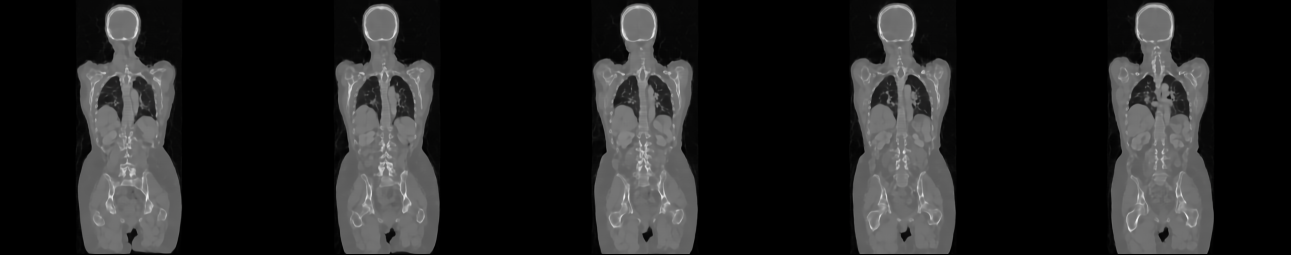}
        \caption{Images generated from DDBM.}
    \end{subfigure}

    \begin{subfigure}[t]{0.7\columnwidth}
        \centering
        \includegraphics[width=\columnwidth]{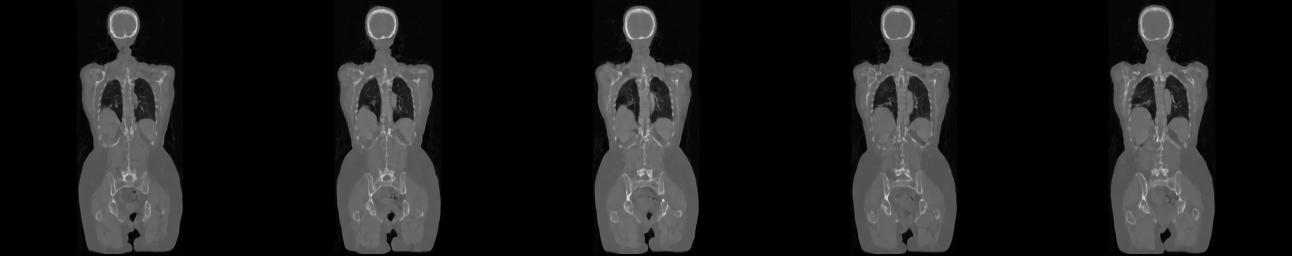}
        \caption{Images generated from SPDM+FA}
    \end{subfigure}

    \begin{subfigure}[t]{0.7\columnwidth}
        \centering
        \includegraphics[width=\columnwidth]{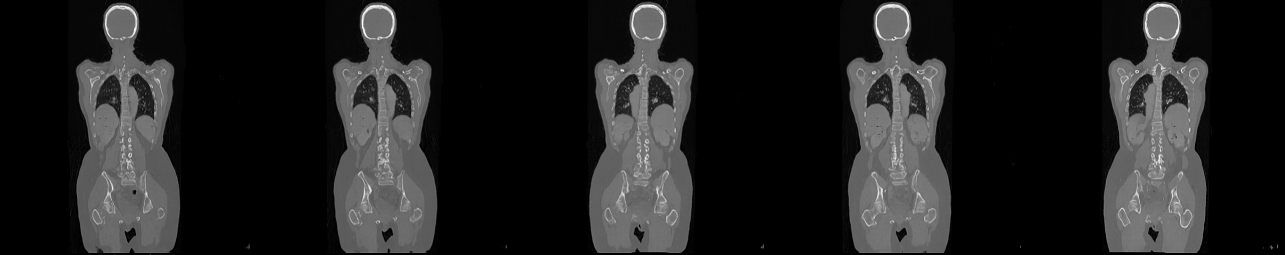}
        \caption{Images generated from Pix2Pix.}
    \end{subfigure}
    
    \begin{subfigure}[t]{0.7\columnwidth}
        \centering
        \includegraphics[width=\columnwidth]{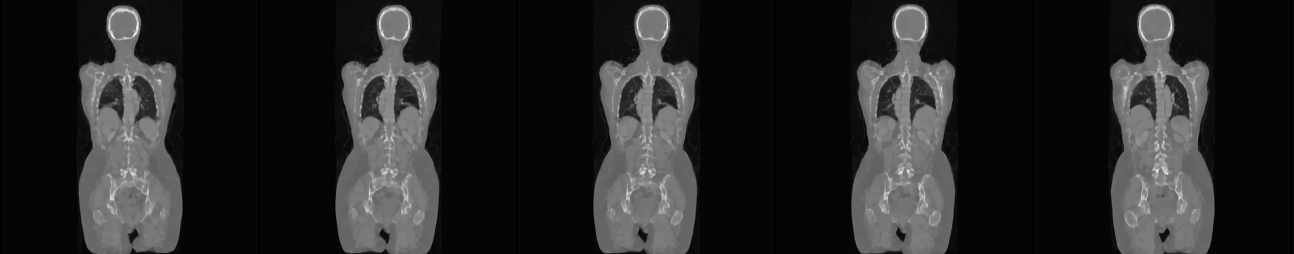}
        \caption{Images generated from ${\rm I^2SB}$}
    \end{subfigure}

    \caption{Sample comparison between models trained on the CT-PET 256x256x3 dataset from \cref{sec:emp_study} and \cref{appx:datasets}.}
\end{figure}
\FloatBarrier

\end{document}